\title{\LARGE Towards Theoretical Understanding of Transformer Test-Time Computing: Investigation on In-Context Linear Regression}
\author{
Xingwu Chen\thanks{Equal contribution.} \thanks{School of Computing \& Data Science, The University of Hong Kong. Email: \email{xingwu}{connect.hku.hk}}
\qquad Miao Lu\footnotemark[1] \thanks{Department of Management Science and Engineering, Stanford University. Email: \email{miaolu}{stanford.edu}}
\qquad Beining Wu \thanks{Department of Statistics, University of Chicago. Email: \email{beiningw}{uchicago.edu}}
\qquad Difan Zou \thanks{School of Computing \& Data Science, Institute of Data Science, The University of Hong Kong. Email: \email{dzou}{cs.hku.hk} }
}
\date{\small{\today}}
\begin{document}


\maketitle

\begin{abstract}
    Using more test-time computation during language model inference, such as generating more intermediate thoughts or sampling multiple candidate answers, has proven effective in significantly improving model performance. This paper takes an initial step toward bridging the gap between practical language model inference and theoretical transformer analysis by incorporating randomness and sampling. We focus on in-context linear regression with continuous/binary coefficients, where our framework simulates language model decoding through noise injection and binary coefficient sampling. Through this framework, we provide detailed analyses of widely adopted inference techniques. Supported by empirical results, our theoretical framework and analysis demonstrate the potential for offering new insights into understanding inference behaviors in real-world language models.
\end{abstract}

\tableofcontents

\newpage








\section{Introduction}

Transformer-based \citep{vaswani2017attention} large language models (LLMs) have demonstrated impressive general-purpose capabilities, representing state-of-the-art architectures in natural language processing \citep{dubey2024llama,guo2025deepseek,achiam2023gpt} and increasingly in other domains such as computer vision \citep{peebles2023scalable,agarwal2025cosmos}.
While scaling laws for LLM training \cite{kaplan2020scaling} have described their performance with respect to the train-time compute (i.e. model size, data size, and training time, e.g.), leveraging additional test-time computation of the pretrained LLMs, such as extend reasoning length by generating additional intermediate thoughts \citep{wei2022chain,guo2025deepseek,o1}  or sampling multiple candidate answers and aggregating to obtain the best one \citep{cobbe2021training, wangSelfConsistencyImprovesChain2023}, has recently demonstrated great potential for further enhancing their reasoning capabilities. However, despite the success of scaling up test-time computing for LLMs, the theoretical understanding of transformer models, even for the relatively simpler linear cases, for such successes remains quite limited.


Due to the success of LLMs itself, a huge body of recent theory works has emerged, aiming at understanding the hidden mechanisms of transformers from other angles. These works have been focused on seeking to explain the model's capabilities in memorization \citep{mahdaviMemorizationCapacityMultiHead2023,kim2023provable}, in-context learning (ICL) \citep{vonoswaldTransformersLearnIncontext2023,zhang2023trained,huang2025transformers}, function approximation power \citep{takakuraApproximationEstimationAbility2023,malachAutoRegressiveNextTokenPredictors2023}, algorithm simulation  \citep{chen2024can,fu2023transformers,liu2024learn}, and the training dynamics \citep{yangInContextLearningRepresentations2024,zhang2023trained,chen2024training} for  transformers initialized from scratch, to name a few. 
Most of these works consider simplified settings with linear attention \citep{vonoswaldTransformersLearnIncontext2023} and focus on how transformers can \textit{directly} leverage their output activations to solve specific tasks like in-context linear regression \citep{gargWhatCanTransformers2023},  ignoring the sampling and tokenization procedure for LM decoding, creating substantial gaps between theoretical analysis and practical LLM applications.


One of the main gap between prior theoretical works and LLM used in practice is that, prior theoretical works typically focus on transformers with deterministic decoding procedures, where the model output is fixed for a given prompt. 
In practice, many  inference techniques for scaling up test-time computing, such as majority voting \citep{wangSelfConsistencyImprovesChain2023}, best-of-N sampling (BoN) \citep{cobbe2021training}, and tree of thoughts (ToT) \citep{yao2024tree}, rely on probabilistic sampling procedures in real-world LLMs: given a prompt, the model predicts subsequent tokens by first computing a distribution over potential candidates and then sampling from it. This gap between the theoretical setups and the real-world LLM behavior hinders us towards understanding and analyzing of the success of transformer test-time computation.

\textbf{Our contributions.} In this work, we aim to bridge the gap between practical language model (probabilistic) inference and theoretical transformer analysis, providing initial theoretical insights into transformer test-time computation. Specifically, we examine the in-context linear regression task with continuous/binary coefficients,
simulate LLMs' sampling decoding procedure by injecting random noise (continuous case) or conducting discrete sampling (binary case) based on the model's original output, using the processed tokens for subsequent sampling decoding steps. We then conduct analysis towards test-time computation of transformers based on our theoretical framework. The main contributions of this paper are highlighted as follows:


\begin{itemize}[leftmargin=*]

\item We take an initial step toward bridging the gap between practical language model inference and theoretical transformer analysis by incorporating randomness and sampling. Our framework simulates language model decoding through noise injection and binary coefficient sampling, exhibiting trends similar to real-world LLMs' inference, as demonstrated in Fig \ref{fig:gsm8k_sparse_comp}.

\item Through our framework, we conduct detailed analysis of how test-time computation plays a role in our reasoning framework, including reasoning steps and sampling number, which can be applied to widely adopted inference techniques such as majority voting, ensembling, and chain-of-thought prompting. 

\item We validate our theoretical analysis through extensive experiments. Furthermore, we attempt to predict real-world LLM performance using our theoretical framework. The results demonstrate the potential of applying our theoretical framework for practical LLM behavior analysis.


\end{itemize}
\begin{figure}
\vskip -0.0in
\begin{center}
\centerline{\includegraphics[width=\columnwidth]{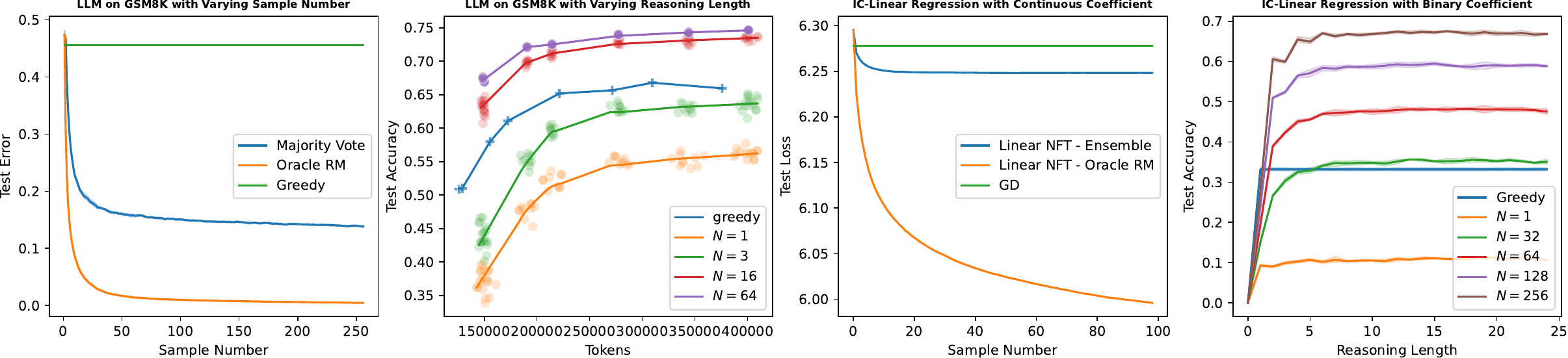}}
\vskip -0.1in
\caption{Comparison between real-world LLM's inference (above) and our designed sampling framework (below) for different sample numbers $N$ and reasoning lengths. Our framework simulates language model decoding through noise injection and binary coefficient sampling, exhibiting trends similar to real-world LLMs' inference, Details can be found in \autoref{appdix:exp_details} }
\label{fig:gsm8k_sparse_comp}
\end{center}
\vskip -0.4in
\end{figure}

\section{Related Works}\label{sec:_related_work_continued}

\textbf{Scaling test-time computing in LLMs.}
Scaling test-time computing has demonstrated tremendous empirical success in LLMs, especially for reasoning tasks \citep{o1}.
Recent research on increasing test-time computing in LLMs primarily focuses on the following two aspects \citep{snell2024scaling}: (i) generating longer reasoning paths, including chain-of-thought (CoT) prompting that elicits intermediate reasoning steps \citep{wei2022chain,kojima2022large} and self-refinement methods that iterate on previously generated content \citep{madaan2303self, saunders2022self,kumar2024training}; and (ii) generating multiple potential reasoning paths and selecting the optimal one through the methods such as consistency-based selection \citep{wangSelfConsistencyImprovesChain2023}, reward-guided choosing \citep{stiennon2020learning, liu2020learning, cobbe2021training,dong2023raft}, reasoning tree search \citep{yao2024tree,zhou2023language}, etc.
Empirical studies demonstrate that increased test-time computation consistently improves model performance \cite{snellScalingLLMTestTime2024,yue2024inference,o1}, suggesting the existence of inference scaling laws \citep{wuInferenceScalingLaws2024}.
Nevertheless, the theoretical analysis of inference-time computing and its scaling law remains quite open.

\textbf{Theory for transformer test-time computing.}
Inspired by the empirical success of the inference-time computing techniques of LLMs, recently there have been a few works trying to demystify the mechanism behind it through analysis on theoretical tasks and simple transformer models.
Both \cite{wen2024sparse, kim2024transformers} consider how to train a one-layer transformer that utilizes CoT reasoning to efficiently solve the $k$-parity learning task, which provably improves over the same one without using CoT reasoning. 
\cite{hu2024unveiling} studies the statistical properties of CoT prompting and its variants including majority vote and tree-of-thought (ToT). 
However, their analysis is model agnostic and does not consider concrete transformer models compared with our work. 
The mostly related to our paper is the work of \cite{huang2025transformers} who considers a one-layer transformer to solve in-context linear regression task with continuous coefficient.
They show that the transformer can be well trained to perform vanilla multi-step GD with CoT. 
However, the fundamental difference between the study of \cite{huang2025transformers, wen2024sparse, kim2024transformers} and ours is that we propose to include randomness in the inference stage of the transformer models, which then allows us to go further and study more sophisticated test-time computing methods that involve randomly sampling multiple reasoning or CoT paths.


\textbf{Theory for in-context learning by transformers.}
In-context learning (ICL) \citep{brown2020language} is a key capability of LLMs which means that the model is able to answer a new query provided with a few query-answer demonstrations of the similar tasks without updating the model parameters.
The empirical success of ICL methods has sparked a long line of theoretical research for the ICL ability of transformers. 
Most of these theoretical research builds on the in-context learning framework of  \cite{gargWhatCanTransformers2023}, where input-output pairs are formalized as $\{(\xb_i,f(\xb_i))\}_{i = 1}^{n}$, and the model (typically, transformers) is required to learn the unknown function $f(\cdot)$ from the context without updating the parameters. This framework enables theoretical analysis of transformers across multiple dimensions: expressive power \citep{bai2024transformers,guoHowTransformersLearn2023}, mechanistic understanding \citep{giannouHowWellCan2024, vonoswaldTransformersLearnIncontext2023, ahn2023transformers,li2025robustness}, and training dynamics \citep{zhang2023trained, huangContextConvergenceTransformers2023, chen2024training,wu2023many, zhang2025transformer}. While most existing research treats transformer decoding as a deterministic process, theoretical understanding of test-time computation for transformer ICL remains in its infancy.
\subsection{Preliminaries and More Backgrounds}
This section outlines the problem setups. We first detail transformers' inference mechanism, emphasizing \emph{sampling-based} techniques for enhancing test-time computation. We then introduce in-context linear regression, the theoretical task central to our study.
\subsubsection{Transformer and Sampling-based Test-time Computing}\label{subsec:_transformer_test_time_compute}
A transformer \citep{vaswani2017attention} is an auto-regressive sequence-to-sequence model that predicts the next token's distribution, i.e., $p(x_{t+1}|x_t,\cdots,x_1)$. 
It maps the representation of the last token  $x_{t+1}$ to a softmax distribution over the vocabulary space $\cV$ to determine the probability of $x_{t+1}$.

The above inference mechanism can be abstracted in the following way. 
Given the current input sequence embedding $\Hb_t = (\hb_1,\cdots,\hb_t)\in\mathbb{R}^{d_{\mathrm{e}}\times t}$, one \emph{iteratively} performs the following two steps: 
\begin{itemize}[leftmargin=*,nosep]
    \item Compute and extract the hidden state for the last position $t$, i.e., $\widetilde{\hb}_t = \mathtt{TF}_{\theta}(\Hb_t)$, where $\mathtt{TF}_{\theta}$ denotes the stacked transformer blocks in the whole architecture.
    \item Sample the next token $x_{t+1}$ (and thus the embedding of the next token $\hb_{t+1}$) based on a probability distribution returned by a sampling algorithm inputted with $\widetilde{\hb}_t$, i.e., $\hb_{t+1} \leftarrow \mathtt{Sampling\_Alg}(\widetilde{\hb}_t)$.
\end{itemize}

\textbf{Sampling-based test-time computation.}
As previously introduced, the probabilistic nature of the computation procedure can introduce randomness into the inference process, which is key to an array of techniques for scaling up test-time computing in order to boost the performance of large language models for various tasks, including Best-of-N sampling (BoN) \citep{stiennon2020learning, nakano2021webgpt, dong2023raft}, majority vote \citep{wang2022self}, etc.
Notably, these methods typically sample $N$ independent reasoning trajectories through the above decoding mechanism and choose the one with the highest value of a given reward model or the most consistent one across all candidates.




\subsubsection{Theoretical Task: In-context Linear Regression.}\label{subsec:_theoretical_task}


We explore how sampling-based test-time computing can enhance transformer performance by focusing on \emph{in-context linear regression}, a common problem setup \citep{akyurek2022learning, von2023transformers, zhang2023trained, chen2024transformers}. In-context learning (ICL \cite{brown2020language}) involves auto-regressive models inferring answers from few task demonstrations. we consider the following general setup: first drawing the ground truth parameter from the prior $\wb^{\ast}\sim p_{\wb}(\cdot)$, then
\begin{align}
    (\xb_i, y_i)\sim \mathbb{D}_{\wb^{\ast}}, y_i = \xb_i^\top \wb^{\ast} + \epsilon_i,\  \forall i\in[n] ,\label{eq:_in_context_linear_regression}
\end{align}
where $p_{\wb}$ denotes the prior distribution of the regression tasks, $\epsilon_i$ is the i.i.d. random noise, and $n\in\NN$ is the size of the in-context dataset.
The goal of in-context linear regression is to use transformers to make predictions regarding the true label $\xb_{\mathrm{query}}^\top \wb^{\ast}$ associated with another covariate $\xb_{\mathrm{query}}\sim \cN(0,\Ib_d)$ when prompted with the in-context dataset $(\xb_1,y_1,\cdots,\xb_n,y_n)$ concatenated with the query $\xb_{\mathrm{query}}$.
Towards such a goal, this work aims to establish a theoretical framework that allows one to principally investigate how sampling-based techniques for scaling up test-time computing could benefit the predictions, thus boosting the performance of solving the task.

\section{Scaling Test-time Computation for In-Context Regression}\label{sec:_test_time_computing_framework}

In this section, we introduce our theoretical framework for studying sampling-based test-time computing of transformers (Section~\ref{subsec:_transformer_test_time_compute}) through in-context linear regression (Section~\ref{subsec:_theoretical_task}).
We present our framework in Section~\ref{subsec:_theoretical_framework}. 
After that, we study two instances of the in-context linear regression task \eqref{eq:_in_context_linear_regression}, depending on the types of the task prior $p_{\wb}$, to design concrete sampling algorithms for inference.  


\subsection{A Theoretical Framework}\label{subsec:_theoretical_framework}

We begin by noticing that most of the existing prior works on in-context linear regression by transformers are \emph{incapable} for studying sampling-based test-time computing due to the lack of (i) randomness of the output of the transformer architecture they study; (ii)  chain-of-thought (CoT) style multi-step reasoning in the outputs. 
To handle the challenge, we explicitly construct an inference mechanism that involves both randomness and auto-regressive CoT reasoning to solve in-context linear regression tasks.
Specifically, motivated by the recent work of \cite{huang2025transformers}, we consider the specific goal of \emph{in-context coefficient prediction}, where the final output of the transformer reasoning path is a prediction $\widehat{\wb}$ of the task coefficient $\wb^{\ast}$.
The transformer inference mechanism is designed to output stochastic  reasoning paths, and different sampling-based test-time computing techniques correspond to how to aggregate different reasoning paths.

\textbf{Inputs and transformer architecture.}
Given the in-context dataset $(\xb_1,y_1,\cdots,\xb_n,y_n)$, the prompt to the transformer (defined later) is the following matrix in $\RR^{d_e\times (n+1)}$, 
\begin{align}
    \!\!\!\!\!\!\!\!\Hb_0 = \left(\begin{matrix}
        \xb_1&\cdots & \xb_n &\boldsymbol{0} \\
        y_1&\cdots&y_n& 0 \\
        \boldsymbol{0}&\cdots&\boldsymbol{0}&\wb_0\\
        0&\cdots&0&1
    \end{matrix}\right) := \left(\begin{matrix}
        \Xb^\top & \boldsymbol{0}\\
        \yb^\top & 0 \\
        \boldsymbol{0} & \wb_0 \\
        \boldsymbol{0} & 1
    \end{matrix}\right),\label{eq:_input}
\end{align}
where the dimension of the embedding $d_e = 2d+2$. 
We denote $\Xb^\top = (\xb_1,\cdots,\xb_n)\in\RR^{d\times n}$ as the collection of covariates, and denote $\yb^\top = (y_1,\cdots,y_n)\in\RR^{1\times n}$  as the collection of labels.
We input an initial guess of the coefficient, denoted by $\wb_0$, and we $\wb_0=\boldsymbol{0}$ without loss of generality.
Note that such a prompt embedding format which separates the space of data and the space of weight predictions follows the convention of \cite{bai2024transformers, huang2025transformers} in order to facilitate theoretical analysis.

The model we consider is a one-layer self-attention module equipped with residual connection \citep{von2023transformers, zhang2023trained, ahn2023linear, huang2025transformers}:
\begin{align}
    \!\!\!\!\!\!\!\!\!\!\mathtt{TF}_{\theta}(\Hb):=\Hb + \Vb\Hb\cdot\frac{\Hb^\top \Wb \Hb}{n}:\RR^{d_e\times \ast}\mapsto\RR^{d_e\times \ast}.\label{eq:_transformer}
\end{align}
where $\theta = \{\Vb,\Wb\}$ denotes the parameters. 
Here $\Vb\in\RR^{d_e\times d_e}$ represents the consolidation of the projection and value matrices in a standard transformer block, and $\Wb\in\RR^{d_e\times d_e}$ denotes the consolidation of the key and query matrices. 

\textbf{Sampling-based auto-regressive inference mechanism.}
With the model \eqref{eq:_transformer} and the prompt \eqref{eq:_input}, we consider the following mechanism of inference that mimics a real LLM.
\begin{definition}[Inference mechanism]\label{def:_inference_mechanism}
Given a prompt embedding matrix $\Hb_0$, for each $\ell\in\NN$, we iteratively sample the embeddings for the next token as following:
\begin{itemize}[leftmargin=*]
    \item Compute $\widetilde{\Hb}_{\ell} = \mathtt{TF}_{\theta}(\Hb_{\ell})$ with $\mathtt{TF}_{\theta}(\Hb_{\ell})$ defined in \eqref{eq:_transformer};
    \item Extract $\widetilde{\hb}_{\ell}$ from $\widetilde{\Hb}_{\ell}$ last column, i.e., $\widetilde{\hb}_{\ell} = (\widetilde{\Hb}_{\ell})_{:,-1}$;
    \item Sample the embedding vector for the next token via $\mathtt{Sampling\_Alg}$, i.e., $\hb_{\ell+1} \leftarrow \mathtt{Sampling\_Alg}(\widetilde{\hb}_{\ell})$; 
    \item Concatenate to obtain the embedding matrix for the new sequence of length $\ell+1$, i.e., $\Hb_{\ell+1} = (\Hb_{\ell}, \hb_{\ell+1})$.
\end{itemize}
\end{definition}
Here $\mathtt{Sampling\_Alg}(\cdot)$ is to be determined that assigns the distribution of the next token (embedding) conditioning on the last token's embedding output by the transformer.
Note that the output of the above mechanism is a joint result of the transformer model and the sampling algorithm.

Towards the goal of in-context weight prediction for \eqref{eq:_in_context_linear_regression}, we introduce the following proposition, which shows that the transformer architecture together with a proper sampling algorithm  can implement variants of \emph{noisy gradient descent}.

\begin{proposition}[Definition~\ref{def:_inference_mechanism} can implement noisy GD]\label{prop:_construction}
    There exists a transformer instance of \eqref{eq:_transformer} denoted by $\mathtt{TF}_{\theta_{\mathtt{GD}}}$ and a type of sampling algorithm $\mathtt{Sampling\_Alg}$ such that given prompt $\Hb_0$ defined in  \eqref{eq:_input}, the output embedding after $t$ iterative generations $\Hb_t$ according to Definition~\ref{def:_inference_mechanism} satisfies $(\Hb_t)_{:,n+\ell} = (\boldsymbol{0}^\top,0,\wb_{\ell}^\top, 1)^\top$ with
    \begin{align}
        \wb_{\ell} \sim p\left(\cdot\middle|\wb_{\ell-1} \!-\! \frac{\eta}{n}\cdot \Xb^\top\big(\Xb \wb_{\ell-1}\!-\!  \yb\big)\right),\!\! \forall 1\leq \ell\leq t,
    \end{align}
    where the conditional distribution $p(\cdot|\cdot )$ is specified by the sampling algorithm $\mathtt{Sampling\_Alg}$.
\end{proposition}

This proposition is mainly motivated by the recent work of \cite{huang2025transformers}.
Please refer to Appendix~\ref{subsec:_proof_prop_construction} for a detailed proof of Proposition~\ref{prop:_construction}.
Proposition~\ref{prop:_construction} shows that the above inference mechanism is able to explicitly implement gradient-based iterative algorithms to predict the regression coefficient $\wb^{\ast}$.
We define the prediction of the regression coefficient after $t$ reasoning steps of one reasoning path as $\wb_t:=(\Hb_t)_{d+2:2d+1, n+t}$.
One special case of Proposition~\ref{prop:_construction} is a transformer that explicitly performs standard multi-step GD \citep{huang2025transformers}, i.e., $p(\cdot|x) = \delta_x(\cdot)$.
Please see Appendix~\ref{subsec:_special_case} for the details.

Now to theoretically understand the effectiveness of more sophisticated sampling-based test-time computing techniques, e.g., Best-of-N and majority vote, we go beyond \eqref{eq:_w_gd} and consider sampling algorithms that does introduce randomness into the reasoning path.
We formalize these test-time computing methods we study in this paper as following.


\begin{definition}[Sampling-based test-time computing techniques]\label{def:_test_time_computing}
    Given a transformer $\mathtt{TF}_{\theta}$ and a sampling algorithm that jointly satisfy Proposition~\ref{prop:_construction}, together with a prompt embedding matrix $\Hb_0$ in \eqref{eq:_input}, a CoT reasoning length limit $t\in\NN_+$, and a sampling budget $N\in\NN_+$, we consider the following test-time computing methods:
    \begin{itemize}[leftmargin=*,nosep]
        \item Firstly generate $N$ random predictions of the regression coefficient as $\{\wb^{(j)}_t\}_{j=1}^N$ (see Proposition~\ref{prop:_construction});
        \item Then aggregate the $N$ random outcomes $\{\wb^{(j)}_t\}_{j=1}^N$ by using one of the following options:
        \begin{enumerate}[nosep,leftmargin=*]
            \item Ensemble: $\wb_{\mathtt{avg}}:=N^{-1}\cdot \sum_{j=1}^N\wb^{(j)}_t$;
            \item Best-of-N: $\wb_{\mathtt{BoN}}:=\argmax_{\{\wb^{(j)}_t\}_{j=1}^N}R(\wb^{(j)}_t)$ where $R(\cdot):\RR^d\mapsto\RR$ is certain reward function;
            \item Majority vote: 
            $\wb_{\mathtt{mv}}:=\argmax_{\{\wb^{(j)}_t\}_{j=1}^N}\!\!\mathtt{Occur}(\wb^{(j)}_t)$, where $\mathtt{Occur}(\cdot):\RR^d\mapsto\NN$ is a proper function that counts the occurrence of the input.
        \end{enumerate}
    \end{itemize}
\end{definition}

In the following Sections~\ref{subsec:_continuous} and \ref{subsec:_binary}, we instantiate the in-context linear regression task \eqref{eq:_in_context_linear_regression} to more concrete task priors, and investigate the effectiveness and the scaling law of the above test-time computing techniques. 
We also remark that in this paper we assume the existence of a transformer satisfying Proposition~\ref{prop:_construction} without explicitly training such one from scratch, which is left as an interesting future work.

\subsection{Case Study 1: In-context Linear Regression with Continuous Coefficient}\label{subsec:_continuous}

The first type of tasks we consider is the standard in-context linear regression with continuous regression coefficient sampled from a Gaussian distribution, i.e., $p_{\wb} = \cN(\boldsymbol{0}, \omega^2\cdot \Ib_d)$.
For this case, the specific type of sampling algorithms $\mathtt{Sampling\_Alg}$ we study is concluded in Algorithm~\ref{alg:_sampling_alg_continuous}.
\begin{algorithm}[H]
\caption{Sampling algorithm for in-context linear regression with continuous coefficient}\label{alg:_sampling_alg_continuous}
\begin{algorithmic}[1]
  \STATE \textbf{Input:} token embedding $\widetilde{\hb}$, noise level $\sigma\geq 0$, noise transformation function $\phi_\cdot(\cdot):\RR^d\times\RR^d\mapsto\RR^d$.
  \STATE Extract the coefficient $\widetilde{\wb}$ from $\widetilde{\hb}$, i.e., $\widetilde{\wb} = (\widetilde{\hb})_{d+2:2d+1}$
  \STATE Sample a noise vector $\boldsymbol{\xi} \sim \mathcal{N}(\boldsymbol{0}, \sigma^2 \cdot \Ib_d)$ \\
  \STATE Define $\wb \leftarrow \widetilde{\wb} + \phi_{\boldsymbol{\xi}}(\widetilde{\wb})$ \\
  \STATE \textbf{Output:} $\hb:=(\boldsymbol{0}, 0, \wb, 1)^\top$.
\end{algorithmic}
\end{algorithm}

Under sampling method Algorithm~\ref{alg:_sampling_alg_continuous}, Proposition~\ref{prop:_construction} is satisfied with $ x'\sim p(\cdot|x)$ given by $x'  = x + \phi_{\boldsymbol{\xi}}(x)$ for a Gaussian random seed $\boldsymbol{\xi}$ and some noise transformation function $\phi_{\boldsymbol{\xi}}$.
Recall that by Proposition~\ref{prop:_construction}, $\widetilde{\wb}$ output by the transformer is performing one-step gradient descent from the last prediction.
The intuition of studying Algorithm~\ref{alg:_sampling_alg_continuous} is that such a noisy version of the gradient descent could allow exploration of the loss landscape, and we aim to investigate whether  the test-time computing techniques in Definition~\ref{def:_test_time_computing} could properly aggregate the random gradient-based paths to achieve a better prediction than vanilla multi-step GD \eqref{eq:_w_gd} via less overfitting.
In this paper, we investigate the following two concrete and simple examples of the noise transformation function (NFT) $\phi_{\boldsymbol{\xi}}$.
Potential future works could investigate other types of $\phi_{\boldsymbol{\xi}}$.

\begin{example}[Constant NFT]\label{exp:_constant}
    $\phi_{\boldsymbol{\xi}}(\wb):=\boldsymbol{\xi}$, independent of the input $\wb$ and is homogeneous across reasoning steps.
\end{example}

\begin{example}[Linear NFT]\label{exp:_linear}
    $\phi_{\boldsymbol{\xi}}(\wb):= \boldsymbol{\xi}\boldsymbol{\xi}^\top\wb$, linear in the input predicted weight $\wb$ such that the sampling distribution has different shape based upon the current decoding result.
\end{example}


We consider the following test-time computing methods.

\textbf{Baseline: multi-step GD with CoT \eqref{eq:_w_gd}.} 
This is a transformer implementing a vanilla GD, without using Algorithm~\ref{alg:_sampling_alg_continuous} but directly using one-step GD as the next token. It is clear that this baseline is deterministic and does not require multiple samples.

\textbf{Ensemble.} We consider sample average of the predictions from $N$ reasoning paths. We denote the resulting prediction after $N$ sampling paths of  length $t$ as $\wb_{\mathtt{avg}}$.

\textbf{Best-of-N.} We also consider  BoN with the oracle reward model $R^{\star}(\wb) := -\|\wb - \wb^*\|_2^2$.
The resulting prediction accuracy gives an upper bound for other test-time computing method due to the usage of the truth.
We denote the resulting  prediction after $N$ sampling paths of  length $t$ by $\wb_{\mathtt{BoN}}$.


\subsection{Case Study 2: In-context Sparse Linear Regression in Discrete Space}\label{subsec:_binary}

Motivated by the practical setting where the candidate tokens lie in a discrete space, we also consider another case in which the coefficient is a sparse binary vector, denoted as $\wb^* \in \{0,1\}^d$ with $\|\wb^*\|_0 = k < d$. 
In this situation, we consider the following sampling algorithm $\mathtt{Sampling\_Alg}$, which performs sampling on a discrete space $\{0,1\}^d$ based on the predicted weight $\widetilde{\wb}$ in the transformer output. In algorithm~\ref{alg:_sampling_alg_binary}, the function $\mathtt{ClipNorm}(\cdot)$ first clips each element in $\widetilde{\wb}$ to be non-negative and then normalizes the resulting vector such that its elements sum to 1, i.e.,  $(\mathtt{ClipNorm}(\widetilde{\wb}))_i = \max\{\widetilde{w}_i,0\}/\sum_{i'=1}^d\max\{\widetilde{w}_{i'},0\}$. This resembles the softmax operation over a vocabulary set.
Then algorithm~\ref{alg:_sampling_alg_binary} simulates LLM decoding by sampling tokens based such a distribution. 
More specifically, given the distribution $p$, we sample the (embedded) next token $\wb$ as a $k$-sparse vector with non-zero coordinates sampled from $p$.
We treat the vector sparsity $k$ as a fixed parameter satisfying $1 \leq k < d$, with $k$ typically set to $1$ in practice. 
Such a discrete nature of these coefficients enables us to consider the method of majority vote among the sampling-based test-time computing strategies in Definition~\ref{def:_test_time_computing}. 
In this work, we compare majority vote to a baseline inference mechanism based on greedy decoding which does not utilize sampling.
\begin{algorithm}[H]
\caption{Sampling algorithm for in-context linear regression with binary coefficient}\label{alg:_sampling_alg_binary}
\begin{algorithmic}[1]
  \STATE \textbf{Input:} token embedding $\widetilde{\hb}$, coefficient sparsity $k\in[d]$.
  \STATE Initialize $\wb  \leftarrow  \mathbf{0}_d$ 
  \STATE Extract the coefficient $\widetilde{\wb}$ from $\widetilde{\hb}$, i.e., $\widetilde{\wb} = (\widetilde{\hb})_{d+2:2d+1}$
  \STATE Compute predicted distribution $p = \mathtt{ClipNorm}(\widetilde{\wb})$ 
  \STATE Sample $k$ different indices $(e_1,\dots,e_k)\subset [d]$ based on $p$ without replacement \label{line:_sample}
  \STATE Assign $w_{e_\ell} = 1$ for each $e_\ell\in\{e_1,\cdots,e_k\}$
  \STATE \textbf{Output:} $\hb:=(\boldsymbol{0}, 0, \wb, 1)^\top$.
\end{algorithmic}
\end{algorithm}

\textbf{Baseline: greedy decoding.}  
In the decoding step, instead of sampling $k$ items based on $p$ as depicted in Algorithm~\ref{alg:_sampling_alg_binary} (Line~\ref{line:_sample}), we opt to choose $k$ items with the highest $k$ probabilities under $p$ and set the corresponding indices of $\wb$ to 1. 
This mirrors the greedy decoding algorithm commonly used in practice.
We denote the resulting prediction after $t$ reasoning steps as $\wb^{\mathtt{greedy}}_t$.


\textbf{Majority vote.} Utilizing the discrete nature of the coefficients, we apply the $\mathtt{Occur}(\cdot)$ function to candidate answers, selecting the most frequent one as our majority vote (see Definition~\ref{def:_test_time_computing}). The prediction after sampling $N$ reasoning paths of length $t$ is denoted as $\wb{t, N}^{\mathtt{mv}}$.

Here we present theoretical results for Case Study 1 and 2 in Section~\ref{sec:continuous_analysis} and ~\ref{sec:binary_analysis} respectively, with 
numerical results in Section~\ref{subsec:_numerical}.

\section{Analysis of In-context Linear Regression with Continuous Coefficient}\label{sec:continuous_analysis}

In this section, we establish the theoretical analysis for Section~\ref{subsec:_continuous}.
We measure the performance of any in-context coefficient prediction by its population risk under $\mathbb{D}_{\wb^{\ast}}$, i.e., $ L_{\mathbb{D}_{\wb^{\ast}}}(\wb):= (1/2)\cdot\mathbb{E}_{(\xb,y)\sim \mathbb{D}_{\wb^{\ast}}}[(y - \xb^\top\wb)^2]$,
which is equivalent to consider the following excess risk, 
\begin{align}
    \cE(\wb)&:=L_{\mathbb{D}_{\wb^{\ast}}}\!(\wb) \!-\! \!\!\inf_{\wb'\in\RR^d} \!\!L_{\mathbb{D}_{\wb^{\ast}}}\!(\wb)= \frac{1}{2}\!\cdot\!\|\wb - \wb^{\ast}\|_{\Hb}^2,
\end{align}
where $\Hb:= \mathbb{E}_{\xb\sim \mathbb{D}_{\wb^{\ast}}}\left[\xb\xb^\top\right]$ denotes the population covariance matrix. 
We denote the collection of label noise in the in-context data as $\boldsymbol{\epsilon} := \yb - \Xb\wb^{\ast}$. 
We also denote the eigenvalues of the population covariance matrix $\Hb$ as $\{\lambda_i\}_{1\leq i\leq d}$ in a non-increasing order.
Our analysis relies on standard assumptions on the data distribution \citep{bartlett2020benign}, which is presented in Assumption~\ref{ass:_data_distribution} due to space limit.
By the same reason, we present our results for a special case of $\Hb$ with polynomially decaying eigenvalues, and refer to the readers to the expressions of general $\Hb$ in Appendix~\ref{subsec:_continuous_continued}.


\textbf{Baseline: multi-step GD with CoT.}
The following result gives the excess risk bound for transformers implementing vanilla multi-step gradient descent \eqref{eq:_w_gd}.
This is a corollary of Theorem~\ref{thm:_gd_continuous} and is proved in Appendix~\ref{subsec:proof_cor_gd_continuous_polynomial_decay}.


\begin{proposition}\label{cor:_gd_continuous_polynomial_decay}
    Under the same assumptions and setups as in Theorem~\ref{thm:_gd_continuous}, by additionally assuming that the spectrum of $\Hb$ satisfies polynomially decaying, i.e., $\lambda_i = i^{-(r+1)}$ for some $r\geq 1$, then for any reasoning path length $t\lesssim \eta(r+1)^{(r+1)/2}d^{(r+1)/2}$, with probability at least $1-1/\mathrm{poly}(n)$,
    \begin{align}
        \mathbb{E}_{\boldsymbol{\epsilon},\wb^{\ast}}\left[\mathcal{E}(\wb_{\mathtt{GD}})\right]\lesssim \omega^2\cdot\left(\frac{1}{t\eta}\right)^{\frac{r}{r+1}} + \frac{\sigma_{\epsilon}^2}{n}\cdot (t\eta)^{\frac{1}{r+1}}.
    \end{align}
\end{proposition}


\paragraph{Aggregating by ensembling.} 
In this case, the final regression coefficient reasoned by the transformer test-time computing under the budget of CoT length $t$ and reasoning path number $N$ is explicitly given by $\wb_{\texttt{avg}}:=N^{-1}\cdot \sum_{j=1}^N \wb^{(j)}_t$, 
where each random reasoning path $\{\wb_{\ell}^{(j)}\}_{1\leq\ell \leq t}$ is i.i.d. generated according to Definition~\ref{def:_inference_mechanism} via a transformer satisfying Proposition~\ref{prop:_construction} and with Algorithm~\ref{alg:_sampling_alg_continuous}.
The following result gives the excess risk bound for this method with different choices of the NFT $\phi_{\bxi}$. 
The proof is in Appendix~\ref{subsec:_proof_cor_average_continuous}.

\begin{theorem}\label{cor:_average_continuous}
     Under the same assumptions and setups as in Theorem~\ref{thm:_average_continuous},  additionally assuming that the spectrum of $\Hb$ satisfies polynomially decaying, i.e., $\lambda_i = i^{-(r+1)}$ for some constant $r\geq 0$,  we have the following results.
     \begin{enumerate}[left=0mm]
         \item Constant noise transformation function (Example~\ref{exp:_constant}): taking the  reasoning length $t\lesssim \eta (r+1)^{(r+2)/2}n^{(r+1)/2}$, with probability at least $1-1/\mathrm{poly}(n)$, 
             \begin{align}
        \mathbb{E}\left[\mathcal{E}(\wb_{\mathtt{avg}})\right]\lesssim \omega^2\!\cdot\!\left(\frac{1}{t\eta}\right)^{\frac{r}{r+1}} \!+\! \frac{\sigma_{\epsilon}^2}{n}\cdot (t\eta)^{\frac{1}{r+1}} \!+\! \frac{\vartheta_{n,t}}{N}.
    \end{align}
         \item Linear noise transformation function (Example~\ref{exp:_linear}): taking the noise variance $\sigma^2\asymp d^{-1}$, the reasoning length $t>\sigma^{-2}\cdot \log 2$, with probability at least $1-1/\mathrm{poly}(n)$, 
        \begin{align}
            \mathbb{E}\!\!\left[\mathcal{E}(\wb_{\mathtt{avg}})\right]\!\lesssim\!\omega^2\!\cdot\! \widetilde{\lambda}^{\frac{r}{r+1}} \!+\!   \frac{\sigma_{\epsilon}^2}{n}\!\cdot\! \left(\frac{\eta(1-\sigma^2)}{\sigma^2}\right)^{\frac{1}{r+1}} \!\!\!\!+\! \frac{\varsigma_n}{N},
    \end{align}
    where $\widetilde{\lambda}:= \eta^{-1}(2t^{-1}+\sigma^2(1+2t^{-1})/(1-\sigma^2))$.
     \end{enumerate}
     Here the expectation is taken with respect to $\boldsymbol{\epsilon}$, $\wb^{\ast}$, and all the sampling noise $\bxi$ across different reasoning steps and paths.
     The explicit formula for the functions $\vartheta_{n,t}$ and $\varsigma_n$ are deferred to \eqref{eq:_vartheta} and \eqref{eq:_varsigma}, respectively.
\end{theorem}

    The above theorem reveals how the prediction accuracy evolves as the reasoning length $t$ and sample numbers $N$ increase. In particular, we make the following remarks
    (i) In the above excess risk, the terms $\vartheta_{n,t}/N$ and $\varsigma_{n}/N$ represent the error from sampling finitely many reasoning paths $N$. By taking $N$ large enough (see \eqref{eq:_n_c} and \eqref{eq:_n_l} in Corollary~\ref{cor:_average_continuous_restate}), the leading term of the excess risk would be the first two terms.
    (ii) By the result for Example~\ref{exp:_constant},  Algorithm~\ref{alg:_sampling_alg_continuous} with constant noise does not provide benefit compared with TF implementing vanilla GD (see Proposition~\ref{cor:_gd_continuous_polynomial_decay}).
    (iii) In contrast, we next show that with linear NFT Algorithm~\ref{alg:_sampling_alg_continuous} can prevent overfitting to noisy labels. Considering the following regime of the parameters,
    \begin{align}
        \!\!\!\!\omega,\sigma_{\epsilon}\asymp 1,\quad n\asymp \eta d,\quad \sigma^{2}\asymp d^{-1},\quad t \asymp \widetilde{t}\cdot\sigma^{-2},\label{eq:_regime}
    \end{align}
    risk bounds for the vanilla multi-step GD and the ensemble method (using linear NFT (Example~\ref{exp:_linear})) are as following, 
    \begin{align}
        &\mathbb{E}_{\boldsymbol{\epsilon},\wb^{\ast}}\left[\mathcal{E}(\wb_{\mathtt{GD}})\right] \lesssim \widetilde{t}^{\frac{1}{r+1}} \cdot \left(\eta d\right)^{-\frac{r}{r+1}},\\
        &\mathbb{E}_{\boldsymbol{\epsilon},\wb^{\ast},\bxi}\left[\mathcal{E}(\wb_{\mathtt{avg}})\right] \lesssim \left(\eta d\right)^{-\frac{r}{r+1}},\,\,\mathrm{if}\,\,N\geq \eta^{\frac{r}{r+1}}d^{\frac{2r+1}{r+1}}.
    \end{align}
    Notice that by the conditions in Proposition~\ref{cor:_gd_continuous_polynomial_decay} and Theorem~\ref{cor:_average_continuous}, all the above conclusions hold when $t = \widetilde{t}\cdot\sigma^{-2}$ is not exceeding the order of $\eta(r+1)^{(r+1)/2}n^{(r+1)/2}$, which, under the parameter regime \eqref{eq:_regime}, translates to $\widetilde{t} \lesssim d^{(r-1)/2}$.
    Thus we are able to observe that in the high-dimensional regime, vanilla GD method has the disadvantage of harmful overfitting to the label noise when the effective reasoning path length $\widetilde{t}$ is increasing, while the sampling-based test-time computing does not (see details in  Remark~\ref{rmk:_regime}).

\section{Analysis of In-context Sparse Linear Regression in Discrete Space}
\label{sec:binary_analysis}

In this section, we conduct a theoretical analysis for binary sparse in-context linear regression (Section~\ref{subsec:_binary}). Our strategy of studying and comparing the test-time computing methods is to analyze the probability of perfectly recovering the true coefficient, i.e., $\mathbb{P}(\wb_t^{\mathtt{greedy}} = \wb^{\ast})$ and $\mathbb{P}(\wb_{t, N}^{\mathtt{mv}} = \wb^{\ast})$. We use the notation $p(\wb_t = \wb) := \PP(\wb_t = \wb \mid \wb_{0}, \cD)$ to indicate the probability of weight $\wb$ after $t$ reasoning steps, conditioning on the initial state $\wb_{0}$ and the in-context dataset $\cD$ in a single reasoning path. We define $\cW = \{\wb \mid \wb \in \{0,1\}^d, \|\wb\|_0 = k\}$ and assume $\bx \sim \cN(0,\Ib_d)$ and label noise $\epsilon_i \sim \cN(0,\sigma{\epsilon}^2)$ with $\sigma_{\epsilon} > 0$. 

Our first result shows that if in a single reasoning path the prediction $\wb_t$ has a probability of recovering the truth higher than that of recovering any other coefficient, then majority vote recovers the truth with a probability converging to $1$ exponentially fast.
The proof is in Appendix~\ref{subec:_proof_th_sc_maj_vote}.

\begin{proposition}[Sample complexity for majority vote] 
\label{th:sc_maj_vote}
    Consider the binary sparse in-context linear regression task (Section~\ref{subsec:_binary}) and using majority vote with reasoning length $T$ and sampling number $N$. 
    The final prediction $\wb_{t, N}^{\mathtt{mv}}$ can asymptotically recover the truth $\wb^*$ with probability $1$ given sufficient sample size $N$ if for a single reasoning path 
    \begin{align}
        \!\!\!\!\!\!\!\!\!\Delta_t:=p(\wb_t = \wb^*) - \max_{\wb' \in \cW\setminus\{\wb^{\ast}\}} p(\wb_t = \wb')>0.\label{eq:_recovery_condition}
    \end{align}
    Under condition \eqref{eq:_recovery_condition}, it holds that
\begin{align} 
\PP\left(\wb_{t, N}^{\mathtt{mv}} = \wb^*\mid \wb_0, \cD\right) \geq  1 - |\cW|\cdot \exp\left(-N\Delta_t^2/2\right).
\end{align}
\end{proposition}
We remark that similar results of Proposition~\ref{th:sc_maj_vote} have also been proposed in \cite{wuInferenceScalingLaws2024}.
Here, we further provide more detailed analysis for the majority vote in our binary sparse linear regression task, show its dependence on the in-context example number $n$, reasoning length $t$, and compare it with the greedy decoding algorithm to emphasize when it is important to use the sample-then-select method.

Our main result to this end is the following two theorems.
The first result is regarding the regime where we have sufficiently many in-context data $n$,
with proof in Appendix~\ref{subsec:_proof_th_sc_sufficient_n}.
\begin{theorem}[Perfect recovery probability with sufficient in-context examples]
\label{th:sc_sufficient_n}
    Suppose that $n \geq (6k + 3\sigma_{\epsilon})^4$, then the overall recovery probability of greedy decoding and majority vote are lower bounded as following:
    \begin{itemize}[ leftmargin=*]
    \item \textbf{Greedy decoding:} for any reasoning length $t\geq 1$, $\mathbb{P}\big(\wb^{\mathtt{greedy}}_t = \wb^{\ast}\big) \geq 1 - \delta(n)$
  \item \textbf{Majority vote:} for any reasoning length $t\geq 1$ and sampling number $N\geq 1$, it holds that 
        \begin{align}
            \mathbb{P}\left(\wb_{t, N}^{\mathtt{mv}} = \wb^{\ast}\right)  \geq \big(1-\delta(n)\big)\cdot \big(1 - |\cW| \cdot e^{-N\Delta_t^2/2}  \big).
        \end{align}
    \end{itemize}
    Here $\delta(n) = 2d(d+2)\cdot \exp(-c\cdot  n^{1/2})$ for some absolute constant $c>0$, and for any $t\geq 1$, $\Delta_t$ satisfies that 
    \begin{align}
    \Delta_t \geq \frac{p_{\mathtt{trans}}}{p_{\mathtt{trans}} + 1 - p_{\mathtt{recurr}}} \left(1\!-\!  \left(p_{\mathtt{recurr}} \!-\! p_{\mathtt{trans}}\right)^{t-1} \right),
    \end{align}
    where the quantities $p_{\mathtt{trans}}, p_{\mathtt{recurr}}\in(0,1)$ are defined as
    \begin{align}
        p_{\mathtt{trans}} := \left(1 - \frac{2k + \sigma_{\epsilon}}{n^{1/4} -\left(2k + \sigma_{\epsilon}\right)}\right)\cdot  \frac{1}{d^k}, \quad
        p_{\mathtt{recurr}} := \left(1 - \frac{\sigma_{\epsilon}}{n^{1/4} - \sigma_{\epsilon}}\right)\cdot  \left(\frac{n^{1/4} - \sigma_{\epsilon}}{n^{1/4} - \sigma_{\epsilon} + d \sigma_{\epsilon}}\right)^{k}.
    \end{align}
\end{theorem}
Theorem~\ref{th:sc_sufficient_n}  establishes lower bounds on the recovery probability for both greedy decoding and majority vote. 
The recovery probability improves exponentially with the number of in-context examples.
For majority vote, since $0< \Delta_t <1$ for all $t \geq 1$, as with sufficiently many number of sampling paths ($N \rightarrow \infty$), we have $\mathbb{P}\left(\wb_{t, \infty}^{\mathtt{mv}} = \wb^{\ast}\right) \geq 1-\delta$, which matches that of greedy decoding $ \mathbb{P}(\wb^{\mathtt{greedy}}_t = \wb^{\ast}) $, and both algorithms can achieve perfect accuracy given sufficient in-context examples $n$. 
Moreover, we remark that $p_{\mathtt{recurr}} > p_{\mathtt{trans}}$ since it holds that $(
n^{1/4} - \sigma_{\epsilon})(n^{1/4} - \sigma_{\epsilon} + d \sigma_{\epsilon})^{-1}>d^{-1}$
for sufficiently many in-context examples $n> (3\sigma_{\epsilon})^4$. 
When $\sigma_{\epsilon} = 0$, we have $p_{\mathtt{recurr}} = 1$ and $p_{\mathtt{trans}} > 1/2d^k$, ensuring that $\Delta_t$ converges to 1 as $t \rightarrow \infty$.


The theorem for sufficient in-context data does not highlight the advantage of majority vote in terms of recovery probability. However, real-world applications and our experiments show that majority vote is more accurate and robust with limited in-context data. We present our second main theorem to analyze this scenario, considering the case with only one in-context example ($n = 1$ and $k = 1$). Although simplified, this case offers valuable insights into the robustness of majority vote.

\begin{theorem}[Majority vote outperforms greedy decoding in the case of limited in-context examples] \label{th:sc_limited_n} 

Consider the case where $n = k = 1, \sigma_{\epsilon} = 0$, and denote the in-context example as $(\xb, \xb^\top \wb^{\ast} )$.
We have the following results.
\begin{itemize}[leftmargin=*,nosep]
    \item Greedy decoding: for any reasoning length $t\geq 1$, 
    \begin{align}
        \mathbb{P}\big(\wb^{\mathtt{greedy}}_t = \wb^{\ast}\big) \leq \frac{1}{2^{d-1}} + \frac2d.
    \end{align}
    \item Majority vote: there exists a $\zeta > 0$ such that for reasoning steps $t\geq 2\log2/\log(1 - \zeta)$, sampling number $N\geq 1$,
    \begin{align}
        \mathbb{P}\left(\wb_{t, N}^{\mathtt{mv}} = \wb^{\ast}\right) \geq 1 - \frac{1}{2^{d-1}}.
    \end{align}
\end{itemize}
\end{theorem}


Theorem~\ref{th:sc_limited_n}, detailed with proof in Appendix~\ref{subsec:_proof_th_sc_limited_n}, highlights a key difference between majority vote and greedy decoding with limited in-context examples. As shown in numerical experiments, greedy decoding frequently gets stuck in cyclic state transitions, failing to reach the optimal state $\wb^*$. In contrast, majority vote explores the state space more effectively, enabling a high probability of converging to $\wb^*$ even in constrained scenarios, as shown in numerical experiments in Section~\ref{subsec:_numerical}.



\section{Experiments}
\subsection{Numerical Results for In-Context Linear Regression}\label{subsec:_numerical}
\begin{figure*}
\vskip -.1in
\centering
\begin{subfigure}
  \centering
  \includegraphics[width=0.95\textwidth]{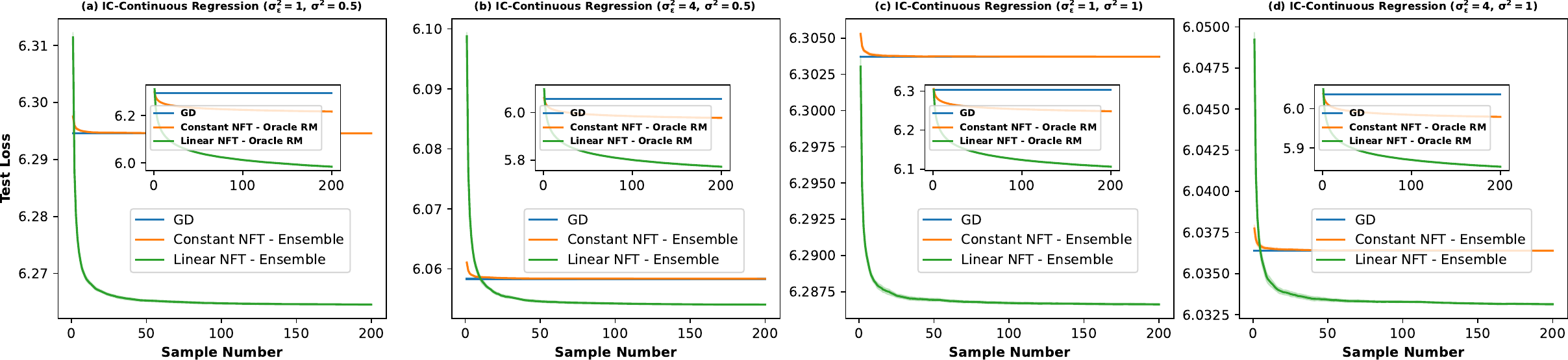}
\end{subfigure}
\begin{subfigure}
  \centering
  \includegraphics[width=0.95\textwidth]{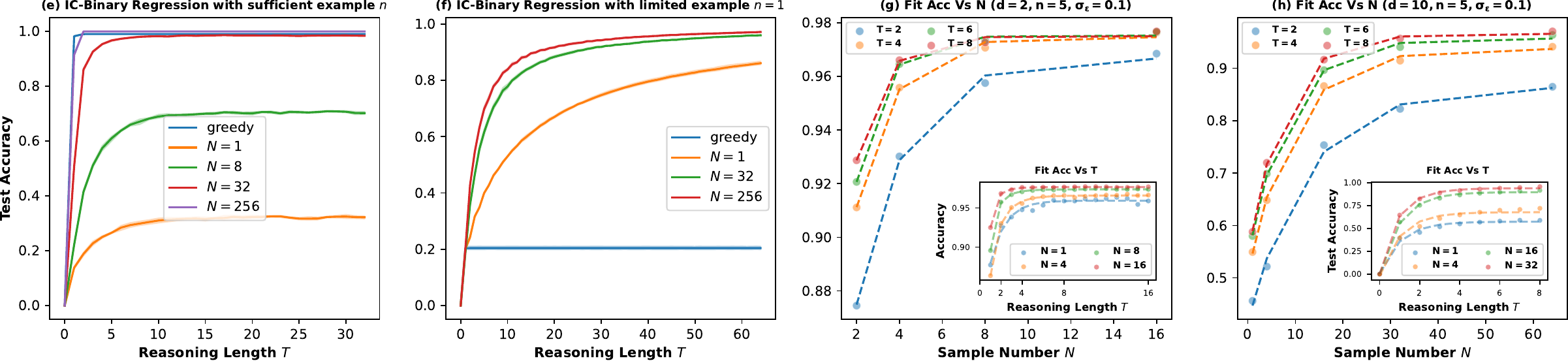}
\end{subfigure}
\vskip -.1in
\caption{Numerical experiments on in-context linear regression with continuous coefficients  (\textit{a-d}) and binary coefficients (\textit{e-h}).}
\label{fig:add_exp}
\vskip -.2in
\end{figure*}

Here, we validate our theoretical findings through numerical experiments. For the continuous case, we examine the effects of varying $\sigma_{\epsilon}$ and $\sigma$. Our results demonstrate that with ensemble aggregation, constant NFT provides no performance improvement, while linear NFT reduces test loss given sufficient sample size, confirming Corollary~\ref{cor:_average_continuous}. Furthermore, when decoding with a reward model, even constant NFT yields consistent performance improvements as sample numbers increase.

For the binary sparse coefficient case, we observe from Fig~\ref{fig:add_exp} (e) that with sufficient examples, both greedy decoding and majority voting achieve perfect accuracy, supporting Theorem~\ref{th:sc_sufficient_n}. From Fig~\ref{fig:add_exp} (f) we find that when setting $n=1$ and $d = 10, \sigma_{\epsilon} = 0$,  with sufficiently large reasoning length $T$, majority voting achieves high accuracy, while greedy search maintains approximately $2/d = 0.2$ accuracy, consistent with Theorem \ref{th:sc_limited_n}. We fit the relationship between accuracy $\mathtt{Acc}$ and sample number $N$ using $\mathtt{Acc} = \alpha_{T} - \beta_{T} e^{-\nu_T N}$ for given $T$. The results, shown in Fig~\ref{fig:add_exp} (g) and (h), not only validate Theorem~\ref{th:sc_maj_vote} but also suggest practical applications for real-world LLM inference.
\subsection{Insights for LLM Inference}

Our theoretical analysis reveals two critical terms $\mathcal{O}(e^{-\Delta_T^2 N/2})$ and $\mathcal{O}(e^{-\mu T})$ for the overall accuracy $\mathtt{Acc}(T,N)$ and probability gap $\Delta_T$. These findings can provide valuable insights into real-world LLM inference.

To begin, we can observe that $\Delta_T$ changes with the number of reasoning steps $T$ in $\mathcal{O}(e^{-\mu T})$. This can be described as:
\begin{align}\label{eq:fit_p_n}
\Delta_T &\approx \gamma - \kappa e^{-\mu T}.
\end{align}
Specifically, for sampling number of $N = 1$, here we \textit{assume} we can directly express the overall accuracy as :
\begin{align}\label{eq:fit_p_n_1}
    \mathtt{Acc}(T,1) &\approx \gamma' - \kappa' e^{-\mu T}.
\end{align}
Note that Eq~\eqref{eq:fit_p_n_1} and \eqref{eq:fit_p_n} shares the same $\mu$. To predict the final accuracy for a given sampling number $N$, here we introduce two additional parameters $(\alpha_{(T,N)}, \beta_{(T,N)})$ and formulate $\mathtt{Acc}(T,N)$ as:
\begin{align} \label{eq:fit_acc_t_n}
\mathtt{Acc}(T,N) &\approx \alpha_{(T,N)} - \beta_{(T,N)} e^{-\Delta_T^2 N/2}.
\end{align}
To effectively fit Eq~\eqref{eq:fit_p_n} - \eqref{eq:fit_acc_t_n}, based on the results on Fig \ref{fig:add_exp} (g) and (h), we further claim two conjectures:
\begin{itemize}[nosep,leftmargin=*]
    \item When $T$ is fixed, then  Eq~\ref{eq:fit_acc_t_n} can be approximated by:
        \begin{align}\label{eq:fit_acc_fix_t}
        \mathtt{Acc}(T,N) &\approx \alpha_{T} - \beta_{T} e^{-\Delta_T^2 N/2}.
        \end{align}
    \item When $N$ is fixed, then Eq~\ref{eq:fit_acc_t_n} can be approximated by:
        \begin{align}\label{eq:fit_acc_fix_n}
        \mathtt{Acc}(T,N) &\approx \alpha_{N} - \beta_{N} e^{-\Delta_T^2 N/2}.
        \end{align}
\end{itemize}

This analysis enables us to predict model's high test-time computation performance using data from low-computation, resulting our Low-Cost-to-High Prediction Algorithm \ref{alg:predicte_acc}, we validate our algorithm on GSM8K \citep{cobbe2021training} and a subset of MATH \citep{hendrycks2021measuring},  details can be found in \autoref{appdix:exp_details}. 
Fig \ref{fig:gsm8k_math_fit} demonstrates that our algorithm successfully predicts model performance at high computational costs using only data from settings with relatively low reasoning tokens $T$ or sampling numbers $N$.
\begin{figure}[!t]
\begin{center}
\centerline{\includegraphics[width=0.8\columnwidth]{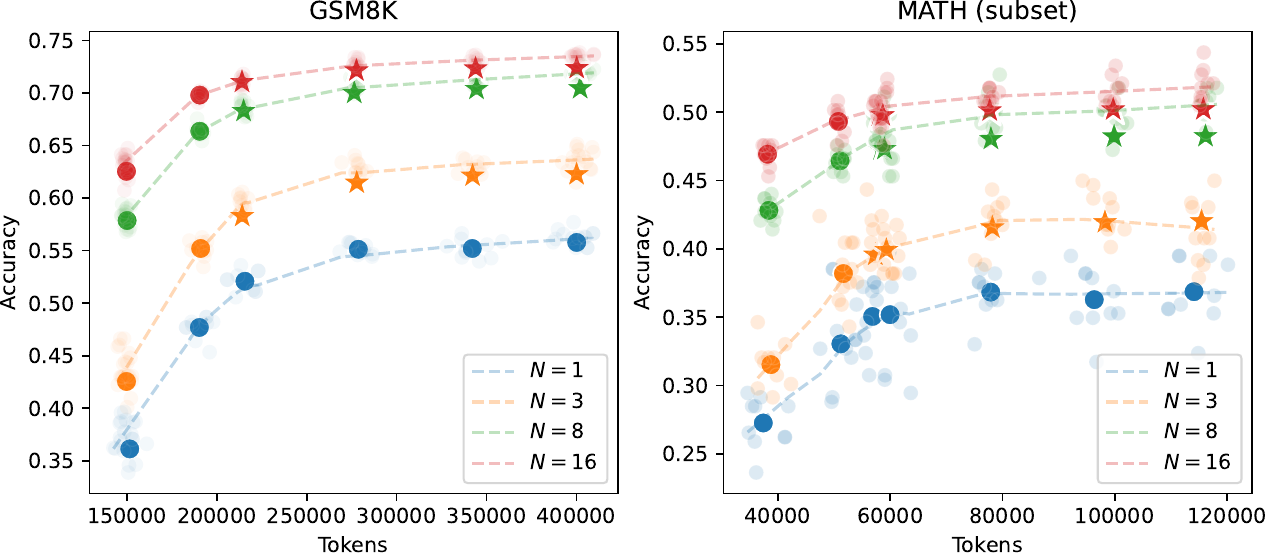}}
\vskip -.1in
\caption{Utilizing data with low computational costs to forecast results for high computational costs, where $\bigstar$ denotes predicted results and $\newmoon$ denotes the data utilized.}
\label{fig:gsm8k_math_fit}
\end{center}
\vskip -0.3in
\end{figure}

\section{Conclusions and Limitations}
This paper makes the initial step toward bridging the gap between practical language model test-time computing techniques with sampling and theoretical transformer analysis by incorporating randomness into the decoding process. 
We study the task of in-context linear regression with continuous/binary coefficients and provide a detailed analysis of widely adopted inference techniques, offering new insights into inference behaviors in real-world language models. 
Potential future works include analyzing other types of sampling algorithms and reasoning methods. Also it remains open to rigorously analyze the benefits of BoN method and its variants (with respect to different reward models) that we experimentally verified to be effective.

\bibliography{ArXiv/reference}

\begin{thebibliography}{63}
\expandafter\ifx\csname natexlab\endcsname\relax\def\natexlab#1{#1}\fi
\expandafter\ifx\csname url\endcsname\relax
  \def\url#1{\texttt{#1}}\fi
\expandafter\ifx\csname urlprefix\endcsname\relax\def\urlprefix{URL }\fi

\bibitem[{Achiam et~al.(2023)Achiam, Adler, Agarwal, Ahmad, Akkaya, Aleman, Almeida, Altenschmidt, Altman, Anadkat et~al.}]{achiam2023gpt}
\textsc{Achiam, J.}, \textsc{Adler, S.}, \textsc{Agarwal, S.}, \textsc{Ahmad, L.}, \textsc{Akkaya, I.}, \textsc{Aleman, F.~L.}, \textsc{Almeida, D.}, \textsc{Altenschmidt, J.}, \textsc{Altman, S.}, \textsc{Anadkat, S.} \textsc{et~al.} (2023).
\newblock Gpt-4 technical report.
\newblock \textit{arXiv preprint arXiv:2303.08774} .

\bibitem[{Agarwal et~al.(2025)Agarwal, Ali, Bala, Balaji, Barker, Cai, Chattopadhyay, Chen, Cui, Ding et~al.}]{agarwal2025cosmos}
\textsc{Agarwal, N.}, \textsc{Ali, A.}, \textsc{Bala, M.}, \textsc{Balaji, Y.}, \textsc{Barker, E.}, \textsc{Cai, T.}, \textsc{Chattopadhyay, P.}, \textsc{Chen, Y.}, \textsc{Cui, Y.}, \textsc{Ding, Y.} \textsc{et~al.} (2025).
\newblock Cosmos world foundation model platform for physical ai.
\newblock \textit{arXiv preprint arXiv:2501.03575} .

\bibitem[{Ahn et~al.(2023{\natexlab{a}})Ahn, Cheng, Daneshmand and Sra}]{ahn2023transformers}
\textsc{Ahn, K.}, \textsc{Cheng, X.}, \textsc{Daneshmand, H.} and \textsc{Sra, S.} (2023{\natexlab{a}}).
\newblock Transformers learn to implement preconditioned gradient descent for in-context learning.
\newblock \textit{Advances in Neural Information Processing Systems} \textbf{36} 45614--45650.

\bibitem[{Ahn et~al.(2023{\natexlab{b}})Ahn, Cheng, Song, Yun, Jadbabaie and Sra}]{ahn2023linear}
\textsc{Ahn, K.}, \textsc{Cheng, X.}, \textsc{Song, M.}, \textsc{Yun, C.}, \textsc{Jadbabaie, A.} and \textsc{Sra, S.} (2023{\natexlab{b}}).
\newblock Linear attention is (maybe) all you need (to understand transformer optimization).
\newblock \textit{arXiv preprint arXiv:2310.01082} .

\bibitem[{Aky{\"u}rek et~al.(2022)Aky{\"u}rek, Schuurmans, Andreas, Ma and Zhou}]{akyurek2022learning}
\textsc{Aky{\"u}rek, E.}, \textsc{Schuurmans, D.}, \textsc{Andreas, J.}, \textsc{Ma, T.} and \textsc{Zhou, D.} (2022).
\newblock What learning algorithm is in-context learning? investigations with linear models.
\newblock In \textit{The Eleventh International Conference on Learning Representations}.

\bibitem[{Bai et~al.(2024)Bai, Chen, Wang, Xiong and Mei}]{bai2024transformers}
\textsc{Bai, Y.}, \textsc{Chen, F.}, \textsc{Wang, H.}, \textsc{Xiong, C.} and \textsc{Mei, S.} (2024).
\newblock Transformers as statisticians: Provable in-context learning with in-context algorithm selection.
\newblock \textit{Advances in neural information processing systems} \textbf{36}.

\bibitem[{Bartlett et~al.(2020)Bartlett, Long, Lugosi and Tsigler}]{bartlett2020benign}
\textsc{Bartlett, P.~L.}, \textsc{Long, P.~M.}, \textsc{Lugosi, G.} and \textsc{Tsigler, A.} (2020).
\newblock Benign overfitting in linear regression.
\newblock \textit{Proceedings of the National Academy of Sciences} \textbf{117} 30063--30070.

\bibitem[{Brown et~al.(2020)Brown, Mann, Ryder, Subbiah, Kaplan, Dhariwal, Neelakantan, Shyam, Sastry, Askell et~al.}]{brown2020language}
\textsc{Brown, T.}, \textsc{Mann, B.}, \textsc{Ryder, N.}, \textsc{Subbiah, M.}, \textsc{Kaplan, J.~D.}, \textsc{Dhariwal, P.}, \textsc{Neelakantan, A.}, \textsc{Shyam, P.}, \textsc{Sastry, G.}, \textsc{Askell, A.} \textsc{et~al.} (2020).
\newblock Language models are few-shot learners.
\newblock \textit{Advances in neural information processing systems} \textbf{33} 1877--1901.

\bibitem[{Chen et~al.(2024{\natexlab{a}})Chen, Sheen, Wang and Yang}]{chen2024training}
\textsc{Chen, S.}, \textsc{Sheen, H.}, \textsc{Wang, T.} and \textsc{Yang, Z.} (2024{\natexlab{a}}).
\newblock Training dynamics of multi-head softmax attention for in-context learning: Emergence, convergence, and optimality.
\newblock \textit{arXiv preprint arXiv:2402.19442} .

\bibitem[{Chen et~al.(2024{\natexlab{b}})Chen, Zhao and Zou}]{chen2024transformers}
\textsc{Chen, X.}, \textsc{Zhao, L.} and \textsc{Zou, D.} (2024{\natexlab{b}}).
\newblock How transformers utilize multi-head attention in in-context learning? a case study on sparse linear regression.
\newblock \textit{arXiv preprint arXiv:2408.04532} .

\bibitem[{Chen and Zou(2024)}]{chen2024can}
\textsc{Chen, X.} and \textsc{Zou, D.} (2024).
\newblock What can transformer learn with varying depth? case studies on sequence learning tasks.
\newblock \textit{arXiv preprint arXiv:2404.01601} .

\bibitem[{Cobbe et~al.(2021)Cobbe, Kosaraju, Bavarian, Chen, Jun, Kaiser, Plappert, Tworek, Hilton, Nakano et~al.}]{cobbe2021training}
\textsc{Cobbe, K.}, \textsc{Kosaraju, V.}, \textsc{Bavarian, M.}, \textsc{Chen, M.}, \textsc{Jun, H.}, \textsc{Kaiser, L.}, \textsc{Plappert, M.}, \textsc{Tworek, J.}, \textsc{Hilton, J.}, \textsc{Nakano, R.} \textsc{et~al.} (2021).
\newblock Training verifiers to solve math word problems.
\newblock \textit{arXiv preprint arXiv:2110.14168} .

\bibitem[{Dong et~al.(2023)Dong, Xiong, Goyal, Zhang, Chow, Pan, Diao, Zhang, Shum and Zhang}]{dong2023raft}
\textsc{Dong, H.}, \textsc{Xiong, W.}, \textsc{Goyal, D.}, \textsc{Zhang, Y.}, \textsc{Chow, W.}, \textsc{Pan, R.}, \textsc{Diao, S.}, \textsc{Zhang, J.}, \textsc{Shum, K.} and \textsc{Zhang, T.} (2023).
\newblock Raft: Reward ranked finetuning for generative foundation model alignment.
\newblock \textit{arXiv preprint arXiv:2304.06767} .

\bibitem[{Dubey et~al.(2024)Dubey, Jauhri, Pandey, Kadian, Al-Dahle, Letman, Mathur, Schelten, Yang, Fan et~al.}]{dubey2024llama}
\textsc{Dubey, A.}, \textsc{Jauhri, A.}, \textsc{Pandey, A.}, \textsc{Kadian, A.}, \textsc{Al-Dahle, A.}, \textsc{Letman, A.}, \textsc{Mathur, A.}, \textsc{Schelten, A.}, \textsc{Yang, A.}, \textsc{Fan, A.} \textsc{et~al.} (2024).
\newblock The llama 3 herd of models.
\newblock \textit{arXiv preprint arXiv:2407.21783} .

\bibitem[{Fu et~al.(2023)Fu, Chen, Jia and Sharan}]{fu2023transformers}
\textsc{Fu, D.}, \textsc{Chen, T.-Q.}, \textsc{Jia, R.} and \textsc{Sharan, V.} (2023).
\newblock Transformers learn higher-order optimization methods for in-context learning: A study with linear models.
\newblock \textit{arXiv preprint arXiv:2310.17086} .

\bibitem[{Garg et~al.(2023)Garg, Tsipras, Liang and Valiant}]{gargWhatCanTransformers2023}
\textsc{Garg, S.}, \textsc{Tsipras, D.}, \textsc{Liang, P.} and \textsc{Valiant, G.} (2023).
\newblock What {{Can Transformers Learn In-Context}}? {{A Case Study}} of {{Simple Function Classes}}.

\bibitem[{Giannou et~al.(2024)Giannou, Yang, Wang, Papailiopoulos and Lee}]{giannouHowWellCan2024}
\textsc{Giannou, A.}, \textsc{Yang, L.}, \textsc{Wang, T.}, \textsc{Papailiopoulos, D.} and \textsc{Lee, J.~D.} (2024).
\newblock How {{Well Can Transformers Emulate In-context Newton}}'s {{Method}}?

\bibitem[{Guo et~al.(2025)Guo, Yang, Zhang, Song, Zhang, Xu, Zhu, Ma, Wang, Bi et~al.}]{guo2025deepseek}
\textsc{Guo, D.}, \textsc{Yang, D.}, \textsc{Zhang, H.}, \textsc{Song, J.}, \textsc{Zhang, R.}, \textsc{Xu, R.}, \textsc{Zhu, Q.}, \textsc{Ma, S.}, \textsc{Wang, P.}, \textsc{Bi, X.} \textsc{et~al.} (2025).
\newblock Deepseek-r1: Incentivizing reasoning capability in llms via reinforcement learning.
\newblock \textit{arXiv preprint arXiv:2501.12948} .

\bibitem[{Guo et~al.(2023)Guo, Hu, Mei, Wang, Xiong, Savarese and Bai}]{guoHowTransformersLearn2023}
\textsc{Guo, T.}, \textsc{Hu, W.}, \textsc{Mei, S.}, \textsc{Wang, H.}, \textsc{Xiong, C.}, \textsc{Savarese, S.} and \textsc{Bai, Y.} (2023).
\newblock How {{Do Transformers Learn In-Context Beyond Simple Functions}}? {{A Case Study}} on {{Learning}} with {{Representations}}.

\bibitem[{Hendrycks et~al.(2021)Hendrycks, Burns, Kadavath, Arora, Basart, Tang, Song and Steinhardt}]{hendrycks2021measuring}
\textsc{Hendrycks, D.}, \textsc{Burns, C.}, \textsc{Kadavath, S.}, \textsc{Arora, A.}, \textsc{Basart, S.}, \textsc{Tang, E.}, \textsc{Song, D.} and \textsc{Steinhardt, J.} (2021).
\newblock Measuring mathematical problem solving with the math dataset.
\newblock \textit{arXiv preprint arXiv:2103.03874} .

\bibitem[{Hu et~al.(2024)Hu, Zhang, Chen and Yang}]{hu2024unveiling}
\textsc{Hu, X.}, \textsc{Zhang, F.}, \textsc{Chen, S.} and \textsc{Yang, Z.} (2024).
\newblock Unveiling the statistical foundations of chain-of-thought prompting methods.
\newblock \textit{arXiv preprint arXiv:2408.14511} .

\bibitem[{Huang et~al.(2025)Huang, Wang and Lee}]{huang2025transformers}
\textsc{Huang, J.}, \textsc{Wang, Z.} and \textsc{Lee, J.~D.} (2025).
\newblock Transformers learn to implement multi-step gradient descent with chain of thought.
\newblock In \textit{The Thirteenth International Conference on Learning Representations}.

\bibitem[{Huang et~al.(2023)Huang, Cheng and Liang}]{huangContextConvergenceTransformers2023}
\textsc{Huang, Y.}, \textsc{Cheng, Y.} and \textsc{Liang, Y.} (2023).
\newblock In-{{Context Convergence}} of {{Transformers}}.

\bibitem[{Kaplan et~al.(2020)Kaplan, McCandlish, Henighan, Brown, Chess, Child, Gray, Radford, Wu and Amodei}]{kaplan2020scaling}
\textsc{Kaplan, J.}, \textsc{McCandlish, S.}, \textsc{Henighan, T.}, \textsc{Brown, T.~B.}, \textsc{Chess, B.}, \textsc{Child, R.}, \textsc{Gray, S.}, \textsc{Radford, A.}, \textsc{Wu, J.} and \textsc{Amodei, D.} (2020).
\newblock Scaling laws for neural language models.
\newblock \textit{arXiv preprint arXiv:2001.08361} .

\bibitem[{Kim et~al.(2023)Kim, Kim and Mozafari}]{kim2023provable}
\textsc{Kim, J.}, \textsc{Kim, M.} and \textsc{Mozafari, B.} (2023).
\newblock Provable memorization capacity of transformers.
\newblock In \textit{The Eleventh International Conference on Learning Representations}.

\bibitem[{Kim and Suzuki(2024)}]{kim2024transformers}
\textsc{Kim, J.} and \textsc{Suzuki, T.} (2024).
\newblock Transformers provably solve parity efficiently with chain of thought.
\newblock \textit{arXiv preprint arXiv:2410.08633} .

\bibitem[{Kojima et~al.(2022)Kojima, Gu, Reid, Matsuo and Iwasawa}]{kojima2022large}
\textsc{Kojima, T.}, \textsc{Gu, S.~S.}, \textsc{Reid, M.}, \textsc{Matsuo, Y.} and \textsc{Iwasawa, Y.} (2022).
\newblock Large language models are zero-shot reasoners.
\newblock \textit{Advances in neural information processing systems} \textbf{35} 22199--22213.

\bibitem[{Koltchinskii and Lounici(2017)}]{koltchinskii2017concentration}
\textsc{Koltchinskii, V.} and \textsc{Lounici, K.} (2017).
\newblock Concentration inequalities and moment bounds for sample covariance operators.
\newblock \textit{Bernoulli}  110--133.

\bibitem[{Kumar et~al.(2024)Kumar, Zhuang, Agarwal, Su, Co-Reyes, Singh, Baumli, Iqbal, Bishop, Roelofs et~al.}]{kumar2024training}
\textsc{Kumar, A.}, \textsc{Zhuang, V.}, \textsc{Agarwal, R.}, \textsc{Su, Y.}, \textsc{Co-Reyes, J.~D.}, \textsc{Singh, A.}, \textsc{Baumli, K.}, \textsc{Iqbal, S.}, \textsc{Bishop, C.}, \textsc{Roelofs, R.} \textsc{et~al.} (2024).
\newblock Training language models to self-correct via reinforcement learning.
\newblock \textit{arXiv preprint arXiv:2409.12917} .

\bibitem[{Li et~al.(2025)Li, Zhang, Chen, Cao and Zou}]{li2025robustness}
\textsc{Li, T.}, \textsc{Zhang, C.}, \textsc{Chen, X.}, \textsc{Cao, Y.} and \textsc{Zou, D.} (2025).
\newblock On the robustness of transformers against context hijacking for linear classification.
\newblock \textit{arXiv preprint arXiv:2502.15609} .

\bibitem[{Liu et~al.(2020)}]{liu2020learning}
\textsc{Liu, F.} \textsc{et~al.} (2020).
\newblock Learning to summarize from human feedback.
\newblock In \textit{Proceedings of the 58th Annual Meeting of the Association for Computational Linguistics}.

\bibitem[{Liu et~al.(2024)Liu, Zhou, Shen and Yang}]{liu2024learn}
\textsc{Liu, R.}, \textsc{Zhou, R.}, \textsc{Shen, C.} and \textsc{Yang, J.} (2024).
\newblock On the learn-to-optimize capabilities of transformers in in-context sparse recovery.
\newblock \textit{arXiv preprint arXiv:2410.13981} .

\bibitem[{Madaan et~al.(2023)Madaan, Tandon, Gupta, Hallinan, Gao, Wiegreffe, Alon, Dziri, Prabhumoye, Yang et~al.}]{madaan2303self}
\textsc{Madaan, A.}, \textsc{Tandon, N.}, \textsc{Gupta, P.}, \textsc{Hallinan, S.}, \textsc{Gao, L.}, \textsc{Wiegreffe, S.}, \textsc{Alon, U.}, \textsc{Dziri, N.}, \textsc{Prabhumoye, S.}, \textsc{Yang, Y.} \textsc{et~al.} (2023).
\newblock Self-refine: iterative refinement with self-feedback (2023).
\newblock \textit{arXiv preprint arXiv:2303.17651} .

\bibitem[{Mahdavi et~al.(2023)Mahdavi, Liao and Thrampoulidis}]{mahdaviMemorizationCapacityMultiHead2023}
\textsc{Mahdavi, S.}, \textsc{Liao, R.} and \textsc{Thrampoulidis, C.} (2023).
\newblock Memorization {{Capacity}} of {{Multi-Head Attention}} in {{Transformers}}.

\bibitem[{Malach(2023)}]{malachAutoRegressiveNextTokenPredictors2023}
\textsc{Malach, E.} (2023).
\newblock Auto-{{Regressive Next-Token Predictors}} are {{Universal Learners}}.

\bibitem[{Nakano et~al.(2021)Nakano, Hilton, Balaji, Wu, Ouyang, Kim, Hesse, Jain, Kosaraju, Saunders et~al.}]{nakano2021webgpt}
\textsc{Nakano, R.}, \textsc{Hilton, J.}, \textsc{Balaji, S.}, \textsc{Wu, J.}, \textsc{Ouyang, L.}, \textsc{Kim, C.}, \textsc{Hesse, C.}, \textsc{Jain, S.}, \textsc{Kosaraju, V.}, \textsc{Saunders, W.} \textsc{et~al.} (2021).
\newblock Webgpt: Browser-assisted question-answering with human feedback.
\newblock \textit{arXiv preprint arXiv:2112.09332} .

\bibitem[{OpenAI(2024)}]{o1}
\textsc{OpenAI} (2024).
\newblock Learning to reason with llms.
\newblock \textit{https://openai.com/index/learning-to-reason-with-llms/} .

\bibitem[{Peebles and Xie(2023)}]{peebles2023scalable}
\textsc{Peebles, W.} and \textsc{Xie, S.} (2023).
\newblock Scalable diffusion models with transformers.
\newblock In \textit{Proceedings of the IEEE/CVF International Conference on Computer Vision}.

\bibitem[{Saunders et~al.(2022)Saunders, Yeh, Wu, Bills, Ouyang, Ward and Leike}]{saunders2022self}
\textsc{Saunders, W.}, \textsc{Yeh, C.}, \textsc{Wu, J.}, \textsc{Bills, S.}, \textsc{Ouyang, L.}, \textsc{Ward, J.} and \textsc{Leike, J.} (2022).
\newblock Self-critiquing models for assisting human evaluators.
\newblock \textit{arXiv preprint arXiv:2206.05802} .

\bibitem[{Snell et~al.(2024{\natexlab{a}})Snell, Lee, Xu and Kumar}]{snell2024scaling}
\textsc{Snell, C.}, \textsc{Lee, J.}, \textsc{Xu, K.} and \textsc{Kumar, A.} (2024{\natexlab{a}}).
\newblock Scaling llm test-time compute optimally can be more effective than scaling model parameters.
\newblock \textit{arXiv preprint arXiv:2408.03314} .

\bibitem[{Snell et~al.(2024{\natexlab{b}})Snell, Lee, Xu and Kumar}]{snellScalingLLMTestTime2024}
\textsc{Snell, C.}, \textsc{Lee, J.}, \textsc{Xu, K.} and \textsc{Kumar, A.} (2024{\natexlab{b}}).
\newblock Scaling {{LLM Test-Time Compute Optimally}} can be {{More Effective}} than {{Scaling Model Parameters}}.

\bibitem[{Stiennon et~al.(2020)Stiennon, Ouyang, Wu, Ziegler, Lowe, Voss, Radford, Amodei and Christiano}]{stiennon2020learning}
\textsc{Stiennon, N.}, \textsc{Ouyang, L.}, \textsc{Wu, J.}, \textsc{Ziegler, D.}, \textsc{Lowe, R.}, \textsc{Voss, C.}, \textsc{Radford, A.}, \textsc{Amodei, D.} and \textsc{Christiano, P.~F.} (2020).
\newblock Learning to summarize with human feedback.
\newblock \textit{Advances in Neural Information Processing Systems} \textbf{33} 3008--3021.

\bibitem[{Takakura and Suzuki(2023)}]{takakuraApproximationEstimationAbility2023}
\textsc{Takakura, S.} and \textsc{Suzuki, T.} (2023).
\newblock Approximation and {{Estimation Ability}} of {{Transformers}} for {{Sequence-to-Sequence Functions}} with {{Infinite Dimensional Input}}.

\bibitem[{Tsigler and Bartlett(2023)}]{tsigler2023benign}
\textsc{Tsigler, A.} and \textsc{Bartlett, P.~L.} (2023).
\newblock Benign overfitting in ridge regression.
\newblock \textit{Journal of Machine Learning Research} \textbf{24} 1--76.

\bibitem[{Vaswani(2017)}]{vaswani2017attention}
\textsc{Vaswani, A.} (2017).
\newblock Attention is all you need.
\newblock \textit{Advances in Neural Information Processing Systems} .

\bibitem[{Vershynin(2018)}]{vershynin2018high}
\textsc{Vershynin, R.} (2018).
\newblock \textit{High-dimensional probability: An introduction with applications in data science}, vol.~47.
\newblock Cambridge university press.

\bibitem[{{von Oswald} et~al.(2023){von Oswald}, Niklasson, Randazzo, Sacramento, Mordvintsev, Zhmoginov and Vladymyrov}]{vonoswaldTransformersLearnIncontext2023}
\textsc{{von Oswald}, J.}, \textsc{Niklasson, E.}, \textsc{Randazzo, E.}, \textsc{Sacramento, J.}, \textsc{Mordvintsev, A.}, \textsc{Zhmoginov, A.} and \textsc{Vladymyrov, M.} (2023).
\newblock Transformers learn in-context by gradient descent.

\bibitem[{Von~Oswald et~al.(2023)Von~Oswald, Niklasson, Randazzo, Sacramento, Mordvintsev, Zhmoginov and Vladymyrov}]{von2023transformers}
\textsc{Von~Oswald, J.}, \textsc{Niklasson, E.}, \textsc{Randazzo, E.}, \textsc{Sacramento, J.}, \textsc{Mordvintsev, A.}, \textsc{Zhmoginov, A.} and \textsc{Vladymyrov, M.} (2023).
\newblock Transformers learn in-context by gradient descent.
\newblock In \textit{International Conference on Machine Learning}. PMLR.

\bibitem[{Wang et~al.(2022)Wang, Wei, Schuurmans, Le, Chi, Narang, Chowdhery and Zhou}]{wang2022self}
\textsc{Wang, X.}, \textsc{Wei, J.}, \textsc{Schuurmans, D.}, \textsc{Le, Q.}, \textsc{Chi, E.}, \textsc{Narang, S.}, \textsc{Chowdhery, A.} and \textsc{Zhou, D.} (2022).
\newblock Self-consistency improves chain of thought reasoning in language models.
\newblock \textit{arXiv preprint arXiv:2203.11171} .

\bibitem[{Wang et~al.(2023)Wang, Wei, Schuurmans, Le, Chi, Narang, Chowdhery and Zhou}]{wangSelfConsistencyImprovesChain2023}
\textsc{Wang, X.}, \textsc{Wei, J.}, \textsc{Schuurmans, D.}, \textsc{Le, Q.}, \textsc{Chi, E.}, \textsc{Narang, S.}, \textsc{Chowdhery, A.} and \textsc{Zhou, D.} (2023).
\newblock Self-{{Consistency Improves Chain}} of {{Thought Reasoning}} in {{Language Models}}.

\bibitem[{Wei et~al.(2022)Wei, Wang, Schuurmans, Bosma, Xia, Chi, Le, Zhou et~al.}]{wei2022chain}
\textsc{Wei, J.}, \textsc{Wang, X.}, \textsc{Schuurmans, D.}, \textsc{Bosma, M.}, \textsc{Xia, F.}, \textsc{Chi, E.}, \textsc{Le, Q.~V.}, \textsc{Zhou, D.} \textsc{et~al.} (2022).
\newblock Chain-of-thought prompting elicits reasoning in large language models.
\newblock \textit{Advances in neural information processing systems} \textbf{35} 24824--24837.

\bibitem[{Wen et~al.(2024)Wen, Zhang, Lin and Zhang}]{wen2024sparse}
\textsc{Wen, K.}, \textsc{Zhang, H.}, \textsc{Lin, H.} and \textsc{Zhang, J.} (2024).
\newblock From sparse dependence to sparse attention: Unveiling how chain-of-thought enhances transformer sample efficiency.
\newblock \textit{arXiv preprint arXiv:2410.05459} .

\bibitem[{Wu et~al.(2023)Wu, Zou, Chen, Braverman, Gu and Bartlett}]{wu2023many}
\textsc{Wu, J.}, \textsc{Zou, D.}, \textsc{Chen, Z.}, \textsc{Braverman, V.}, \textsc{Gu, Q.} and \textsc{Bartlett, P.~L.} (2023).
\newblock How many pretraining tasks are needed for in-context learning of linear regression?
\newblock \textit{arXiv preprint arXiv:2310.08391} .

\bibitem[{Wu et~al.(2024)Wu, Sun, Li, Welleck and Yang}]{wuInferenceScalingLaws2024}
\textsc{Wu, Y.}, \textsc{Sun, Z.}, \textsc{Li, S.}, \textsc{Welleck, S.} and \textsc{Yang, Y.} (2024).
\newblock Inference {{Scaling Laws}}: {{An Empirical Analysis}} of {{Compute-Optimal Inference}} for {{Problem-Solving}} with {{Language Models}}.

\bibitem[{Yang et~al.(2024)Yang, Huang, Liang and Chi}]{yangInContextLearningRepresentations2024}
\textsc{Yang, T.}, \textsc{Huang, Y.}, \textsc{Liang, Y.} and \textsc{Chi, Y.} (2024).
\newblock In-{{Context Learning}} with {{Representations}}: {{Contextual Generalization}} of {{Trained Transformers}}.

\bibitem[{Yao et~al.(2024)Yao, Yu, Zhao, Shafran, Griffiths, Cao and Narasimhan}]{yao2024tree}
\textsc{Yao, S.}, \textsc{Yu, D.}, \textsc{Zhao, J.}, \textsc{Shafran, I.}, \textsc{Griffiths, T.}, \textsc{Cao, Y.} and \textsc{Narasimhan, K.} (2024).
\newblock Tree of thoughts: Deliberate problem solving with large language models.
\newblock \textit{Advances in Neural Information Processing Systems} \textbf{36}.

\bibitem[{Yue et~al.(2024)Yue, Zhuang, Bai, Hui, Jagerman, Zeng, Qin, Wang, Wang and Bendersky}]{yue2024inference}
\textsc{Yue, Z.}, \textsc{Zhuang, H.}, \textsc{Bai, A.}, \textsc{Hui, K.}, \textsc{Jagerman, R.}, \textsc{Zeng, H.}, \textsc{Qin, Z.}, \textsc{Wang, D.}, \textsc{Wang, X.} and \textsc{Bendersky, M.} (2024).
\newblock Inference scaling for long-context retrieval augmented generation.
\newblock \textit{arXiv preprint arXiv:2410.04343} .

\bibitem[{Zhang et~al.(2025)Zhang, Meng and Cao}]{zhang2025transformer}
\textsc{Zhang, C.}, \textsc{Meng, X.} and \textsc{Cao, Y.} (2025).
\newblock Transformer learns optimal variable selection in group-sparse classification.
\newblock \textit{arXiv preprint arXiv:2504.08638} .

\bibitem[{Zhang and Chen(2024)}]{zhang_o1_inference_scaling_laws}
\textsc{Zhang, H.} and \textsc{Chen, C.} (2024).
\newblock Test-time compute scaling laws.
\newblock \url{https://github.com/hughbzhang/o1_inference_scaling_laws}.

\bibitem[{Zhang et~al.(2023)Zhang, Frei and Bartlett}]{zhang2023trained}
\textsc{Zhang, R.}, \textsc{Frei, S.} and \textsc{Bartlett, P.~L.} (2023).
\newblock Trained transformers learn linear models in-context.
\newblock \textit{arXiv preprint arXiv:2306.09927} .

\bibitem[{Zheng et~al.(2024)Zheng, Yin, Xie, Sun, Huang, Yu, Cao, Kozyrakis, Stoica, Gonzalez et~al.}]{zheng2024sglang}
\textsc{Zheng, L.}, \textsc{Yin, L.}, \textsc{Xie, Z.}, \textsc{Sun, C.}, \textsc{Huang, J.}, \textsc{Yu, C.~H.}, \textsc{Cao, S.}, \textsc{Kozyrakis, C.}, \textsc{Stoica, I.}, \textsc{Gonzalez, J.~E.} \textsc{et~al.} (2024).
\newblock Sglang: Efficient execution of structured language model programs.
\newblock \textit{arXiv preprint arXiv:2312.07104} .

\bibitem[{Zhou et~al.(2023)Zhou, Yan, Shlapentokh-Rothman, Wang and Wang}]{zhou2023language}
\textsc{Zhou, A.}, \textsc{Yan, K.}, \textsc{Shlapentokh-Rothman, M.}, \textsc{Wang, H.} and \textsc{Wang, Y.-X.} (2023).
\newblock Language agent tree search unifies reasoning acting and planning in language models.
\newblock \textit{arXiv preprint arXiv:2310.04406} .

\bibitem[{Zou et~al.(2022)Zou, Wu, Braverman, Gu and Kakade}]{zou2022risk}
\textsc{Zou, D.}, \textsc{Wu, J.}, \textsc{Braverman, V.}, \textsc{Gu, Q.} and \textsc{Kakade, S.} (2022).
\newblock Risk bounds of multi-pass sgd for least squares in the interpolation regime.
\newblock \textit{Advances in Neural Information Processing Systems} \textbf{35} 12909--12920.

\end{thebibliography}

\bibliographystyle{ims}

\newpage

\appendix 

\section{Experiment Details}\label{appdix:exp_details}
\subsection{Experiment Settings}
\paragraph{Details for Figure~\ref{fig:gsm8k_sparse_comp}:} We evaluate real-world LLM using the GSM8K dataset \citep{cobbe2021training}, employing SGLang \citep{zheng2024sglang} as our inference framework; and use synthetic data with our theoretical framework to simulate practical decoding procedures. The experimental configurations are as follows:
\begin{itemize}

\item \textbf{LLM Performance on GSM8K with Varying Sample Number}: We employ Llama3.1-8b \citep{dubey2024llama} with an 8-shot chain-of-thought prompt following \citep{wei2022chain}. For each question, we generate 256 potential answers using decoding temperature of $1.0$. We implement an oracle reward model that perfectly validates answer correctness, and set the temperature to 0.0 for greedy search.

\item \textbf{LLM Performance on GSM8K with Varying Reasoning Lengths}: Using Llama3.1-8b-instruct, we analyze performance across different reasoning lengths, defined as the token consumption per inference call. Following \citep{zhang_o1_inference_scaling_laws}, we incorporate token budgets into the prompts to constrain the model's responses. For each prompt, we generate 64 potential answers and create $10$ random permutations of these answers. We define the reasoning length $T$ as the sum of token consumption across all prompts, and for multiple samples ($N > 1$), we average the token counts over $N$. The accuracy-tokens curves are plotted using transparent scattered points for individual permutations and fitted with trend lines. The prompt templates are provided in \ref{appdix:prompt}.

\item \textbf{IC-Linear Regression with Continuous Coefficients}: We configure the parameters as $n = 36, d = 72, \eta = 1\times 10^{-3}, \sigma_\epsilon^2 = 1,\sigma^2 = 4$, and present results at gradient descent iterations $t = 950$.

\item \textbf{IC-Linear Regression with Binary Coefficients}: We set the parameters to $n = 4, k = 1, d = 48, \eta = \frac{1}{4}, \sigma_\epsilon^2 = 0.25$.
\end{itemize}

\paragraph{Details for Figure~\ref{fig:add_exp}:} we conduct numerical experiments on in-context linear regression with continuous coefficients (\textit{above a-d}) and binary coefficients (\textit{below e-h}), each setting we repeat $5$ times, details are as follows:
\begin{itemize}

\item \textbf{Continuous case}: we set the parameters to $d = 72, n = 36, \eta = 10^{-3}$, and present results at gradient descent iterations $t = 950$.

\item \textbf{Binary case}: In \autoref{fig:add_exp} (e): we set $n = 40, k = 2, d = 30, \eta = \frac{1}{40}, \sigma_{\epsilon} = 0.1$; in (f): we set $n = 1, k = 1, d = 10, \eta = 1, \sigma_{\epsilon} = 0$; in (g): we set $n = 1, k = 1, d = 2, \eta = 1, \sigma_{\epsilon} = 0.1$; in (h): we set $n = 5, k = 1, d = 10, \eta = 1, \sigma_{\epsilon} = 0.1$.

\item \textbf{Fitting accuracy with varying reasoning length $T$}: for $N = 1$, we fit the curve with 
    \begin{equation*} \begin{aligned}
        \mathtt{Acc}(T,1) &\approx \alpha_1 - \beta_1 e^{-\mu_1 T},
    \end{aligned} \end{equation*}
    for $N > 1$, we first approximate $\Delta_T \approx \mathtt{Acc}(T,1) \approx \alpha_1 - \beta_1 e^{-\nu_1 T}$, where $(\alpha_1,\beta_1,\nu_1)$ are obtained in case $N=1$, then fit curve with
    \begin{equation*} \begin{aligned}
        \mathtt{Acc}(T,N) &\approx \alpha_N - \beta_N e^{-\mu_N \Delta_T^2}.
    \end{aligned} \end{equation*}
\end{itemize}

\paragraph{Details for Figure~\ref{fig:gsm8k_math_fit}:} We conduct experiments using GSM8K and a curated subset of the MATH dataset \citep{hendrycks2021measuring}, details are as follows:

\begin{itemize}

\item \textbf{MATH Dataset Subset}: We filter the MATH to extract problems at level 1 with integer answers, yielding a subset of 309 problems.

\item We maintain consistent experimental settings with the GSM8K reasoning length evaluation as in Figure~\ref{fig:gsm8k_sparse_comp}, utilizing Llama3.1-8b-instruct with a decoding temperature of $1.0$. To facilitate the fitting process in Algorithm~\ref{alg:predicte_acc}, we apply a scaling factor of $\frac{1}{10^5}$ to the token count, $T' = \frac{T}{10^5}$.
\end{itemize}
\subsection{Low-Cost-to-High Prediction algorithm}

\label{subsec:predicte_acc}
\begin{algorithm}[H]
\caption{Low-Cost-to-High Prediction algorithm}\label{alg:predicte_acc}
\textbf{Part 1:} Obtain $(\gamma , \kappa, \mu)$ in Eq~\ref{eq:fit_p_n}

\begin{algorithmic}[1]
  \STATE \textbf{Input:} Data at varying cost $\{\mathtt{Acc}^{(e)}(T_i,N_j)\}$,$T_i \in \cT^{(e)}, N_j \in \cN^{(e)}$;
  \STATE $(\gamma',\kappa',\mu) \leftarrow$ Fit Eq~\ref{eq:fit_p_n_1} with $\{\mathtt{Acc}^{(e)}(T_i,1)\}_{\cT^{(e)}}$ 
  \STATE $(\alpha_{T_1},\beta_{T_1} ,\Delta_{T_1}) \leftarrow$ Fit Eq~\ref{eq:fit_acc_fix_t} with $\{\mathtt{Acc}^{(e)}(T_1,N_j)\}$ 
  \STATE $(\alpha_{T_2},\beta_{T_2} ,\Delta_{T_2}) \leftarrow$ Fit Eq~\ref{eq:fit_acc_fix_t} with $\{\mathtt{Acc}^{(e)}(T_2,N_j)\}$ 
  \STATE $(\gamma , \kappa)\leftarrow $ Fit Eq~\ref{eq:fit_p_n} with $\{(\Delta_{T_0}, \mu), (\Delta_{T_1}, \mu)\}$
  \STATE \textbf{Return} $(\gamma , \kappa, \mu)$
\end{algorithmic}

\textbf{Part 2:} Predict accuracy with $(\gamma , \kappa, \mu)$ and low cost data

\begin{algorithmic}[1]
  \STATE \textbf{Input:} $(\gamma , \kappa, \mu)$ in Eq~\ref{eq:fit_p_n},$\cD_N = \{\mathtt{Acc}^{(e)}(T_1,N), \mathtt{Acc}^{(e)}(T_2,N)\}$;
  \STATE $\Delta_{T_i} \leftarrow \gamma - \kappa e^{-\mu T_i}, i = 1,2$ \hfill\COMMENT{//Eq~\ref{eq:fit_p_n}}
  \STATE $(\alpha_{N}, \beta_{N}) \leftarrow$ Fit Eq~\ref{eq:fit_acc_fix_n} with two data points: $\{(\mathtt{Acc}^{(e)}(T_1,N), P_{T_1}), (\mathtt{Acc}^{(e)}(T_2,N), P_{T_2})\}$ 
  \STATE Use Eq~\ref{eq:fit_p_n},Eq~\ref{eq:fit_acc_fix_n} with obtained $(\gamma , \kappa, \mu)$ and $(\alpha_{N}, \beta_{N})$ to predict data with varying $T$.
\end{algorithmic}
\end{algorithm}

The core ideal of Algorithm~\ref{alg:predicte_acc} is to first determine $(\gamma, \kappa, \mu)$ in Equation \ref{eq:fit_p_n}. Subsequently, we can compute $\Delta_T$ and Equation \ref{eq:fit_acc_fix_t} using two additional parameters $\alpha_{N}, \beta_{N}$, obtainable from only two data points. Notably, since we use ${\mathtt{Acc}^{(e)}(T_0,N_j)}$ and ${\mathtt{Acc}^{(e)}(T_1,N_j)}$ during the initial parameter estimation (Algorithm~\ref{alg:predicte_acc} Part 1, lines 3-4), no additional data is required for subsequent predictions in part 2.

\section{Proofs for Section~\ref{sec:_test_time_computing_framework}}

\subsection{Proof of Proposition~\ref{prop:_construction}}\label{subsec:_proof_prop_construction} 

\begin{proof}[Proof of Proposition~\ref{prop:_construction}]
    The proof is based on the proof of Theorem 3.2 of \cite{huang2025transformers}. 
    We take the desired parameter $\theta_{\mathtt{GD}} = \{\mathbf{V}_{\mathtt{GD}}, \Wb_{\mathtt{GD}}\}$ as following, 
    \begin{align}
        \mathbf{V}_{\mathtt{GD}} := \left(\begin{matrix}
        \boldsymbol{0}&\boldsymbol{0} & \boldsymbol{0} &\boldsymbol{0} \\
        \boldsymbol{0}&0&\boldsymbol{0}& 0\\
        -\eta\cdot \Ib_d&\boldsymbol{0}&\boldsymbol{0}&\boldsymbol{0}\\
        \boldsymbol{0}&0&\boldsymbol{0}&0
    \end{matrix}\right),\quad \mathbf{W}_{\mathtt{GD}} := \left(\begin{matrix}
        \boldsymbol{0}&\boldsymbol{0} & \Ib_d &\boldsymbol{0} \\
        \boldsymbol{0}&0&\boldsymbol{0}& -1\\
        \boldsymbol{0}&\boldsymbol{0}&\boldsymbol{0}&\boldsymbol{0}\\
        \boldsymbol{0}&0&\boldsymbol{0}&0
    \end{matrix}\right),
    \end{align}
    Then one can check that when inputting $\Hb_{\ell}$ in the form of
    \begin{align}
        \Hb_{\ell} = \left(\begin{matrix}
        \xb_1&\cdots & \xb_n &\boldsymbol{0} &\cdots & \boldsymbol{0}\\
        y_1&\cdots&y_n& 0 &\cdots &0\\
        \boldsymbol{0}&\cdots&\boldsymbol{0}&\wb_0&\cdots &\wb_{\ell}\\
        0&\cdots&0&1&\cdots & 1
    \end{matrix}\right),
    \end{align}
    the output embedding of the transformer at the last token is given by 
    \begin{align}
        (\widetilde{\Hb}_\ell)_{:,-1} = \left(\begin{matrix}
            \boldsymbol{0}\\
            0\\
            \widetilde{\wb}_{\ell}\\
            1
        \end{matrix}\right),\quad \widetilde{\wb}_{\ell} = \wb_{\ell} - \frac{\eta}{n}\cdot \Xb^\top (\Xb \wb_{\ell} - \yb).
    \end{align}
    Thus if we take the sampling algorithm $\mathtt{Sampling\_Alg}(\cdot)$ satisfying the form of
    \begin{align}
        \mathtt{Sampling\_Alg}(\hb) = \delta_{\boldsymbol{0}}(\cdot)\otimes \delta_0(\cdot)\otimes p\left(\cdot\middle|(\hb)_{d+2:2d+1}\right) \otimes \delta_1(\cdot),
    \end{align}
    for some conditional distribution $p:\RR^d\mapsto\cP(\RR^d)$, then the embedding of the next token would be 
    \begin{align}
        \hb_{\ell+1} = \left(\begin{matrix}
            \boldsymbol{0}\\
            0\\
            \wb_{\ell}\\
            1
        \end{matrix}\right),\quad \wb_{\ell+1} \sim p\left(\,\cdot\,\middle|\wb_{\ell} - \frac{\eta}{n}\cdot \Xb^\top\big(\Xb \wb_{\ell}-  \yb\big)\right),
    \end{align}
    by Definition~\ref{def:_inference_mechanism}. 
    Iterating the above argument from $\ell=0$ to $t-1$ completes the proof of Proposition~\ref{prop:_construction}.
\end{proof}

\subsection{Special Case: Vanilla Multi-step Grandient Descent with CoT}\label{subsec:_special_case}

One special case of Proposition~\ref{prop:_construction} is a transformer that explicitly performs standard multi-step gradient descent (GD) \citep{huang2025transformers}, i.e., $p(\cdot|x) = \delta_x(\cdot)$, so that the final prediction of the regression coefficient after $t$ reasoning steps is given by
\begin{align}
    \wb_{\mathtt{GD}} :=(\Hb_t)_{d+2:2d+1, n+t} = \left(\Ib_d - \left(\Ib_d - \frac{\eta}{n}\cdot \Xb^\top\Xb\right)^t\right)\Xb^\top (\Xb\Xb^\top)^{-1}\yb.\label{eq:_w_gd}
\end{align}
We note that \cite{huang2025transformers} considers transformer CoT reasoning for in-context-linear regression with \emph{noiseless} labels, but here we allow the existence of label noise.
\section{Theoretical Analysis in Section~\ref{sec:continuous_analysis} Continued}\label{subsec:_continuous_continued}

\begin{theorem}[Excess risk of vanilla multi-step GD with CoT: general covariance matrix]\label{thm:_gd_continuous}
    Under Assumption~\ref{ass:_data_distribution}, taking the step size $\eta\leq \|\Hb\|_2^{-1}$ and CoT length $t$, with probability at least $1-1/\mathrm{poly}(n)$, it holds that 
    \begin{align}
        \mathbb{E}_{\boldsymbol{\epsilon},\wb^{\ast}}\left[\mathcal{E}(\wb_{\mathtt{GD}})\right]\lesssim \omega^2\cdot\Bigg(\frac{\widetilde{\lambda}^2}{n^2}\cdot \sum_{1\leq i\leq k^{\ast}} \frac{1}{\lambda_i}+ \sum_{k^{\ast}<i\leq d}\lambda_i\Bigg) + \sigma^2_{\epsilon}\cdot\Bigg(\frac{k^{\ast}}{n} + \frac{n}{\widetilde{\lambda}^2}\cdot\sum_{k^{\ast}<i\leq d}\lambda_i^2\Bigg),
    \end{align}
    where the quantities are as follows 
    \begin{align}
        k^{\ast} := \min\big\{k: n\lambda_{k+1}\leq\frac{n}{\eta t} + \sum_{k<i\leq d} \lambda_i\big\},\quad 
        \widetilde{\lambda}:=\frac{n}{\eta t} + \sum_{k^{\ast} < i\leq d}\lambda_i.
        \label{eq:_k_lambda_vanilla}
    \end{align}
\end{theorem}

\begin{proof}[Proof of Theorem~\ref{thm:_gd_continuous}]
    Please refer to Appendix~\ref{subsec:_proof_thm_gd_continuous} for a proof of Theorem~\ref{thm:_gd_continuous}.
\end{proof}

\begin{theorem}[Excess risk of noisy multi-step noisy GD with CoT and ensembling]\label{thm:_average_continuous}Under Assumption~\ref{ass:_data_distribution}, taking the step size $\eta\leq \|\Hb\|_2^{-1}$ and CoT length $t$, we have the following risk bounds for $\wb_{\mathtt{avg}}$.
\begin{enumerate}[left=0mm]
\item Constant noise transformation function (Example~\ref{exp:_constant}): with probability at least $1-1/\mathrm{poly}(n)$, 
    \begin{align}
        \mathbb{E}_{\boldsymbol{\epsilon},\wb^{\ast}}\left[\mathcal{E}(\wb_{\mathtt{GD}})\right]&\lesssim \omega^2\cdot\Bigg(\frac{\widetilde{\lambda}^2}{n^2}\cdot \sum_{1\leq i\leq k^{\ast}} \frac{1}{\lambda_i} + \sum_{k^{\ast}<i\leq d}\lambda_i\Bigg) + \frac{\vartheta_{n,t}}{N},
    \end{align}
    where the quantities $k^{\ast}$ and $\widetilde{\lambda}$ are defined as the same as \eqref{eq:_k_lambda_vanilla}, and $\vartheta_t$ is defined as 
    \begin{align}
        \vartheta_{n,t}:=\sigma^2d \cdot \left(t  \cdot \sqrt{\frac{   r(\Hb) \vee \log (\mathrm{poly(n)})}{n}} +  \frac{1}{\eta}\right),\label{eq:_vartheta}
    \end{align}
    with $r(\Hb) = \tr(\Hb) / \norm{\Hb}_2$ being the effective rank of $\Hb$.
\item Linear noise transformation function (Example~\ref{exp:_linear}): taking the noise variance $\sigma^2\asymp d^{-1}$ and the reasoning path length $t>\sigma^{-2}\cdot \log 2$, with probability at least $1-1/\mathrm{poly}(n)$, 
    \begin{align}
        \mathbb{E}_{\boldsymbol{\epsilon},\wb^{\ast},\bxi}\left[\mathcal{E}(\wb_{\mathtt{avg}})\right]\lesssim \omega^2\!\cdot\!\Bigg(\frac{(\widetilde{\lambda}^{\mathrm{Bias}})^2}{n^2}\cdot \!\!\!\!\sum_{1\leq i\leq k^{\ast}_{\mathrm{Bias}}} \!\frac{1}{\lambda_i} + \!\!\!\!\sum_{k^{\ast}_{\mathrm{Bias}}<i\leq d}\!\lambda_i\Bigg) + \sigma^2_{\epsilon}\!\cdot\!\Bigg(\frac{k^{\ast}_{\mathrm{Var}}}{n} + \frac{n}{(\widetilde{\lambda}^{\mathrm{Var}})^2}\cdot\!\!\!\!\sum_{k^{\ast}_{\mathrm{Var}}<i\leq d}\!\lambda_i^2\Bigg) + \frac{\varsigma_n}{N},
    \end{align}
    where the quantities $\widetilde{\lambda}^{\mathrm{Bias}}$, $\widetilde{\lambda}^{\mathrm{Var}}$, $k^{\ast}_{\mathrm{Bias}}$, and $k^{\ast}_{\mathrm{Var}}$ are defined as following respectively,
    \begin{align}
        k^{\ast}_{(\diamondsuit)} := \min\Bigg\{k\in[d]: n\lambda_{k+1}\leq\widetilde{\lambda}_{\mathrm{effect}}^{(\diamondsuit)} + \!\!\sum_{k<i\leq d} \lambda_i\Bigg\},\quad \widetilde{\lambda}^{(\diamondsuit)}:=\widetilde{\lambda}_{\mathrm{effect}}^{(\diamondsuit)}   +\!\!\! \sum_{k^{\ast} < i\leq d}\lambda_i, \quad \mathrm{for}\,\,(\diamondsuit)\in\{\mathrm{Bias}, \mathrm{Var}\},
    \end{align}
    with $\widetilde{\lambda}_{\mathrm{effect}}^{\mathrm{Bias}} $ and $\widetilde{\lambda}_{\mathrm{effect}}^{\mathrm{Var}} $ defined as,
    \begin{align}
        \widetilde{\lambda}_{\mathrm{effect}}^{\mathrm{Bias}}  := \frac{n}{\eta}\cdot \left(\frac{2}{t} + \frac{\sigma^2}{1-\sigma^2}\left(1+\frac{2}{t}\right)\right),\quad \widetilde{\lambda}_{\mathrm{effect}}^{\mathrm{Var}}  := \frac{\sigma^2n}{(1-\sigma^2)\eta},
    \end{align}
    and the quantity $\varsigma_n$, is given by 
    \begin{align}
        \varsigma_{n}:= \left( \frac{\eta\sigma_{\epsilon}^2d}{n\sigma^2}\cdot \Tr(\Hb)  + \omega^2\right)\cdot \|\Hb\|_2.\label{eq:_varsigma}
    \end{align}
    \end{enumerate}
\end{theorem}

\begin{proof}[Proof of Theorem~\ref{thm:_average_continuous}]
    Please refer to Appendix~\ref{subsec:_proof_thm_average_continuous} for a proof of Theorem~\ref{thm:_average_continuous}.
\end{proof}

\begin{corollary}[Theorem~\ref{cor:_average_continuous} restated]\label{cor:_average_continuous_restate}
     Under the same assumptions and setups as in Theorem~\ref{thm:_average_continuous},  additionally assuming that the spectrum of $\Hb$ satisfies polynomially decaying, i.e., $\lambda_i = i^{-(r+1)}$ for some constant $r\geq 0$,  we have the following results.
     \begin{enumerate}[left=0mm]
         \item Constant noise transformation function (Example~\ref{exp:_constant}): taking the  reasoning path length $t\lesssim \eta (r+1)^{(r+2)/2}n^{(r+1)/2}$ and the sampling path number 
         \begin{align}
             N\geq N_{\mathrm{c}}:=\left(\sigma^2d\cdot \left(t  \cdot \sqrt{\frac{   r(\Hb) \vee \log (\mathrm{poly(n)})}{n}} +  \frac{1}{\eta}\right)\right)\cdot \left(\omega^2\cdot\left(\frac{1}{t\eta}\right)^{\frac{r}{r+1}} + \frac{\sigma_{\epsilon}^2}{n}\cdot (t\eta)^{\frac{1}{r+1}}\right)^{-1},\label{eq:_n_c}
         \end{align}
         then with probability at least $1-1/\mathrm{poly}(n)$, 
             \begin{align}
        \mathbb{E}\left[\mathcal{E}(\wb_{\mathtt{avg}})\right]\lesssim \omega^2\cdot\left(\frac{1}{t\eta}\right)^{\frac{r}{r+1}} + \frac{\sigma_{\epsilon}^2}{n}\cdot (t\eta)^{\frac{1}{r+1}}.
    \end{align}
         \item Linear noise transformation function (Example~\ref{exp:_linear}): taking the noise variance $\sigma^2\asymp d^{-1}$, the reasoning path length $\sigma^{-2}\cdot \log 2<t$, and the sampling path number 
         \begin{align}
             N\geq N_{l}&:= \left( \omega^2+\frac{\eta\sigma_{\epsilon}^2d\cdot \Tr(\Hb)}{n\sigma^2} \right)\cdot \|\Hb\|_2\cdot \left(\omega^2\cdot\left(\frac{\sigma^2}{\eta\cdot (1-\sigma^2)}\right)^{\frac{r}{r+1}} + \frac{\sigma_{\epsilon}^2}{n}\cdot \left(\frac{\eta\cdot (1-\sigma^2)}{\sigma^2}\right)^{\frac{1}{r+1}}\right)^{-1}\\
             &\asymp \left(\omega^2+\frac{\sigma^2_{\epsilon}}{n}\cdot \eta d^{2}\right)\cdot \left(\omega^2 \cdot \left(\frac{1}{\eta d}\right)^{\frac{r}{r+1}} + \frac{\sigma_{\epsilon}^2}{n}\cdot (\eta d)^{\frac{1}{r+1}}\right)^{-1}\label{eq:_n_l}
         \end{align}
         then with probability at least $1-1/\mathrm{poly}(n)$, 
        \begin{align}
            \mathbb{E}\left[\mathcal{E}(\wb_{\mathtt{avg}})\right]\lesssim\omega^2\!\cdot\! \widetilde{\lambda}^{\frac{r}{r+1}} +   \frac{\sigma_{\epsilon}^2}{n}\cdot \left(\frac{\eta(1-\sigma^2)}{\sigma^2}\right)^{\frac{1}{r+1}},
    \end{align}
    where $\widetilde{\lambda}:= \eta^{-1}(2t^{-1}+\sigma^2(1+2t^{-1})/(1-\sigma^2))$.
     \end{enumerate}
     Here the expectation is taken with respect to $\boldsymbol{\epsilon}$, $\wb^{\ast}$, and the sampling noise $\bxi$ across different reasoning steps and paths.
\end{corollary}

\begin{remark}\label{rmk:_regime}
    Under the parameter regime of \eqref{eq:_regime}, i.e., 
    \begin{align}
        \omega\asymp 1, \quad \sigma_{\epsilon}\asymp 1, \quad n\asymp \eta d,\quad \sigma^{2}\asymp d^{-1},\label{eq:_regime_appendix}
    \end{align}
    we can obtain further simplifications of the above result. 
    Concretely, for the linear NFT setup, the number of sample paths needed is given by 
    \begin{align}
        N\geq N_l \asymp \big(\omega^2+\sigma_{\epsilon}^2d\big)\cdot \left(\big(\omega^2+\sigma_{\epsilon}^2\big)\cdot\left(\frac{1}{\eta d}\right)^{\frac{r}{r+1}}\right)^{-1}\asymp d^{\frac{2r+1}{r+1}},
    \end{align}
    and the excess risk bound is explicitly calculated by 
    \begin{align}
        \mathbb{E}_{\boldsymbol{\epsilon},\wb^{\ast},\bxi}\left[\mathcal{E}(\wb_{\mathtt{avg,linear}})\right] \lesssim \big(\omega^2+\sigma_{\epsilon}^2\big)\cdot \left(\eta d\right)^{-\frac{r}{r+1}}\asymp d^{-\frac{r}{r+1}}.
    \end{align}
    In contrast, we can also calculate that the risk bounds for either GD or ensemble with constant NFT is then given by 
    \begin{align}
        \mathbb{E}_{\boldsymbol{\epsilon},\wb^{\ast}}\left[\mathcal{E}(\wb_{\mathtt{GD}})\right], \mathbb{E}_{\boldsymbol{\epsilon},\wb^{\ast},\bxi}\left[\mathcal{E}(\wb_{\mathtt{avg,const}})\right] \lesssim \widetilde{t}^{\frac{1}{r+1}}\cdot \big(\omega^2+\sigma_{\epsilon}^2\big)\cdot \left(\eta d\right)^{-\frac{r}{r+1}}\asymp \widetilde{t}^{\frac{1}{r+1}}\cdot d^{-\frac{r}{r+1}}.
    \end{align}
    where $\widetilde{t} = \sigma^2\cdot t$ is the scaled reasoning length, satisfying $\widetilde{t}\lesssim d^{(r-1)/2}$.
\end{remark}

\section{Proofs for In-context Linear Regression with Continuous Coefficient (Section~\ref{sec:continuous_analysis})}

We denote the sample covariance matrix of the in-context data as $\boldsymbol{\Sigma} := n^{-1}\Xb^\top\Xb\in\RR^{d\times d}$, and we define the gram matrix of the in-context data as $\Ab := \Xb\Xb^\top\in\RR^{n\times n}$. 
Our results in this section depend on the following standard technical assumptions on the in-context data and task distributions.

\begin{assumption}[Data distribution]\label{ass:_data_distribution}
    We assume the following on the in-context data distribution $\cD_{\wb^{\ast}}$: 
    \begin{enumerate}[left=0mm]
        \item The columns of $\Hb^{-1/2}\xb$ are independent and $1$-subGaussian; 
        \item The labels are generated according to $y = \xb^\top\wb^{\ast} + \epsilon$, where the label noise $\epsilon$ is independent of $\xb$ and  satisfies $\mathbb{E}[\epsilon] = 0$ and $\mathbb{E}[\epsilon^2] = \sigma_{\epsilon}^2$ for some constant $\sigma_{\epsilon}>0$;
        \item The true coefficient $\wb^{\ast}$ follows the Gaussian prior, i.e., $\wb^{\ast}\sim \cN(\boldsymbol{0},\omega^2\cdot\Ib_d)$ for some constant $\omega>0$.
    \end{enumerate}
\end{assumption}

\subsection{Proof of Theorem~\ref{thm:_gd_continuous}}\label{subsec:_proof_thm_gd_continuous}

\begin{proof}[Proof of Theorem~\ref{thm:_gd_continuous}]
    This follows from the same arguments as in the proof of Theorem 4.3 in \cite{zou2022risk}.
    We refer the readers to their proofs for seek of simplicity.
\end{proof}

\subsection{Proof of Proposition~\ref{cor:_gd_continuous_polynomial_decay}}\label{subsec:proof_cor_gd_continuous_polynomial_decay}

\begin{proof}[Proof of Proposition~\ref{cor:_gd_continuous_polynomial_decay}] 
As a special case of Theorem~\ref{thm:_gd_continuous}, we begin by figuring out the optimal index $k^{\ast}$.
    We are going to prove that under the conditions in Proposition~\ref{cor:_gd_continuous_polynomial_decay}, the optimal index is given by 
    \begin{align}
        k^{\ast} = (\eta t)^{\frac{1}{r+1}} -1.
    \end{align}
    Notice that here without loss of generality we assume that the above quantity is an integer since otherwise we can twist $\eta$ (which is continuous) a little bit to make it an integer.
    And also we notice that the above $k^{\ast}\leq d$ due to our condition on $t$ in Proposition~\ref{cor:_gd_continuous_polynomial_decay}.
    To prove this, it suffices to check that the above $k^{\ast}$ is the smallest one satisfying the constraint in \eqref{eq:_k_lambda_vanilla}.
    To show it satisfies the constraint, consider 
    \begin{align}
        n\lambda_{k^{\ast}+1} = \frac{n}{(k^{\ast}+1)^{r+1}}= \frac{n}{\eta t}\leq \frac{n}{\eta t} + \sum_{k<i\leq d}\lambda_i.
    \end{align}
    To show that it is the smallest one satisfying the constraint, let's consider the other side of the inequality for $k^{\ast}-1$.
    We have the following calculations.
    On the one hand, we have
    \begin{align}
        n\lambda_{k^{\ast}} = \frac{n}{\left((\eta t)^{\frac{1}{r+1}} - 1\right)^{r+1}} = \frac{n}{\eta t}\cdot\frac{1}{\left(1 - (\eta t)^{-\frac{1}{r+1}}\right)^{r+1}} \geq \frac{n}{\eta t}\cdot \left(1+ (r+1)\cdot\left(\frac{1}{\eta t}\right)^{\frac{1}{r+1}} \right),\label{eq:_proof_cor_gd_1}
    \end{align}
    where the last inequality follows using $\log(1+x)\leq x$ and $\exp(x)\geq 1+x$ to obtain the following argument 
    \begin{align}
        \frac{1}{\left(1 - (\eta t)^{-\frac{1}{r+1}}\right)^{r+1}} = \exp\left(-(r+1) \log\left(1 - (\eta t)^{-\frac{1}{r+1}}\right)\right)\geq \exp\left((r+1)(\eta t)^{-\frac{1}{r+1}}\right) \geq 1+ (r+1)(\eta t)^{-\frac{1}{r+1}}.
    \end{align}
    On the other hand, we have that 
    \begin{align}
        \frac{n}{\eta t} + \sum_{k^{\ast}-1<i\leq d}\lambda_i \leq \frac{n}{\eta t} + \sum_{i>k^{\ast}-1}\frac{1}{i^{r+1}}\leq \frac{n}{\eta t} + \frac{1}{\left((\eta t)^{\frac{1}{r+1}} - 1\right)^r} \lesssim\frac{n}{\eta t} +\left(\frac{1}{\eta t}\right)^{\frac{r}{r+1}}.\label{eq:_proof_cor_gd_2}
    \end{align}
    Now to see that $k^{\ast}-1$ does not satisfies the constraint, in view of \eqref{eq:_proof_cor_gd_1} and \eqref{eq:_proof_cor_gd_2}, it boils down to show that
    \begin{align}
        \frac{n}{\eta t}\cdot \left(1+ (r+1)\cdot\left(\frac{1}{\eta t}\right)^{\frac{1}{r+1}} \right) \geq \frac{n}{\eta t} +\left(\frac{1}{\eta t}\right)^{\frac{r}{r+1}},
    \end{align}
    which is equivalent to restricting the reasoning path length $t$ satisfying $t\leq  \eta\cdot (r+1)^{\frac{r+1}{2}} \cdot n^{\frac{r+1}{2}}$. 
    According to our condition on the reasoning path length $t$ in Proposition~\ref{cor:_gd_continuous_polynomial_decay}, this requirement does hold, and thus $k^{\ast}-1$ does not satisfy the constraint. 
    Therefore we have proved that $k^{\ast} = (\eta t)^{\frac{1}{r+1}} - 1$.
    
    With the $k^{\ast}$ in hand, we can then follow the same arguments as in the proof of Corollary 4.5 in \cite{zou2022risk} to obtain the final result. 
    This completes the proof of Proposition~\ref{cor:_gd_continuous_polynomial_decay}.
\end{proof}

\subsection{Proof of Theorem~\ref{thm:_average_continuous}}\label{subsec:_proof_thm_average_continuous}

\subsubsection{Proof for Example~\ref{exp:_constant}} 
\begin{proof}[Proof of Theorem~\ref{thm:_average_continuous} for Example~\ref{exp:_constant}]
Under this setting, each reasoning path is generated though the following iteration: 
\begin{align}
    \wb_{ t+1}^{(j)}&=\wb_{ t}^{(j)}  - \frac{\eta}{n}\Xb^\top(\Xb \wb_t^{(j)} - \yb)  + \bxi_t^{(j)}. 
\end{align}
Based on this, we define the expected path $\wb_{t}^{\texttt{GD}(\eta; \Xb,\yb)}$ and the fluctuation $\Delta_t^{(j)}$ iteratively as
\begin{align}
    \wb_{t+1}^{\texttt{GD}(\eta; \Xb,\yb)}  &= \wb_{t}^{\texttt{GD}(\eta; \Xb,\yb)} - \frac{\eta}{n}\Xb^\top(\Xb \wb_t^{\texttt{GD}(\eta; \Xb,\yb)} - \yb), \label{eq:_proof_gd_continuous_0} \\
    \Delta_{t+1}^{(j)}&=  \wb_t^{(j)} - \wb_{t}^{\texttt{GD}(\eta; \Xb,\yb)} \\ 
    &= (\Ib -\eta\bSigma )\Delta_{t}^{(j)}  +  \bxi_t^{(j)}. 
\end{align}
By this characterization, we see that $\{\Delta_t^{(j)}\}_{j \le N}$ is a sequence of  iid zero-mean random variable for fixed $t$. 
This expectation-fluctuation decomposition allows us to recast the risk of the sample averaged output as 
    \begin{align}
        \cE(\wb_t^{\texttt{avg}}) =  \cE(\wb_t^{\texttt{GD}(\eta; \Xb,\yb)}) + N^{-1} \EE\big[\norm{\Delta_t^{(1)}}_{\Hb}^2\big]. 
        \label{eq:risk_decompose_const_0}
    \end{align}
In Theorem~\ref{thm:_gd_continuous}, we have characterized the average-case risk of the gradient descent, therefore it suffices to study the fluctuation of a single reasoning path. 
In the sequel, we drop the superscript $j$ for simplicity. 
Define $\Sbb_t = \EE[\Delta_t \Delta_t^\top]$, then we have that 
\begin{align}
    \Sbb_{t+1} &= (\Ib - \eta \bSigma) \Sbb_t (\Ib - \eta \bSigma)^\top + \sigma^2 \Ib \\ 
    &= \sum_{j=0}^t \sigma^2 (\Ib - \eta \bSigma)^{2j},
\end{align}
where the last identity holds because of the deterministic initialization $\Sbb_0 = 0$. 
Now we have that 
\begin{align}
\EE[ \norm{\Delta_t^{(j)}}_{\Hb}^2] &=  \dotp{\Sbb_t}{\bSigma} + |\dotp{\Sbb_t}{\Hb - \bSigma}| \\
&\le  \Tr \Big(\sum_{j=0}^{t-1} \sigma^2 (\Ib - \eta \bSigma)^{2j} \bSigma\Big)  +  \Tr(\Sbb_t) \cdot  \norm{ \Hb - \bSigma}_2. \label{eq:fluc_decompose_const_0}
\end{align}
For the first term above, we have that $\sum_{j=0}^t(1-\eta\lambda)^{2j}\lambda \le 1/\eta$ for $\lambda \in [0,1/\eta]$. 
For the second term , we have by \citet[Theorem 9]{koltchinskii2017concentration} that there exists an event with probability $1-\delta$ over the randomness of $\Xb$, on which it holds that 
\begin{align}
    \norm{\Hb -\bSigma}_2 \lesssim  \sqrt{\frac{r(\Hb) \vee \log (1/\delta)}{n}}, 
\end{align}
where $r(\Hb) = \Tr(\Hb) / \norm{\Hb}_2$ is the effective rank of $\Hb$. 
And we have the trivial upper bound that $\Tr(\Sbb_t) \le \sigma^2 d\cdot t$. 
Plugging them into \eqref{eq:risk_decompose_const_0} and~\eqref{eq:fluc_decompose_const_0}, we get that
\begin{align}
\cE(\wb_t^{\texttt{avg}}) &\le \cE(\wb_t^{\texttt{GD}(\eta; \Xb,\yb)}) + N^{-1} \dotp{\Sbb_t}{ \Hb} \\ 
&\le   \cE(\wb_t^{\texttt{GD}(\eta; \Xb,\yb)}) + \frac{\sigma^2d }{N}\Big(t  \cdot \sqrt{\frac{   r(\Hb) \vee \log (1/\delta)}{n}} +  \frac{1}{\eta}\Big).
\end{align}
This concludes the proof of the theorem. 
\end{proof}

\subsubsection{Proof for Example~\ref{exp:_linear}}

Now we give the proof of Theorem~\ref{thm:_average_continuous} for Example~\ref{exp:_linear}. 
The proof relies on the following key lemmas.

\begin{lemma}[Error decomposition]\label{lem:_average_continuous_decomposition}
    The difference between $\wb_{\mathtt{avg}}$ and the true coefficient $\wb^{\ast}$ can be decomposed as following, 
    \begin{align}
        \left\|\wb_{\mathtt{avg}} - \wb^{\ast}\right\|_{\Hb}^2 \leq  \mathrm{Bias} + \mathrm{Variance} + \mathrm{Fluctuation},
    \end{align}
    where each of the three terms are defined as following, 
    \begin{align}
        \mathrm{Bias}:= \left\|\big(\Xb^\top \Gb^{-1}\Xb - \Ib_d\big)\wb^{\ast}\right\|_{\Hb}^2,\quad \mathrm{Variance} = \left\|\Xb^\top \Gb^{-1}\boldsymbol{\epsilon}\right\|_{\Hb}^2,\quad \mathrm{Fluctuation}=\left\|\frac{1}{N}\sum_{j=1}^N\Delta^{(j)}\right\|_{\Hb}^2,\label{eq:_decomposition}
    \end{align}
    with the matrix $\Gb\in\RR^{n\times n}$ and the vectors $\{\Delta^{(j)}\}_{j=1}^N$ defined as following,
    \allowdisplaybreaks
    \begin{align}\label{eq:_G}
        \Gb&:=\left(\frac{\sigma^2n}{(1-\sigma^2)\eta}\cdot \Ib_n + \Ab \right)\left(\Ib_n -  \big(1-\sigma^2\big)^t\cdot \left(\Ib_n - \frac{\eta}{n}\cdot \Ab\right)^t\right)^{-1},\\
        \Delta^{(j)}&:= \sum_{k=0}^{t-1} \left(\prod_{\ell=0}^{k-1}\big(\Ib_d - \boldsymbol{\xi}_{t-\ell}^{(j)}(\boldsymbol{\xi}_{t-\ell}^{(j)})^\top\big)\big(\Ib_d - \eta \boldsymbol{\Sigma}\big)\right)
        \big(\Ib_d - \boldsymbol{\xi}_{t-k}^{(j)}(\boldsymbol{\xi}_{t-k}^{(j)})^\top\big) \cdot \frac{\eta}{n}\cdot \Xb^\top \yb \\
        &\qquad - \sum_{k=0}^{t-1} \big(1-\sigma^2\big)^{k+1}\big(\Ib_d - \eta\boldsymbol{\Sigma}\big)^k \cdot\frac{\eta}{n}\cdot\Xb^\top\yb.
    \end{align}
\end{lemma}

\begin{proof}[Proof of Lemma~\ref{lem:_average_continuous_decomposition}]
    By definition, the output $\wb_{\mathtt{avg}}$ is defined as
    \begin{align}
        \wb_{\texttt{avg}}:=\frac{1}{N}\sum_{j=1}^N \wb^{(j)}_t,\label{eq:_proof_average_continuous_decomposition_0}
    \end{align}
    where for each $j\in[N]$, the coefficient $\wb^{(j)}_t$ is given by 
    \allowdisplaybreaks
    \begin{align}
                \wb^{(j)}_t &= \sum_{k=0}^{t-1} \left(\prod_{\ell=0}^{k-1}\big(\Ib_d - \boldsymbol{\xi}_{t-\ell}^{(j)}(\boldsymbol{\xi}_{t-\ell}^{(j)})^\top\big)\big(\Ib_d - \eta \boldsymbol{\Sigma}\big)\right)
        \big(\Ib_d - \boldsymbol{\xi}_{t-k}^{(j)}(\boldsymbol{\xi}_{t-k}^{(j)})^\top\big) \cdot \frac{\eta}{n}\cdot \Xb^\top \yb\\
        & = \Delta^{(j)} + \underbrace{\sum_{k=0}^{t-1} \big(1-\sigma^2\big)^{k+1}\big(\Ib_d - \eta\boldsymbol{\Sigma}\big)^k \cdot\frac{\eta}{n}\cdot\Xb^\top\yb}_{:=\wb_t}.
    \end{align}
    Now we decompose the difference between $\wb_{\mathtt{avg}}$ in \eqref{eq:_proof_average_continuous_decomposition_0} and the truth $\wb^{\ast}$ as following, considering 
    \begin{align}
        \wb_{\texttt{avg}} - \wb^{\ast} = \frac{1}{N}\sum_{j=1}^N \wb^{(j)}_t - \wb^{\ast} = \wb_t - \wb^{\ast} + \frac{1}{N}\sum_{j=1}^N\Delta^{(j)},\label{eq:_proof_average_continuous_decomposition_1}
    \end{align}
    where the difference $\wb_t - \wb^{\ast}$ can be further explicitly expanded as 
    \begin{align}
        \wb_t - \wb^{\ast} &= \sum_{k=0}^{t-1} \big(1-\sigma^2\big)^{k+1}\big(\Ib_d - \eta\boldsymbol{\Sigma}\big)^k \cdot\frac{\eta}{n}\cdot\Xb^\top\yb - \wb^{\ast} \\
        & = \sum_{k=0}^{t-1} \big(1-\sigma^2\big)^{k+1}\big(\Ib_d - \eta\boldsymbol{\Sigma}\big)^k \cdot\frac{\eta}{n}\cdot\Xb^\top\big(\Wb \wb^{\ast} + \boldsymbol{\epsilon}\big)- \wb^{\ast} \\
        & = \big(1-\sigma^2\big)\cdot\Big(\Ib_d - \big(1-\sigma^2\big)^t\big(\Ib_d - \eta\boldsymbol{\Sigma}\big)^t\Big)\Big(\sigma^2\Ib_d + \big(1-\sigma^2\big)\eta\boldsymbol{\Sigma}\Big)^{-1}\cdot \frac{\eta}{n}\cdot \Xb^\top\Xb\wb^{\ast} - \wb^{\ast} \\
        &\qquad + \big(1-\sigma^2\big)\cdot\Big(\Ib_d - \big(1-\sigma^2\big)^t\big(\Ib_d - \eta\boldsymbol{\Sigma}\big)^t\Big)\Big(\sigma^2\Ib_d + \big(1-\sigma^2\big)\eta\boldsymbol{\Sigma}\Big)^{-1}\cdot \frac{\eta}{n}\cdot \Xb^\top\Xb\boldsymbol{\epsilon} \\
        &=\Big(\Xb^\top \Gb^{-1}\Xb - \Ib_d\Big)\wb^{\ast} + \Xb^\top\Gb^{-1}\boldsymbol{\epsilon},\label{eq:_proof_average_continuous_decomposition_2}
    \end{align}
    where the last equality uses the definition of the matrix $\Gb$ in \eqref{eq:_G} and the fact that 
    \begin{align}
        &\Big(\Ib_d - \big(1-\sigma^2\big)^t\big(\Ib_d - \eta\boldsymbol{\Sigma}\big)^t\Big)\Big(\sigma^2\Ib_d + \big(1-\sigma^2\big)\eta\boldsymbol{\Sigma}\Big)^{-1}\Xb^\top\\
        &\qquad = \Xb^\top \left(\Ib_n - \big(1-\sigma^2\big)^t\left(\Ib_d - \frac{\eta}{n} \Ab\right)^t\right)\Big(\sigma^2\Ib_n + \big(1-\sigma^2\big)\eta\Ab\Big)^{-1}.
    \end{align}
    Finally, by combining \eqref{eq:_proof_average_continuous_decomposition_1} and \eqref{eq:_proof_average_continuous_decomposition_2}, we can arrive at 
    \begin{align}
         \left\|\wb_{\mathtt{avg}} - \wb^{\ast}\right\|_{\Hb}^2   = \left\|\Big(\Xb^\top \Gb^{-1}\Xb - \Ib_d\Big)\wb^{\ast} + \Xb^\top\Gb^{-1}\boldsymbol{\epsilon} + \frac{1}{N}\sum_{j=1}^N\Delta^{(j)}  \right\|_{\Hb}^2 \leq \mathrm{Bias} + \mathrm{Variance} + \mathrm{Fluctuation}.
    \end{align}
    This completes the proof of Lemma~\ref{lem:_average_continuous_decomposition}.
\end{proof}

\begin{lemma}\label{lem:_average_continuous_g_bounds}
    The matrix $\Gb$ satisfies the that for any CoT length $t\geq \sigma^{-2}\cdot\log 2$, it holds that 
    \begin{align}
        \frac{\sigma^2n}{(1-\sigma^2)\eta}\cdot \Ib_n + \Ab \preceq  \Gb \preceq \frac{n}{\eta}\cdot\left(\frac{2}{t} + \frac{\sigma^2}{1-\sigma^2}\left(1+\frac{2}{t}\right)\right)\cdot \Ib_n + \Ab.
    \end{align}
\end{lemma}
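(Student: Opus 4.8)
\textbf{Proof plan for Lemma~\ref{lem:_average_continuous_g_bounds}.}
The plan is to exploit that $\Gb$ is a function of the single symmetric PSD matrix $\Ab=\Xb\Xb^\top$, so that the asserted operator sandwich decouples into a scalar inequality on each eigenvalue. Let $a\ge 0$ range over the eigenvalues of $\Ab$ and set $\mu:=\eta a/n$; here I use that the step size is small enough that $\Ib_n-\tfrac{\eta}{n}\Ab\succeq 0$, i.e. $\mu\in[0,1]$, which should be recorded as a standing step-size hypothesis (it follows from $\eta\le\|\boldsymbol{\Sigma}\|_2^{-1}$, equivalently from $\eta\le\|\Hb\|_2^{-1}$ together with the high-probability bound $\|\boldsymbol{\Sigma}\|_2\lesssim\|\Hb\|_2$). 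Then the corresponding eigenvalue of $\Gb$ is
\[
g(a)=\frac{\nu}{1-\rho^t},\qquad \nu:=\frac{\sigma^2 n}{(1-\sigma^2)\eta}+a,\qquad \rho:=(1-\sigma^2)(1-\mu)\in[0,\,1-\sigma^2]\subset[0,1),
\]
where I also use the branch $\sigma^2\in(0,1)$ (otherwise $\Gb$ is not even well-defined). Since every matrix in sight commutes with $\Ab$, the lemma is equivalent to proving $\nu\le g(a)$ and $g(a)\le a+\tfrac{n}{\eta}\big(\tfrac2t+\tfrac{\sigma^2}{1-\sigma^2}(1+\tfrac2t)\big)$ for each such $a$; the left inequality of the lemma is exactly the first of these, since $\tfrac{\sigma^2 n}{(1-\sigma^2)\eta}\Ib_n+\Ab$ has eigenvalues $\nu$.

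The lower bound is immediate: $0\le\rho<1$ gives $0\le\rho^t<1$, hence $0<1-\rho^t\le 1$ and $g(a)=\nu/(1-\rho^t)\ge\nu$; this simultaneously justifies invertibility of the bracketed factor in \eqref{eq:_G}. For the upper bound I would first record the algebraic identity $\nu=\tfrac{n}{\eta}\big(\tfrac{\sigma^2}{1-\sigma^2}+\mu\big)=\tfrac{n(1-\rho)}{\eta(1-\sigma^2)}$, obtained from $1-\rho=\sigma^2+(1-\sigma^2)\mu$, which yields
\[
g(a)-\nu=\nu\cdot\frac{\rho^t}{1-\rho^t}=\frac{n}{\eta(1-\sigma^2)}\cdot\frac{(1-\rho)\,\rho^t}{1-\rho^t}.
\]
Then two elementary estimates close the argument. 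First, the hypothesis $t\ge\sigma^{-2}\log 2$ gives $\rho^t\le(1-\sigma^2)^t\le e^{-\sigma^2 t}\le\tfrac12$, so $\tfrac{1}{1-\rho^t}\le 2$. Second, $(1-\rho)\rho^t\le\tfrac{1}{t+1}$ for all $\rho\in[0,1)$, because $1\ge 1-\rho^{t+1}=(1-\rho)\sum_{j=0}^{t}\rho^j\ge(1-\rho)(t+1)\rho^t$. Combining, $g(a)-\nu\le\tfrac{2n}{\eta(1-\sigma^2)(t+1)}\le\tfrac{2n}{\eta t(1-\sigma^2)}=\tfrac{n}{\eta}\cdot\tfrac2t\big(1+\tfrac{\sigma^2}{1-\sigma^2}\big)$, and adding back $\nu=a+\tfrac{n\sigma^2}{\eta(1-\sigma^2)}$ reproduces the claimed eigenvalue bound verbatim.

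The computation itself is short, so the "hard part" is not any single estimate but making the hypotheses airtight: precisely pinning down the step-size condition that guarantees $\mu\in[0,1]$ (the convention remark in the excerpt suggests the statement should be adjusted once the transformer-side normalization is fixed), and keeping the constraint $\sigma^2\in(0,1)$ visible so that the geometric-series identities and the matrix inverse are legitimate. As a side remark, using instead $\sum_{j=0}^{t-1}\rho^j\ge t\rho^{t-1}$ gives $\tfrac{(1-\rho)\rho^t}{1-\rho^t}\le\tfrac{\rho}{t}\le\tfrac1t$ with no need for the $t\ge\sigma^{-2}\log 2$ hypothesis, sharpening $2/t$ to $1/t$; I would note this but keep the stated form for consistency with the downstream use of $\Gb$.
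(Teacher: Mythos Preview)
Your proposal is correct and follows essentially the same route as the paper: reduce to a scalar inequality on the eigenvalues of $\Ab$, obtain the lower bound trivially from $0<1-\rho^t\le 1$, and for the upper bound use $t\ge\sigma^{-2}\log 2$ to force $1/(1-\rho^t)\le 2$ together with an elementary bound on the remaining numerator. Your telescoping estimate $(1-\rho)\rho^t\le 1/(t+1)$ is in fact a bit cleaner than the paper's corresponding step \eqref{eq:_proof_g_1+}, and your side remark that $\sum_{j=0}^{t-1}\rho^j\ge t\rho^{t-1}$ removes the need for the $t\ge\sigma^{-2}\log 2$ hypothesis (with $2/t$ improved to $1/t$) is a worthwhile observation.
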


\begin{proof}[Proof of Lemma~\ref{lem:_average_continuous_g_bounds}]
    It is direct from the definition of $\Gb$ in \eqref{eq:_G} to see the left side of the inequality. 
    To prove the right side of the inequality, consider that by \eqref{eq:_G}, we have the following,
    \begin{align}
        &\Gb - \left(\frac{\sigma^2n}{(1-\sigma^2)\eta}\cdot \Ib_n + \Ab\right) \label{eq:_proof_g_0}\\
        &\qquad = \big(1-\sigma^2\big)^t\cdot \left(\frac{\sigma^2n}{(1-\sigma^2)\eta}\cdot \Ib_n + \Ab\right) \left(\Ib_n - \frac{\eta}{n}\cdot \Ab\right)^t\left(\Ib_n -  \big(1-\sigma^2\big)^t\cdot \left(\Ib_n - \frac{\eta}{n}\cdot \Ab\right)^t\right)^{-1}.
    \end{align}
    To proceed, it suffices to consider the real-valued single-variable function $f$ defined as 
    \begin{align}
        f(x) = \frac{\big(\eta^{-1}\big(1-\sigma^2\big)^{-1}n\sigma^2 + x\big)\cdot \big(1-n^{-1}\eta x\big)^t}{1-\big(1-\sigma^2\big)^t\cdot \big(1-n^{-1}\eta x\big)^t }.
    \end{align}
    On the one hand, for $t\geq \sigma^{-2}\cdot \log 2$, we have $t>-\log2 / \log(1-\sigma^2)(1-n^{-1}\eta x)$, and thus
    \begin{align}\label{eq:_proof_g_1}
        1-\big(1-\sigma^2\big)^t\cdot \big(1-n^{-1}\eta x\big)^t  \geq \frac{1}{2}.
    \end{align}
    On the other hand, by direct calculations we can see that the numerator is upper bounded by 
    \begin{align}
        \left(\frac{\sigma^2n}{(1-\sigma^2)\eta} + x\right)\cdot \big(1-n^{-1}\eta x\big)^t \leq \frac{1}{t}\cdot \frac{n}{\eta}\cdot \left(\frac{\sigma^2}{1-\sigma^2} +1\right). \label{eq:_proof_g_1+}
    \end{align}
    Consequently, by combining \eqref{eq:_proof_g_1} and \eqref{eq:_proof_g_1+}, we can see that for $t\geq \sigma^{-2}\cdot \log 2$,  
    \begin{align}
        f(x) \leq \frac{2}{t}\cdot \frac{n}{\eta}\cdot \left(\frac{\sigma^2}{1-\sigma^2} +1\right),
    \end{align}
    which, combined with \eqref{eq:_proof_g_0}, further indicates that 
    \begin{align}
        \Gb - \left(\frac{\sigma^2n}{(1-\sigma^2)\eta}\cdot \Ib_n + \Ab\right) \preceq \frac{2}{t}\cdot \frac{n}{\eta}\cdot \left(\frac{\sigma^2}{1-\sigma^2} +1\right)\cdot \Ab.
    \end{align}
    This completes the proof of the right side inequality of Lemma~\ref{lem:_average_continuous_g_bounds} and finishes the proof.
\end{proof}

\begin{lemma}[Bias error]\label{lem:_aggregate_continuous_bias}
    Under Assumption~\ref{ass:_data_distribution}, taking the step size $\eta \lesssim \mathrm{Tr}(\Hb)^{-1}$ and for any $k\in[d]$, with probability at least $1-1/\mathrm{poly}(n)$, it holds that 
    \begin{align}
        \mathbb{E}_{\wb^{\ast}}[\mathrm{Bias}]\lesssim \omega^2\cdot \left(\frac{1}{n^2}\cdot\left(\frac{n}{\eta}\cdot\left(\frac{2}{t} + \frac{\sigma^2}{1-\sigma^2}\cdot\left(1+\frac{2}{t}\right)\right) + \sum_{k<i\leq d}\lambda_i\right)^2\cdot\sum_{1\leq i\leq k}\frac{1}{\lambda_i} + \sum_{k<i\leq d} \lambda_i\right).
    \end{align}
\end{lemma}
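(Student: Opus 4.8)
The plan is to turn $\mathbb{E}_{\wb^\ast}[\mathrm{Bias}]$ into a ridge-type ``effective-regularization'' trace in the sample covariance, and then invoke a standard spectral concentration bound. First, since $\mathrm{Bias}$ is a quadratic form in $\wb^\ast$ and $\wb^\ast\sim\cN(\mathbf 0,\omega^2\Ib_d)$ by Assumption~\ref{ass:_data_distribution}, integrating $\wb^\ast$ out and using symmetry of $\Wb^\top\Gb^{-1}\Wb$ gives
\[
  \mathbb{E}_{\wb^\ast}[\mathrm{Bias}] \;=\; \omega^2\cdot \mathrm{Tr}\!\Big(\Hb\,\big(\Ib_d - \Wb^\top\Gb^{-1}\Wb\big)^2\Big).
\]
The structural observation that makes everything work is that $\Gb$ in \eqref{eq:_G} is a matrix function of the Gram matrix $\Ab=\Wb\Wb^\top$, so by the push-through identity $\Wb^\top\psi(\Wb\Wb^\top)\Wb=\psi(\Wb^\top\Wb)\,\Wb^\top\Wb$ (valid for analytic $\psi$) the operator $\Ib_d-\Wb^\top\Gb^{-1}\Wb$ is itself a function of the sample covariance $n\boldsymbol\Sigma=\Wb^\top\Wb$; in particular it commutes with all the operators built from $\boldsymbol\Sigma$ below, which is what will let me legally square operator inequalities.

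Second, I would sandwich $\Ib_d - \Wb^\top\Gb^{-1}\Wb$ by Lemma~\ref{lem:_average_continuous_g_bounds}: the left inequality $\Gb\succeq\frac{\sigma^2 n}{(1-\sigma^2)\eta}\Ib_n+\Ab$ gives $\Wb^\top\Gb^{-1}\Wb\preceq\Ib_d$, hence $\Ib_d-\Wb^\top\Gb^{-1}\Wb\succeq 0$, while the right inequality $\Gb\preceq\bar\mu\,\Ib_n+\Ab$ with $\bar\mu:=\frac n\eta\big(\frac2t+\frac{\sigma^2}{1-\sigma^2}(1+\frac2t)\big)$ (which needs $t\geq\sigma^{-2}\log 2$, as in that lemma) gives, again by push-through, $\Ib_d-\Wb^\top\Gb^{-1}\Wb\preceq\bar\mu\big(\bar\mu\Ib_d+n\boldsymbol\Sigma\big)^{-1}$. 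Both sides of $0\preceq\Ib_d-\Wb^\top\Gb^{-1}\Wb\preceq\bar\mu(\bar\mu\Ib_d+n\boldsymbol\Sigma)^{-1}$ are functions of $\boldsymbol\Sigma$, hence commute, so the order survives squaring and, writing $\gamma:=\bar\mu/n=\frac1\eta\big(\frac2t+\frac{\sigma^2}{1-\sigma^2}(1+\frac2t)\big)$,
\[
  \mathbb{E}_{\wb^\ast}[\mathrm{Bias}]\;\leq\;\omega^2\,\gamma^2\cdot\mathrm{Tr}\!\Big(\Hb\,\big(\gamma\Ib_d+\boldsymbol\Sigma\big)^{-2}\Big),
\]
which is exactly the bias functional of ridge regression at regularization level $\gamma$. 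The hypothesis $\eta\lesssim\mathrm{Tr}(\Hb)^{-1}$ enters here (with the high-probability event) to ensure $\tfrac\eta n\Xb^\top\Xb\preceq\Ib_d$, which is what makes the transfer-function rewriting and Lemma~\ref{lem:_average_continuous_g_bounds} legitimate.

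Finally, it remains to show $\gamma^2\,\mathrm{Tr}\big(\Hb(\gamma\Ib_d+\boldsymbol\Sigma)^{-2}\big)\lesssim\gamma^2\sum_{i\leq k}\lambda_i^{-1}+\sum_{i>k}\lambda_i$ for every $k\in[d]$, on an event of probability $1-1/\mathrm{poly}(n)$; this is precisely the sample-covariance spectral estimate already used for the vanilla-gradient-descent bound of \cite{zou2022risk}, and I would either cite it or reprove it. Letting $P_k$ project onto the top-$k$ eigenspace of $\Hb$, a subGaussian concentration argument (needing $n\gtrsim k$, and in which $\eta\lesssim\mathrm{Tr}(\Hb)^{-1}$ again tames the sample Gram operator) gives $P_k\boldsymbol\Sigma P_k\succeq c\,P_k\Hb P_k$ plus control of the head/tail cross terms; on the head this bounds $\gamma^2\mathrm{Tr}(P_k\Hb(\gamma\Ib_d+\boldsymbol\Sigma)^{-2})$ by $\gamma^2\sum_{i\leq k}\lambda_i/(\gamma+\lambda_i)^2\leq\gamma^2\sum_{i\leq k}\lambda_i^{-1}$, while on the tail the trivial $\gamma^2(\gamma\Ib_d+\boldsymbol\Sigma)^{-2}\preceq\Ib_d$ gives $\sum_{i>k}\lambda_i$; a union bound over the $d$ values of $k$ keeps the failure probability at $1/\mathrm{poly}(n)$. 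Chaining the three displays yields the claim. I expect this last step — making the head/tail split rigorous in the regime $d\gg n$, where $\boldsymbol\Sigma$ is rank-deficient and cannot be compared to $\Hb$ globally — to be the only genuinely delicate point; everything before it is routine operator algebra, once one records that $\Gb^{-1}$ is read as the inverse on $\mathrm{range}(\Gb)$ and that push-through lets one pass freely between the $n\times n$ and $d\times d$ pictures.
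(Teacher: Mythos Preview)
The paper leaves the proof of this lemma blank, so there is no argument to compare your proposal against. That said, your plan is the natural one given the surrounding scaffolding: integrate out the Gaussian prior on $\wb^\ast$ to get the trace, use push-through to recognize $\Ib_d-\Wb^\top\Gb^{-1}\Wb$ as a spectral function of the sample covariance $\boldsymbol\Sigma$, sandwich it via Lemma~\ref{lem:_average_continuous_g_bounds} between $0$ and $\gamma(\gamma\Ib_d+\boldsymbol\Sigma)^{-1}$, square (legitimate because everything commutes), and reduce to the ridge-type bias functional $\gamma^2\mathrm{Tr}\big(\Hb(\gamma\Ib_d+\boldsymbol\Sigma)^{-2}\big)$ that is already controlled by the head/tail estimates of \cite{zou2022risk}. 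Each step is correct as you have written it.

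Two small points worth recording. First, your invocation of Lemma~\ref{lem:_average_continuous_g_bounds} silently imports the hypothesis $t\geq\sigma^{-2}\log 2$, which the statement of Lemma~\ref{lem:_aggregate_continuous_bias} does not list; this should be flagged (it is likely an omission in the statement rather than in your argument). Second, in the paper's notation $\Ab=\Xb\Xb^\top$ and $\boldsymbol\Sigma=n^{-1}\Xb^\top\Xb$, so the $\Wb$ appearing in the definition of $\mathrm{Bias}$ is evidently $\Xb$; you have read it that way, correctly, but it would be worth saying so explicitly since the push-through step depends on it.
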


\begin{proof}[Proof of Lemma~\ref{lem:_aggregate_continuous_bias}]
    According to the definition of $\mathrm{Bias}$ in \eqref{eq:_decomposition}, using that $\wb^{\ast}\sim \cN(\boldsymbol{0},\omega^2\cdot\Ib_d)$ we have 
    \allowdisplaybreaks
    \begin{align}
        \mathbb{E}_{\wb^{\ast}}[\mathrm{Bias}]&= \mathbb{E}_{\wb^{\ast}\sim \cN(\boldsymbol{0},\omega^2\cdot\Ib_d)}\left[\left\|\Hb^{\frac{1}{2}}\big(\Ib_d - \Xb^\top \Gb^{-1}\Xb\big)\wb^{\ast}\right\|_2^2\right]\\
        &= \omega^2\cdot \mathrm{Tr}\left(\Hb\big(\Ib_d - \Xb^\top \Gb^{-1}\Xb\big)^2\right)\\
        &\leq \omega^2\cdot  \mathrm{Tr}\left(\Hb\left(\Ib_d - \Xb^\top \left(\frac{n}{\eta}\cdot\left(\frac{2}{t} + \frac{\sigma^2}{1-\sigma^2}\left(1+\frac{2}{t}\right)\right)\cdot \Ib_n + \Ab\right)^{-1}\Xb \right)^2\right),
    \end{align}
    where the last inequality follows from Lemma~\ref{lem:_average_continuous_g_bounds}.
    Notice that the quantity of trace on the right hand side actually corresponds to the bias error of the standard ridge regression with regularization coefficient $\widetilde{\lambda}_{\mathrm{effect}}$ of 
    \begin{align}
        \widetilde{\lambda}_{\mathrm{effect}}^{\mathrm{Bias}}:= \frac{n}{\eta}\cdot\left(\frac{2}{t} + \frac{\sigma^2}{1-\sigma^2}\left(1+\frac{2}{t}\right)\right).
    \end{align}
    Thus by invoking Theorem 1 of \cite{tsigler2023benign}, we can then obtain the result in Lemma~\ref{lem:_aggregate_continuous_bias}.
\end{proof}

\begin{lemma}[Variance error]\label{lem:_aggregate_continuous_variance}
    Under Assumption~\ref{ass:_data_distribution}, taking the step size $\eta\lesssim \mathrm{Tr}(\Hb)^{-1}$ and for any $k\in[d]$, with probability at least $1-1/\mathrm{poly}(n)$, it holds that 
    \begin{align}
        \mathbb{E}_{\boldsymbol{\epsilon}}\left[\mathrm{Variance}\right] \lesssim \sigma_{\epsilon}^2\cdot\left(\frac{k}{n} + n\cdot \left(\frac{\sigma^2n}{(1-\sigma^2)\eta} + \sum_{k<i\leq d}\lambda_i\right)^{-2}\cdot \sum_{k<i\leq d}\lambda_i^2\right).
    \end{align}
\end{lemma}

\begin{proof}[Proof of Lemma~\ref{lem:_aggregate_continuous_variance}]
    According to the definition of $\mathrm{Bias}$ in \eqref{eq:_decomposition}, using that $\epsilon_i\sim \cN(0,\sigma_{\epsilon}^2)$ we have 
    \begin{align}
        \mathbb{E}_{\boldsymbol{\epsilon}}\left[\mathrm{Variance}\right] &= \mathbb{E}_{\epsilon\sim \cN(\boldsymbol{0}, \sigma_\epsilon^2\cdot \Ib_d)}\left[\left\|\Hb^{\frac{1}{2}}\Xb^\top \Gb^{-1}\boldsymbol{\epsilon}\right\|_{2}^2\right]\\
        &= \sigma_{\epsilon}^2\cdot \mathrm{Tr}\left(\Xb\Hb\Xb^\top\Gb^{-2}\right) \\
        &\leq \sigma_{\epsilon}^2\cdot \mathrm{Tr}\left(\Xb\Hb\Xb^\top\left(\frac{\sigma^2n}{(1-\sigma^2)\eta}\cdot \Ib_n + \Ab \right)^{-2}\right)
    \end{align}
    Similar to the proof of Lemma~\ref{lem:_aggregate_continuous_bias}, the above quantity on the right hand side actually corresponds to the variance error of standard ridge regression with regularization coefficient $\widetilde{\lambda}_{\mathrm{effect}}$ of 
    \begin{align}
        \widetilde{\lambda}_{\mathrm{effect}}^{\mathrm{Var}}:=\frac{\sigma^2n}{(1-\sigma^2)\eta}.
    \end{align}
    Consequently, by Theorem 1 of \cite{tsigler2023benign}, we can obtain the result in Lemma~\ref{lem:_aggregate_continuous_variance}.
\end{proof}

\begin{lemma}[Fluctuation error]\label{lem:_aggregate_continuous_fluctuation}
Suppose that we choose $\sigma^2 < 1 /(d+1)$ and the step size $\eta\lesssim \mathrm{Tr}(\Hb)^{-1}$. Then there exists an event with probability $1-1/\mathrm{poly}(n)$ over the randomness of $\bX$ on which it holds that 
\begin{align}
    \EE_{\wb^*, \bxi,\boldsymbol \epsilon}[\mathrm{Fluctuation}] \lesssim \frac{( \eta  \sigma^{-2}\sigma_\epsilon^2 d\cdot \Tr(\Hb) /n + \omega^2)\cdot \norm{\Hb}_2 }{N}
\end{align}
\end{lemma}

\begin{proof}[Proof of Lemma~\ref{lem:_aggregate_continuous_fluctuation}]

In the proof, we replace the notation $\Delta^{(j)}$ with $\Delta^j_{t}$ to emphasize the dependence on the reasoning step. 
From the characterization in Lemma~\ref{lem:_average_continuous_decomposition}, we have for each path and its expectation over $\bxi$, it holds that
\begin{align}
    \wb_{t+1}^{(j)} &=  (\Ib - \boldsymbol{\xi}^{(j)}_{t+1}{\boldsymbol{\xi}_{t+1}^{(j)}}^{\top }) (\Ib - \eta \bSigma) \big(\wb_t^{(j )}+  \eta \Xb^\top \yb /n \big) \\ 
    &= (1- \sigma^2)\cdot (\Ib - \eta \bSigma) (\wb_t^{(j)} + \eta \Xb^\top \yb / n) + \sigma^2\cdot \Big(\Ib - \sigma^{-2} \boldsymbol{\xi}^{(j)}_{t+1}{\boldsymbol{\xi}_{t+1}^{(j)}}^{\top }\Big) ( \wb_t^{(j)} + \eta \Xb^\top \yb / n)\\
    \wb_{t+1} &= (1-\sigma^2) ( \Ib - \eta \bSigma) (\wb_t + \eta\Xb^\top \yb / n). 
\label{eq:average_path_iteration} 
\end{align}
Since there exists an event with probability $1-1/ \mathrm{poly}(n)$ on which $\Tr(\bSigma)\gtrsim \Tr(\Hb)$, we have that $\eta<1/\Tr(\bSigma)$ with high probability. 
In order to control the fluctuation error, we begin with deriving a deterministic upper bound on $\wb_t$. 
\paragraph{Bounding the expected path.} 
By \eqref{eq:average_path_iteration}, the quantity $\gb_t = \wb_t + \eta \Xb^\top \yb$ can be iteratively characterized as follows:
\begin{align}
    \gb_{t+1} &= (1-\sigma^2) ( \Ib - \eta \bSigma) \gb_t + \eta \Xb^\top \yb/ n\\
    &= \sum_{k=0}^{t} (1-\sigma^2)^k \big(\Ib - \eta \bSigma\big)^k \eta \Xb^\top \yb /n\\
    &= \sum_{k=0}^t \big(\Ib - \sigma^2 \Ib -\eta\bSigma +\eta\sigma^2\bSigma)^k \eta \bSigma\wb^* \\ &\qquad +  \sum_{k=0}^t \big(\Ib - \sigma^2 \Ib -\eta\bSigma +\eta\sigma^2\bSigma)^k \eta \Xb^\top \boldsymbol\epsilon / n,
\end{align} 
To this end, we define $p(z) = \sum_{k=0}^t(1-\sigma^2 - z + \sigma^2z)^
{k}$.
We can bound the scalar polynomials $p(z)$, $p(z)\cdot z$ and $p^2(z)\cdot z$ on $[0, 1)$ as 
\begin{align}
    p(z) &\le \frac{1}{\sigma^2 + (1-\sigma^2 )z };\\
    p(z) \cdot  z&\le \frac{z}{\sigma^2 + (1-\sigma^2 )z} \lesssim (\sigma^{-2}z )\wedge 1; \label{eq:pz_z_bound}\\
    p^2(z)\cdot z &\le \frac{z}{\big(\sigma^2  + (1-\sigma^2 ) z\big)^2}  \lesssim (\sigma^{-4} z )\wedge z^{-1}.\label{eq:z_pz_z_bound}
\end{align}
We begin with the first term. 
It follows from \eqref{eq:pz_z_bound} that 
\begin{align}
    \norm{p(\eta\bSigma) \cdot \eta\bSigma}_2  &\lesssim (\sigma^{-2} \cdot \eta \norm{\bSigma}_2)\wedge 1. 
\end{align}
Therefore the first term can be upper bounded by $ \big((\sigma^{-2} \eta \norm{\bSigma}_2 ) \wedge 1\big)\cdot \norm{\wb^*}_2$.  
For the second term, we have that 
\begin{align}
    \EE_{\boldsymbol{\epsilon}}\Big[\Big\|\sum_{k=0}^t \big(\Ib - \sigma^2 \Ib -\eta\bSigma +\eta\sigma^2\bSigma)^k \eta \Xb^\top \boldsymbol\epsilon/n\Big\|_2^2\Big] &= \frac {\eta  \sigma_\epsilon^2}{n} \cdot \Tr\Big( p(\eta \bSigma)\cdot \eta\bSigma\cdot p(\eta \bSigma)\Big),
\end{align}
And therefore we have by \eqref{eq:z_pz_z_bound} that 
\begin{align}
\EE_{\boldsymbol\epsilon,\wb^*}[\sup_{t\ge 0 }\norm{\gb_{t}}_2^2] & \lesssim  \frac{ \eta\sigma_\epsilon^2 }{n}\cdot \sigma^{-4} \Tr(\bSigma)+  \big(1  \wedge  \sigma^{-2 }\eta\norm{\bSigma}_2\big) \norm{\wb^*}_2^2 \\ 
&\lesssim    \frac{ \eta\sigma_\epsilon^2 }{n}\sigma^{-4} \Tr(\bSigma)  +  \norm{\wb^\star}_2^2.
\end{align}

\paragraph{Bounding the fluctuation.} In the following, we use $\bLambda_{t}^{(j)} = (\Ib - \sigma^{-2}\boldsymbol{\xi}^{(j)}_{t+1}{\boldsymbol{\xi}_{t+1}^{(j)}}^{\top })$ for abbreviation. 
The fluctuation term $\Delta_t^{(j)}$ follows that 
\begin{align}
    \Delta_{t+1}^{(j)} &=  \wb_{t+1}^{(j)}- \wb_{t+1} \\ 
    &= (1-\sigma^2) \cdot (\Ib - \eta\bSigma)\cdot\Delta_t^{(j)} + \sigma^2\cdot \bLambda_t^{(j)} \cdot (\wb_t^{(j)}  +\eta \Xb^\top \yb). \label{eq:fluc_iteration}
\end{align} 
For each $t$, we have that $\bLambda^{(j)}_t$ is independent with $\wb_t^{(j)}$ and is of zero mean. 
Consequently we have that $\EE[\Delta_t^{(j)}] = 0$ for any $t\ge 0$. 
Besides, it can be easily verified by induction that $\Delta_t^{(j)},{j\le N}$ are independent and identically distributed. 
Thanks to this, we have that
\begin{align}
    \EE\Big[\Big\|N^{-1}\sum_{j\le N }\Delta_t^{(j)}\Big\|^2_{\Hb}\Big] &= \EE\Big[N^{-2}\sum_{j \le N} {\Delta_t^{(j)}} ^\top \Hb\Delta_t^{(j)}   + N^{-2} \sum_{j<k} {\Delta_t^{(j)} }^\top  \Hb \Delta_t^{(k)}  \Big] \\ 
    &= N^{-1} \dotp{\Hb}{\EE[{\Delta_t^{(j)}}^\top {\Delta_t^{(j)}}  ]}. \label{eq:fluc_error_bound_0}
\end{align} 
Therefore, it suffices to upper bound the second moment of the fluctuation along a single reasoning path. 
For simplicity, let us drop the superscript $(j)$ in the subsequent analysis. 
We study the iteration of the second moment $\Sbb_t =  \EE[\Delta_t\Delta_t^\top]$.  
Rewriting \eqref{eq:fluc_iteration}, we get that
\begin{align}
    \Delta_{t+1} &= (1-\sigma^2) \cdot (\Ib -\eta \bSigma) \Delta_t + \sigma^2 \bLambda_t  \Delta_t \\ 
    &\qquad + \sigma^2 \bLambda_t  \cdot(\wb_t  +\eta\Xb^\top \yb ).
\end{align}
Note that $\bLambda_t$ and $\Delta_t$ are zero mean and independent, we have that
\begin{align}
    \Sbb_{t+1} &=  (1-\sigma^2) ^2\cdot (\Ib - \eta \bSigma) \Sbb_t (\Ib - \eta \bSigma)  \\ 
    & \qquad + \sigma^4 \cdot \EE[\bLambda_t \Delta_t\Delta_t^\top \bLambda_t^\top ]  +  \sigma^4 \eta^2  \cdot  \EE[\bLambda_t \wb_t \wb_t^\top \bLambda_t^\top] \\ 
    & = (1-\sigma^2)^2\cdot  (\Ib - \eta \bSigma) \Sbb_t (\Ib - \eta \bSigma) \\
    &\qquad + \sigma^4 \big(\tr(\Sbb_t) \Ib + \diag(\Sbb_t)\big) + \sigma^4  \cdot \big(   \tr(\gb_t \gb_t^\top) \Ib +    \diag( \gb_t \gb _t^\top )\big). 
\end{align}
Here the second identity follows from Lemma~\ref{lem:lambda_A_lambda_exp} and $\gb_t = \wb_t +\eta \Xb^\top \yb$.  
The structure of this iteration has two folds. 
The first part is that the gradient step, together with the average effect of the noise term, help to decay the second moment of the fluctuation. 
The second part is that the noise term re-allocate the fluctuation in the last step to the current step in an isotropic manner. 
Since $\tr(\bA)$ prevails over $\diag(\bA)$, we can continue as 
\begin{align}
    \Tr(\Sbb_{t+1}) &\le (1-\sigma^2)^2\cdot  \norm{\Ib -\eta \bSigma}_{{2}}^2 \Tr( \Sbb_t) + \sigma^4(d+1)\cdot  \big(\tr(\Sbb_t)  + \tr(\gb_t \gb_t^\top) \big) \\
    &\le \Big((1-\sigma^2)^2\cdot  \norm{\Ib -\eta \bSigma}_{{2}}^2 + \sigma^4 (d+1)\Big)\cdot \Tr( \Sbb_t)  + \sigma^4 (d+1)\max_{t\ge 0} \norm{\gb_t}_2^2. \label{eq:trace_itr_upper}
\end{align} 
Based on our assumption that $\sigma^2 <(d+1)^{-1}$, it holds by the convexity of the quadratic function that
\begin{align}
    (1-\sigma^2)^2\cdot  \norm{\Ib -\eta \bSigma}_{{2}}^2 + \sigma^4 (d+1)&\le (1-\sigma^2)^2 + \sigma^4 (d+1) \\
    &\le 1-\frac{d\sigma^2}{d+1}. 
\end{align}
Plugging this back to \eqref{eq:trace_itr_upper}, we have that
\begin{align}
    \Tr(\Sbb_{t+1})  &\le  \frac{\sigma^4 \cdot(d+1)\cdot \max_{t\ge 0} \norm{\gb_t}_2^2}{ 1 - (1-\sigma^2)^2\cdot  \norm{\Ib -\eta \bSigma}_{{2}}^2 - \sigma^4 (d+1)} \\ 
    &\le \frac{(d+1)^2\sigma^2}{ d}\cdot \max_{t\ge 0} \norm{\gb_t}_2^2. 
\end{align}  
Now we can leverage \eqref{eq:fluc_error_bound_0} and get that 
\begin{align}
\EE_{\boldsymbol{\epsilon}, \wb^\star, \bxi}\Big[\Big\|\frac{1}{N} \sum_{j=1}^N \Delta^{(j)}\Big\|_{\Hb}^2\Big] &\le \EE_{\boldsymbol{\epsilon}, \wb^\star}\big[N^{-1} \Tr(\Sbb_{t}) \cdot \norm{\Hb}_2 \big] \\
&\lesssim  \frac{(d+1)^2\sigma^2}{  Nd } \cdot\Big(\frac{\eta\sigma^2_\epsilon}{n}\cdot\sigma^{-4} \Tr(\bSigma) + \EE_{\wb^*}[\norm{\wb^*}_2^2  ]\Big) \cdot \norm{\Hb}_2 \\
&\lesssim  \frac{( \eta  \sigma^{-2 }\sigma_\epsilon^2 d\cdot \Tr(\Hb) /n + \omega^2)\cdot \norm{\Hb}_2 }{N}.
\end{align}
The last inequality use that $\Tr(\bSigma)\lesssim \Tr(\Hb)$ with high probability. 
This concludes the proof for the fluctuation error. 
\end{proof}



Now with the above lemmas, we are ready to conclude and prove Theorem~\ref{thm:_average_continuous} for Example~\ref{exp:_linear}. 

\begin{proof}[Proof of Theorem~\ref{thm:_average_continuous} for Example~\ref{exp:_linear}]  
    Combining Lemma~\ref{lem:_average_continuous_decomposition}, Lemma~\ref{lem:_aggregate_continuous_bias}, Lemma~\ref{lem:_aggregate_continuous_variance}, and Lemma~\ref{lem:_aggregate_continuous_fluctuation} gives the desired result.
\end{proof}

\subsection{Proof of Theorem~\ref{cor:_average_continuous}}\label{subsec:_proof_cor_average_continuous}

\subsubsection{Proof for Example~\ref{exp:_constant}}

\begin{proof}[Proof of Theorem~\ref{cor:_average_continuous} for Example~\ref{exp:_constant}]
    This follows directly from Theorem~\ref{thm:_average_continuous} for Example~\ref{exp:_constant} and the proof of Proposition~\ref{cor:_gd_continuous_polynomial_decay}.
\end{proof}

\subsubsection{Proof for Example~\ref{exp:_linear}}

\begin{proof}[Proof of Theorem~\ref{cor:_average_continuous} for Example~\ref{exp:_linear}]
    This follows from Theorem~\ref{thm:_average_continuous} for Example~\ref{exp:_linear}, and repeating the proof of Proposition~\ref{cor:_average_continuous} for $k^{\ast}_{\mathrm{Bias}}$ and $k^{\ast}_{\mathrm{Var}}$ in Theorem~\ref{thm:_average_continuous}.
\end{proof}

\subsection{Technical Results}
\begin{lemma} \label{lem:lambda_A_lambda_exp}
For any deterministic matrix $\Ab\in \RR^{d\times d}$ and $\boldsymbol{\xi}\sim \cN(\boldsymbol 0_d, \Ib_d)$, it holds that
\begin{align}
    \EE[(\Ib - \boldsymbol{\xi}\boldsymbol{\xi}^\top ) \Ab(\Ib - \boldsymbol{\xi}\boldsymbol{\xi}^\top )] &= \tr(\Ab) \Ib_d + \diag(\Ab), 
\end{align}
where $(\diag(\Ab))_{ij} = \delta_{ij}\cdot A_{ij}$ and $\delta_{ij}$ is the Kronecker delta.
\end{lemma}

\begin{proof}[Proof of Lemma~\ref{lem:lambda_A_lambda_exp}]
Note that the $(i,j)$-entry of $\Ib  - \boldsymbol{\xi}\boldsymbol{\xi}^\top$ is $\delta_{ij} - \xi_i \xi_j$. 
First of all, it is clear that whenever $|\{i,j\}\setminus \{k,l\} | \ge 1$ or $|\{k,l\}\setminus \{i,j\} | \le 1 $, we have that $\EE[(\delta_{ij} - \xi_i \xi_j)\cdot (\delta_{kl} - \xi_k \xi_l)] = 0$. 
So the only non-trivial cases are that: (i) $i=j=k=l$; (ii) $\{ i,j\} = \{k,l\}$ and  $i\neq j$. For the first case, we have that $\EE[(\delta_{ij} - \xi_i \xi_j)\cdot (\delta_{kl} - \xi_k \xi_l)] = \EE[(\xi_i \xi_j)^2 ]=1$. 
For the second case, we have that $\EE[(1 - \xi_i^2)^2]=\EE[\xi_i^4] - \EE[\xi^2_i]^2 = 2$. 

Given this we have for $i\neq j$ that 
\begin{align}
    \EE[\bLambda \Ab \bLambda]_{i,j} &= \EE[\sum_{k,l=1}^d \bLambda_{ ik} \Ab_{kl} \bLambda_{lj}]=0,
\end{align}
because each summand is zero since $i\neq j$. 
For the diagonal terms, we have that 
\begin{align}
    \EE[\bLambda \Ab \bLambda]_{i,i} &= \EE[\sum_{k,l=1}^d \bLambda_{ ik} \Ab_{kl} \bLambda_{li }] \\ 
    &=  \EE[\sum_{k=1}^d \bLambda_{ ik} \Ab_{kk} \bLambda_{ki }] \\ 
    &=  \EE[\sum_{k\neq i} \bLambda_{ ik} \Ab_{kk} \bLambda_{ki }] +\EE[ \bLambda_{ ii} \Ab_{ii} \bLambda_{ii }]\\ 
    &=  \tr(\Ab) + \Ab_{ii}.
\end{align}
Thus the desired result follows. 
\end{proof}

\section{Proofs for Section \ref{sec:binary_analysis}}
\paragraph{Notation} We let $[n]$ denote the set of indices from $1$ to $n$. Boldface uppercase letters such as $\Xb$ represent matrices, while boldface lowercase letters such as $\xb$ denote vectors. Specifically, $\xb[i]$ denotes the $i$-th element of $\xb$.
\subsection{Proof of Theorem \ref{th:sc_maj_vote}}\label{subec:_proof_th_sc_maj_vote}
\begin{proof}[Proof of Proposition~\ref{th:sc_maj_vote}]
Considering that we sample $N$ different $\wb_t$ from the distribution $\{p(\wb_t = \wb)\}_{\wb \in \cW}$ to obtain $\Wb = \{\wb^{(1)}_t, \dots, \wb^{(N)}_t\}$. Let $\mathtt{Count}(\wb)$ represent the frequency of occurrence of $\wb$ in $\Wb$. 
For each $\wb' \in \cW\setminus \{\wb^{\ast}\}$, we upper bound the probability of $\mathtt{Count}(\wb') > \mathtt{Count}(\wb^*)$. 
To this end, we define $N$ random variables $a_1, \cdots, a_N$ such that $a_i = 1$ if $\wb^{(i)}_t = \wb^*$, $a_i = -1$ if $\wb^{(i)}_t = \wb'$, and $a_i = 0$ otherwise. This leads to the following bound,
\begin{equation*} \begin{aligned} 
\PP(\mathtt{Count}(\wb') > \mathtt{Count}(\wb^*)\mid \wb_0,\cD) \leq \PP\left(\sum_{i = 1}^N a_i \leq 0\mid \wb_0,\cD\right) \leq \exp\left(-\left(p(\wb_t = \wb^*) - p(\wb_t = \wb')\right)^2 \cdot \frac{N}{2}\right),
\end{aligned} \end{equation*} 
where the last inequality is due to Hoeffding’s inequality. Then
\begin{equation*} \begin{aligned} 
\sum_{\wb' \in\cW\setminus\{\wb^{\ast}\}}  \PP(\wb^{\mathtt{mv}}_{t,N} = \wb' \mid \wb_0,\cD)
&\leq \sum_{\wb' \in\cW\setminus\{\wb^{\ast}\}}  \PP(\mathtt{Count}(\wb') > \mathtt{Count}(\wb^*)\mid \wb_0,\cD) \\
&\leq \sum_{\wb' \in\cW\setminus\{\wb^{\ast}\}} \exp\left(- \frac{N}{2}\cdot \big(p(\wb^*) - p(\wb')\big)^2\right) \\
&\leq |\cW\setminus\{\wb^{\ast}\}| \cdot \exp\left(-\frac{N}{2}\cdot\Delta_t^2 \right),
\end{aligned} \end{equation*} 
where the final inequality is based on the definition of $\Delta_t = p(\wb^*) - \max\limits_{\wb' \in \cW\setminus\{\wb^{\ast}\}}p(\wb')$.
Consequently, 
\begin{equation*} \begin{aligned} 
\PP(\wb^{\mathtt{mv}}_{t,N} = \wb^*\mid \wb_0,\cD) \geq 1 - \sum_{\wb' \in\cW\setminus\{\wb^{\ast}\}}  \PP(\wb^{\mathtt{mv}}_{t,N} =  \wb'\mid \wb_0,\cD) \geq 1 - |\cW|\cdot \exp\left(-\frac{N}{2}\cdot\Delta_t^2\right).
\end{aligned} \end{equation*} 
This completes the proof of Proposition~\ref{th:sc_maj_vote}.
\end{proof}

\subsection{Proof of Theorem \ref{th:sc_sufficient_n}}\label{subsec:_proof_th_sc_sufficient_n}
Here, we first establish bounds for each element in $\tilde{\wb}_t$ in \autoref{lem:bound_wb_t}. Next, in \autoref{lem:greedy_search}, we prove $\wb_T$ will converge to $\wb^*$ for both greedy decoding and majority vote algorithm. Lastly, in \autoref{lem:maj_convergence}, we demonstrate the convergence rate for greedy decoding as shown in \autoref{th:sc_sufficient_n}.

\begin{lemma}\label{lem:bound_wb_t}
Given $\tilde{\wb}_t = \wb_{t-1} - \frac{1}{n}\left(\Xb \Xb^\top \wb_{t-1} - \Xb \Yb^\top\right)$, where $\Yb = \wb^* \Xb + \mathbf{\epsilon}$, We  define $\cE_1$ as follows:
\begin{equation*} \begin{aligned} 
\cE_1 := \left\{
    \begin{aligned}&\wb^*[i] + \frac{2k + \sigma_\epsilon}{n^{1/4}} \geq \tilde{\wb}_t[i] \geq \wb^*[i] - \frac{2k + \sigma_\epsilon}{n^{1/4}}, \\
&\text{specifically  when } \wb_{t-1} = \wb^*, 
\wb^*[i] + \frac{\sigma_\epsilon}{n^{1/4}} \geq \tilde{\wb}_t[i] \geq \wb^*[i] - \frac{\sigma_\epsilon}{n^{1/4}} \end{aligned}
    \right\}, 
\end{aligned} \end{equation*} 
then $\cE_1$ holds with probability at least $1 - \delta$, where $\delta = 2\left(d^2+ 2d\right)e^{-c n^{1/2}}$.
\end{lemma}
\begin{proof}
\begin{equation} \begin{aligned} \label{eq:bw_expand} 
\tilde{\wb}_t [i] &= \wb_{t-1} [i] - \frac{1}{n}\sum_{j \in [n], l \in [d]}\left(x_{ji}x_{jl}\wb_{t-1} [l] - x_{ji}x_{jl}\wb^* [l]\right) + \frac{1}{n}\sum_{j \in [n]} x_{ji} \epsilon_i \\
&= \wb_{t-1} [i] - \frac{1}{n} \left(\wb_{t-1} [i] - \wb^* [i]\right) \underbrace{\sum_{j \in [n]}x_{ji}^2}_{A_i}  - \frac{1}{n}\sum_{l \in [d], l \neq i}\left(\wb_{t-1} [l] - \wb^* [l]\right)\underbrace{\sum_{j \in [n]}\left(x_{ji}x_{jl}\right)}_{B_{il}} + \frac{1}{n}\sum_{j \in [n]} x_{ji} \epsilon_i \\
&= \wb_{t-1} [i] - \frac{1}{n} \left(\wb_{t-1} [i] - \wb^* [i]\right) A_i  - \frac{1}{n}\sum_{l \in [d], l \neq i}\left(\wb_{t-1} [l] - \wb^* [l]\right)B_{il} + \frac{1}{n}\sum_{j \in [n]} x_{ji} \epsilon_i.
\end{aligned} \end{equation} 

Since $x_{i j} \sim \cN\left(0, 1\right)$ for any $i,j$, by Lemma 2.7.7 and Bernstein's inequality in \cite{vershynin2018high}, there exists an absolute constant $c_1$ such that

\begin{equation*} \begin{aligned} 
\PP\{|\sum_i x_{ji} x_{jl}| \leq t\} \leq 2 \exp{\left(-c_1 \min\left(
\frac{t^2}{\sum_j || x_{ji} x_{jl}||^2_{\psi_i}},
\frac{t}{\max_j || x_{ji} x_{jl}||_{\psi_i}}
\right)\right)},
\end{aligned} \end{equation*} 
where $||.||_{\psi_1}$ denotes to the sub-exponential norm. Besides, $|| x_{ji} x_{jl}||_{\psi_i} \leq ||x_{ji}||_{\psi_2} \cdot ||x_{j  k}||_{\psi_2} \leq C_1^2$, with the last inequality derived from the properties of the Gaussian distribution, where $C_1$ is a constant. Furthermore, we have:
\begin{equation} \begin{aligned}\label{eq:bern_concen1}
\PP\{|B_{il}| \leq t_1\} \leq 2 \exp{\left(-c_1 \min\left(
\frac{t_1^2}{n C_1^4},
\frac{t_1}{C_1^2}.
\right)\right)}
\end{aligned} \end{equation} 
Similarly we have
\begin{equation} \begin{aligned} \label{eq:bern_concen2}
\PP\{|\sum_{j \in [n]} x_{ji} \epsilon_i| \leq t_2\} \leq 2 \exp{\left(-c_2 \min\left(
\frac{t_2^2}{n C_1^4 \sigma_\epsilon^2},
\frac{t_2}{C_1^2 \sigma_\epsilon}.
\right)\right)}
\end{aligned} \end{equation}

For $A_i = \sum_{j \in [n]}x_{ji}^2$, since $x_{ji}^2 - 1$ are sub-exponential and mean zero random variables, we can directly apply Bernstein's inequality to obtain: 
\begin{equation} \begin{aligned} \label{eq:bern_concen3}
\PP\{|A_i - n| \leq t_3\} \leq 2 \exp{\left(-c_3 \min\left(
\frac{t_3^2}{n C_3^4},
\frac{t_3}{C_3^2}
\right)\right)}
\end{aligned} \end{equation}

By setting $t_1 = t_3 = n^{3/4}, t_2 = \sigma_\epsilon n^{3/4}$, $c = \frac{\min\left(c_1,c_2,c_3\right)}{\max\left(C_1^4,C_2^4,C_3^4,C_1^2,C_2^2,C_3^2\right)}$, and applying the derived \autoref{eq:bern_concen1}, \autoref{eq:bern_concen2}, \autoref{eq:bern_concen3} for all $i,l \in [d]$, we establish that
\begin{equation} \begin{aligned}\label{eq:concen_w_all}
&|B_{il}| \leq n^{3/4} & \forall i,l \in [d]; \\
&|\sum_{j \in [n]} x_{ji} \epsilon_i| \leq \sigma_\epsilon n^{3/4} & \forall i \in [d]; \\
&|A_i - n| \leq n^{3/4} & \forall i \in [d],
\end{aligned} \end{equation}
holds with a probability of at least $1 - 2\left(d^2+ 2d\right)e^{-c n^{1/2}}$. Hereafter, we condition on \autoref{eq:concen_w_all}. 

By combining  \autoref{eq:concen_w_all} with \autoref{eq:bw_expand}, the following equation is obtained: 
\begin{equation*} \begin{aligned}
\tilde{\wb}_t [i] &=\wb_{t-1} [i] - \frac{1}{n} \left(\wb_{t-1} [i] - \wb^* [i]\right) A_i  - \frac{1}{n}\sum_{l \in [d], l \neq i}\left(\wb_{t-1} [l] - \wb^* [l]\right)B_{il} + \frac{1}{n}\sum_{j \in [n]} x_{ji} \epsilon_i \\
&\leq \wb^*[i] + \frac{1}{n^{1/4}}\sum_{l \in [d]}|\wb_{t-1} [l] - \wb^* [l]| + \frac{\sigma_\epsilon}{n^{1/4}} \\
&\leq \wb^*[i] + \frac{2k + \sigma_\epsilon}{n^{1/4}},
\end{aligned} \end{equation*} 
the final inequality is by $||\wb_{t}||_0 = k \left(t \geq 1\right)$ and $||\wb_{0}||_0 = 0$. Similarly we have.
\begin{equation*} \begin{aligned}
\tilde{\wb}_t [i] 
&\geq \wb^*[i] - \frac{1}{n^{1/4}}\sum_{l \in [d]}|\wb_{t-1} [l] - \wb^* [l]| - \frac{\sigma_\epsilon}{n^{1/4}} \\
&\geq \wb^*[i] - \frac{2k + \sigma_\epsilon}{n^{1/4}}
\end{aligned} \end{equation*}
Specifically, when $\wb_{t-1} = \wb^*$,
\begin{equation*} \begin{aligned}
\wb^*[i] + \frac{\sigma_\epsilon}{n^{1/4}} \geq \tilde{\wb}_t [i] &\geq \wb^*[i] - \frac{\sigma_\epsilon}{n^{1/4}}.
\end{aligned} \end{equation*}
\end{proof}

Without loss of generality, in the following we assume the first $k$ elements of $\wb^*$ are 1, and others are 0. We define $\cC^{\left(m\right)}$ as the set of all possible permutations for $[m]$. 
\begin{lemma}[Perfect Accuracy for Both Greedy Decoding and Majority Vote]\label{lem:greedy_search}
Given $\tilde{\wb}_t = \wb_{t-1} - \frac{1}{n}\left(\Xb \Xb^\top \wb_{t-1} - \Xb \Yb^\top\right)$, where $\Yb = \wb^* \Xb + \mathbf{\epsilon}$, suppose $\cE_1$ holds, $\frac{2k + \sigma_\epsilon}{n^{1/4}} < \frac13$ and sampling number $N$ is sufficient large, then for all $t \geq 1$, we have
\begin{equation*}
\begin{aligned}
\wb_t^{\mathtt{maj}\cdot N}  = \wb_t^{\mathtt{greedy}} = \wb^*.
\end{aligned}
\end{equation*}
\end{lemma}

\begin{proof}
Given that $\cE_1$ holds,  for $t \geq 1$:
\begin{equation*} \begin{aligned}
 \begin{cases}
\tilde{\wb}_{t}[i] \geq 1 - \frac{2k + \sigma_\epsilon}{n^{1/4}} > 1/2 & i \leq k \\
\tilde{\wb}_{t}[i] \leq \frac{2k + \sigma_\epsilon}{n^{1/4}} < 1/2 & k < i \leq d 
\end{cases}.
\end{aligned} \end{equation*}

In this case we observe that $\tilde{\wb}_{t}[i] > \tilde{\wb}_{t}[j]$ for all $i \leq k$ and $k < i \leq d$. Without loss of generality, we further assume 
\begin{equation*} \begin{aligned}
\tilde{\wb}_{t}[1] \geq \tilde{\wb}_{t}[2] \geq \cdots \geq \tilde{\wb}_{t}[k] >  \tilde{\wb}_{t}[k+1] \geq \tilde{\wb}_{t}[k+2] \geq \cdots \geq \tilde{\wb}_{t}[d].
\end{aligned} \end{equation*}
For $p_{\tilde{\wb}_{t}}[i] = \frac{\max\left(0,\tilde{\wb}_{t}\right)}{\sum_{j = 1}^d \max\left(0,\tilde{\wb}_{t}\right)}$, we also have
\begin{equation*} \begin{aligned}
p_{\tilde{\wb}_{t}}[1] \geq p_{\tilde{\wb}_{t}}[2] \geq \cdots \geq p_{\tilde{\wb}_{t}}[k] >  p_{\tilde{\wb}_{t}}[k+1] \geq p_{\tilde{\wb}_{t}}[k+2] \geq \cdots \geq p_{\tilde{\wb}_{t}}[d].
\end{aligned} \end{equation*}

Then for $\wb' \in \cW_{/\wb^*}$ where the index of nonzero elements are $e_1,e_2,\dots,e_k$ (in increasing order), we have
\begin{equation*} \begin{aligned}
&\PP\left(\wb_t = \wb^* | \wb_{t-1}\right) - \PP\left(\wb_1 = \wb' | \wb_{t-1}\right) \\
=&\sum_{\left(i_1,\dots,i_k\right) \in \cC^{\left(k\right)}} \left(
p_{\tilde{\wb}_{t}}[i_1] \cdot \frac{p_{\tilde{\wb}_{t}}[i_2]}{1 - p_{\tilde{\wb}_{t}}[i_1]} \cdots \frac{p_{\tilde{\wb}_{t}}[i_k]}{1 -\sum_{j<k} p_{\tilde{\wb}_{t}}[i_j]} - 
p_{\tilde{\wb}_{t}}[e_{i_1}] \cdot \frac{p_{\tilde{\wb}_{t}}[e_{i_2}]}{1 - p_{\tilde{\wb}_{t}}[e_{i_1}]} \cdots \frac{p_{\tilde{\wb}_{t}}[e_{i_k}]}{1 -\sum_{j<k} p_{\tilde{\wb}_{t}}[e_{i_j}]}
\right) \\
&> 0,
\end{aligned} \end{equation*}
the last inequality holds because $p_{\tilde{\wb}_{t}}[i] \geq p_{\tilde{\wb}_{t}}[e_i]$ for all $i < k$ and  $p_{\tilde{\wb}_{t}}[k] > p_{\tilde{\wb}_{t}}[e_k]$, thus for $t \geq 1$:
\begin{equation*} \begin{aligned}
\PP\left(\wb_t = \wb^* | \wb_{t-1}\right) > \PP\left(\wb_t = \wb' | \wb_{t-1}\right) \, \forall \, \wb' \in \cW_{/\wb^*}, \wb_{t-1} \in \cW,
\end{aligned} \end{equation*}
Since greedy decoding selects the $\wb$ with highest probability, $\wb_t^{\mathtt{greedy}} = \wb^*$ for all $t \geq 1$. Additionally,  
\begin{equation*} \begin{aligned}
\PP\left(\wb_t = \wb^* | \wb_{0}\right) &= \sum_{\wb \in \cW} \PP\left(\wb_t = \wb^* |\wb_{t-1} = \wb\right) \PP\left(\wb_{t-1} = \wb |\wb_{0}\right) \\
&> \sum_{\wb \in \cW} \PP\left(\wb_t = \wb' |\wb_{t-1} = \wb\right) \PP\left(\wb_{t-1} = \wb |\wb_{0}\right) \\
&= \PP\left(\wb_t = \wb' | \wb_{0}\right).
\end{aligned} \end{equation*}
This implies $\PP\left(\wb_t = \wb^* | \wb_{0}\right) > \PP\left(\wb_t = \wb' | \wb_{0}\right)$ for all $\wb \in \cW_{/\wb^*}$, and according to  \autoref{th:sc_maj_vote}, majority vote will choose $\wb_{t, N}^{\mathtt{mv}} = \wb^*$ with sufficient large sampling number $N$.
\end{proof}

\begin{lemma}[Convergence Rate for Majority Vote ]\label{lem:maj_convergence}
Given $\tilde{\wb}_t = \wb_{t-1} - \frac{1}{n}\left(\Xb \Xb^\top \wb_{t-1} - \Xb \Yb^\top\right)$, where $\Yb = \wb^* \Xb + \mathbf{\epsilon}$, suppose $\cE_1$ holds and $\frac{2k + \sigma_\epsilon}{n^{1/4}} < \frac13$ , then
\begin{equation*}
\begin{aligned}
\PP\left(\wb_t = \wb^* | \wb_{0}\right) - \max\limits_{\wb' \in \cW_{/\wb^*}}\PP\left(\wb_t = \wb' | \wb_{0}\right)  \geq \frac{p_{\mathtt{trans}}}{p_{\mathtt{trans}} + 1 - p_{\mathtt{recurr}}} \left(1-  \left(p_{\mathtt{recurr}} - p_{\mathtt{trans}}\right)^{t-1} \right).
\end{aligned}
\end{equation*}
Where 
\begin{equation*}\begin{aligned}
p_{\mathtt{trans}} &= \left(1 - \frac{2k + \sigma_\epsilon}{n^{1/4} -\left(2k + \sigma_\epsilon\right)}\right) \frac{1}{d^k}, \\
p_{\mathtt{recurr}} &= \left(1 - \frac{\sigma_\epsilon}{n^{1/4} - \sigma_\epsilon}\right) \left(\frac{n^{1/4} - \sigma_\epsilon}{n^{1/4} - \sigma_\epsilon + d \sigma_\epsilon}\right)^{k}.
\end{aligned}\end{equation*}
\end{lemma}
\begin{proof}
First, when $\wb_{t - 1} = \wb^*$, we have
\begin{equation*} \begin{aligned}
 \begin{cases}
\tilde{\wb}_{t}[i] \geq 1 - \frac{\sigma_\epsilon}{n^{1/4}} & i \leq k \\
\tilde{\wb}_{t}[i] \leq \frac{\sigma_\epsilon}{n^{1/4}} & k < i \leq d 
\end{cases}
\end{aligned} \end{equation*}
Let $\tau =  \frac{\sigma_\epsilon}{n^{1/4}}$. For $p_{\tilde{\wb}_{t}}[i] = \frac{\max\left(0,\tilde{\wb}_{t}\right)}{\sum_{j = 1}^d \max\left(0,\tilde{\wb}_{t}\right)}$ and $i \leq k$:

\begin{equation*} \begin{aligned}
p_{\tilde{\wb}_{t}}[i] \geq \frac{1-\tau}{k\left(1-\tau\right) + d \tau} = \frac{1}{k} \frac{k\left(1-\tau\right)}{k\left(1-\tau\right) + d \tau}
\end{aligned} \end{equation*}
Hence,
\begin{equation*} \begin{aligned}
\PP\left(\wb_t = \wb^* | \wb_{t-1} = \wb^*\right) &= \sum_{\left(i_1,\dots,i_k\right) \in \cC^{\left(k\right)}} \left(
p_{\tilde{\wb}_{t}}[i_1] \cdot \frac{p_{\tilde{\wb}_{t}}[i_2]}{1 - p_{\tilde{\wb}_{t}}[i_1]} \cdots \frac{p_{\tilde{\wb}_{t}}[i_k]}{1 -\sum_{j<k} p_{\tilde{\wb}_{t}}[i_j]}\right) \\
&\geq \frac{\left(\frac{1}{k} - \frac{d \tau}{\left(k\left(1-\tau\right) + d \tau\right)k}\right)^k k!}{\prod_{m = 1}^{k-1}\left(1 - m\left(\frac{1}{k} - \frac{d \tau}{\left(k\left(1-\tau\right) + d \tau\right)k}\right)\right) } \\
&\geq \left(\frac{1 - \tau}{1 + \left(d - 1\right) \tau}\right)^{k}
\end{aligned} \end{equation*}
the last inequality is by let $v = \frac{k\left(1-\tau\right)}{k\left(1-\tau\right) + d \tau}$
\begin{equation*} \begin{aligned}
\frac{\left(\frac{v}{k}\right)^k k!}{\prod_{m = 1}^{k-1}\left(1 - m\frac{v}{k}\right) } 
&\geq \frac{v^k\left(\frac{1}{k}\right)^k k!}{\prod_{m = 1}^{k-1}\left(\left(k - \left(k-1\right)v\right)\left(1 - m\frac{1}{k}\right)\right)} \\
&= \frac{v^k}{\left(k - \left(k-1\right)v\right)^{k - 1}} \frac{\left(\frac{1}{k}\right)^k k!}{\prod_{m = 1}^{k-1}\left(1 - m\frac{1}{k}\right) } \\
&\geq \left(\frac{v}{k - \left(k-1\right)v}\right)^k =  \left(\frac{1 - \tau}{1 - \tau + d \tau}\right)^{k}
\end{aligned} \end{equation*}

Next, for $\wb' \in \cW_{/\wb^*}$ where the index of nonzero elements are $e_1,e_2,\dots,e_k$ (increasing order), we have:
\begin{equation*} \begin{aligned}
&\PP\left(\wb_t = \wb^* | \wb_{t-1}\right) - \PP\left(\wb_1 = \wb' | \wb_{t-1}\right) \\
=&\sum_{\left(i_1,\dots,i_k\right) \in \cC^{\left(k\right)}} \left(
p_{\tilde{\wb}_{t}}[i_1] \cdot \frac{p_{\tilde{\wb}_{t}}[i_2]}{1 - p_{\tilde{\wb}_{t}}[i_1]} \cdots \frac{p_{\tilde{\wb}_{t}}[i_k]}{1 -\sum_{j<k} p_{\tilde{\wb}_{t}}[i_j]} - 
p_{\tilde{\wb}_{t}}[e_{i_1}] \cdot \frac{p_{\tilde{\wb}_{t}}[e_{i_2}]}{1 - p_{\tilde{\wb}_{t}}[e_{i_1}]} \cdots \frac{p_{\tilde{\wb}_{t}}[e_{i_k}]}{1 -\sum_{j<k} p_{\tilde{\wb}_{t}}[e_{i_j}]}
\right) \\
>&\left(\prod_{i = 1}^k p_{\tilde{\wb}_{t}}[i] - \prod_{i = 1}^k p_{\tilde{\wb}_{t}}[e_{i}]\right) \sum_{\left(i_1,\dots,i_k\right) \in \cC^{\left(k\right)}} \left(
\frac{1}{1 - p_{\tilde{\wb}_{t}}[i_1]} \cdots \frac{1}{1 -\sum_{j<k} p_{\tilde{\wb}_{t}}[i_j]}
\right) \\
>& \left(1 - \frac{p_{\tilde{\wb}_{t}}[e_i]}{p_{\tilde{\wb}_{t}}[i]}\right) \prod_{i = 1}^k p_{\tilde{\wb}_{t}}[i] \sum_{\left(i_1,\dots,i_k\right) \in \cC^{\left(k\right)}} \left(
\frac{1}{1 - p_{\tilde{\wb}_{t}}[i_1]} \cdots \frac{1}{1 -\sum_{j<k} p_{\tilde{\wb}_{t}}[i_j]}
\right) \\
=& \left(1 - \frac{p_{\tilde{\wb}_{t}}[e_i]}{p_{\tilde{\wb}_{t}}[i]}\right) \PP\left(\wb_t = \wb^* | \wb_{t-1}\right)
\end{aligned} \end{equation*}
Given that $\PP\left(\wb_t = \wb^* | \wb_{t-1}\right) > \PP\left(\wb_t = \wb' | \wb_{t-1}\right)$ for $\wb' \in \cW_{/\wb^*}$, we have $\PP\left(\wb_t = \wb^* | \wb_{t-1}\right) > \frac{1}{|\cW|} \geq \frac{1}{d^k} $, when $\cE_1$ holds:
\begin{equation*} \begin{aligned}
&\PP\left(\wb_t = \wb^* | \wb_{t-1}\right) - \PP\left(\wb_1 = \wb' | \wb_{t-1}\right) > \left(1 - \frac{\frac{2k + \sigma_\epsilon}{n^{1/4}}}{1 - \frac{2k + \sigma_\epsilon}{n^{1/4}}}\right) \frac{1}{d^k}
\end{aligned} \end{equation*}
Specifically, 
\begin{equation*} \begin{aligned}
&\PP\left(\wb_t = \wb^* | \wb^*\right) - \PP\left(\wb_1 = \wb' | \wb^*\right) > \left(1 - \frac{\frac{\sigma_\epsilon}{n^{1/4}}}{1 - \frac{\sigma_\epsilon}{n^{1/4}}}\right) \PP\left(\wb_t = \wb^* | \wb^*\right)
\end{aligned} \end{equation*}
Therefore,
\allowdisplaybreaks
\begin{equation*} \begin{aligned}
&\PP\left(\wb_t = \wb^* | \wb_{0}\right) - \PP\left(\wb_t =\wb' | \wb_{0}\right) \\
=& \sum_{\wb \in \cW} \left(\PP\left(\wb_t = \wb^* |\wb_{t-1} = \wb\right) - \PP\left(\wb_t = \wb' |\wb_{t-1} = \wb\right) \right) \PP\left(\wb_{t-1} = \wb |\wb_{0}\right) \\
>& \sum_{\wb \in \cW_{/\wb^*}}\left(1 - \frac{2k + \sigma_\epsilon}{n^{1/4} -\left(2k + \sigma_\epsilon\right)}\right)\PP\left(\wb_t = \wb^* |\wb_{t-1} = \wb\right)\PP\left(\wb_{t-1} = \wb |\wb_{0}\right) \\
&+ \left(1 - \frac{\sigma_\epsilon}{n^{1/4} - \sigma_\epsilon}\right) \PP\left(\wb_t = \wb^* | \wb^*\right)\PP\left(\wb_{t-1} = \wb^* |\wb_{0}\right) \\
>& \left(1 - \frac{2k + \sigma_\epsilon}{n^{1/4} -\left(2k + \sigma_\epsilon\right)}\right) \frac{1}{d^k}\sum_{\wb \in \cW_{/\wb^*}}\PP\left(\wb_{t-1} = \wb |\wb_{0}\right) + \left(1 - \frac{\sigma_\epsilon}{n^{1/4} - \sigma_\epsilon}\right)\left(\frac{1 - \tau}{1 - \tau + d \tau}\right)^{k}\PP\left(\wb_{t-1} = \wb^* |\wb_{0}\right) \\
=& \underbrace{\left(1 - \frac{2k + \sigma_\epsilon}{n^{1/4} -\left(2k + \sigma_\epsilon\right)}\right) \frac{1}{d^k}}_{p_{\mathtt{trans}}} \left(1 - \PP\left(\wb_{t-1} = \wb^* |\wb_{0}\right)\right) +  \underbrace{\left(1 - \frac{\sigma_\epsilon}{n^{1/4} - \sigma_\epsilon}\right) \left(\frac{n^{1/4} - \sigma_\epsilon}{n^{1/4} - \sigma_\epsilon + d \sigma_\epsilon}\right)^{k}}_{p_{\mathtt{recurr}}}\PP\left(\wb_{t-1} = \wb^* |\wb_{0}\right) \\
>& \left(p_{\mathtt{recurr}} - p_{\mathtt{trans}}\right)^{t-1} \left(\PP\left(\wb_1 = \wb^* |\wb_{0}\right) - \frac{p_{\mathtt{trans}}}{p_{\mathtt{trans}} + 1 - p_{\mathtt{recurr}}}\right) + \frac{p_{\mathtt{trans}}}{p_{\mathtt{trans}} + 1 - p_{\mathtt{recurr}}} \\
>& \frac{p_{\mathtt{trans}}}{p_{\mathtt{trans}} + 1 - p_{\mathtt{recurr}}} \left(1-  \left(p_{\mathtt{recurr}} - p_{\mathtt{trans}}\right)^{t-1} \right) 
\end{aligned} \end{equation*}
\end{proof}
\subsection{Proof of Theorem \ref{th:sc_limited_n}}\label{subsec:_proof_th_sc_limited_n}

To prove \autoref{th:sc_limited_n}, we first demonstrate that the majority vote algorithm can achieve perfect accuracy with a high probability given a sufficient large sampling number $N$ (by combining \autoref{lem:n_k_1_2p} and \autoref{lem:n_k_1_maj}). Subsequently, for the greedy decoding algorithm, we prove that with high probability, $\wb^{\mathtt{greedy}}_t$ will transition between states $\wb'$ and $\wb''$, where $\wb', \wb'' \neq \wb^*$.

In the following, as we consider the case where $k = 1$, we define $\mathbbm{1}_i = [0,\dots,\underset{\underset{i\text{-th}}{\downarrow}}{1},0,\dots]$ be a vector with a value of $1$ at the $i$-th element and $0$ elsewhere. Without loss of generality, we assume $\wb^* = \mathbbm{1}_1$.

\begin{lemma}\label{lem:n_k_1_2p}
Consider the case where $n = k = 1, \sigma_\epsilon = 0$, and denote the in-context example as $\left(\xb, \wb^\top \xb \right)$. Then:
\begin{equation*} \begin{aligned}
\PP\left(\wb_{t+2} = \wb^* | \wb_{t} = \wb\right) > 0
\end{aligned} \end{equation*}
Holds for all $\wb \in \cW$ with probability at least $1 - \frac{1}{2^{d-1}}$.
\end{lemma}
\begin{proof}
\begin{equation*} \begin{aligned}
\PP\left(\wb_{t+2} = \wb^* | \wb_{t} = \wb\right) = \sum_{\wb' \in \cW} \PP\left(\wb_{t+2} = \wb^* | \wb_{t-1} = \wb'\right)\PP\left(\wb_{t+1} = \wb' | \wb_{t} = \wb\right)
\end{aligned} \end{equation*}

It suffices to demonstrate the existence of a $\wb' \in \cW$, such that $\PP\left(\wb_{t+2} = \wb^* | \wb_{t-1} = \wb'\right)\PP\left(\wb_{t+1} = \wb' | \wb_{t} = \wb\right) > 0$.

Without losing generality, we let $x_1 > 0$, $\wb_t = \mathbbm{1}_l$ and for $\xb = [x_1,x_2,\dots,x_d]$ we let $x_1 > 0$, $x_2 \geq x_3 \dots \geq x_d$. We have:

\begin{equation*} \begin{aligned}
&\tilde{\wb}_{t+1} [i] = \wb_{t} [i] - \sum_{j \in [d]}\left(x_{i}x_{j}\left(\wb_{t-1} [j] - \wb^* [j]\right)\right) \\
&\begin{cases}
    \tilde{\wb}_{t+1} [i] = x_i \left(x_1 - x_l\right) & \text{if } i \neq l \\
    \tilde{\wb}_{t+1} [i] = 1 + x_l \left(x_1 - x_l\right) & \text{if } i = l \\
\end{cases}.
\end{aligned} \end{equation*}

If $x_1 - x_l > 0$, then $\tilde{\wb}_{t+1} [1] > 0$, implying the existence of $\wb' = \wb^*$, such that:
\begin{equation*} \begin{aligned}
&\PP\left(\wb_{t+2} = \wb^* | \wb_{t-1} = \wb'\right)\PP\left(\wb_{t+1} = \wb' | \wb_{t} = \wb\right) \\
=&\PP\left(\wb_{t+2} = \wb^* | \wb_{t-1} = \wb*\right)\PP\left(\wb_{t+1} = \wb* | \wb_{t} = \wb\right) \\
=&\frac{x_{1} \left(x_1 - x_l\right)}{\sum_{i \in [d]} \max{\left(0,\tilde{\wb}_{t+1} [i]\right)}} > 0
\end{aligned} \end{equation*}
If $x_1 - x_l < 0$, we consider the case where $x_d < 0$, which occurs with a probability of at least $1 - \frac{1}{2^{d-1}}$. In this case, we ensure $x_d < 0$ to satisfy $x_d\left(x_1 - x_l\right) > 0$. Subsequently, leveraging the condition $x_1 - x_d > 0$, we can choose $\wb' = \mathbbm{1}_d$ such that:
\begin{equation*} \begin{aligned}
&\PP\left(\wb_{t+2} = \wb^* | \wb_{t-1} = \wb'\right)\PP\left(\wb_{t+1} = \wb' | \wb_{t} = \wb\right) \\
\geq& \frac{x_{d} \left(x_1 - x_l\right)}{\sum_{i \in [d]} \max{\left(0,\tilde{\wb}_{t+1} [i]\right)}} \cdot \frac{x_{1} \left(x_1 - x_{d}\right)}{\sum_{i \in [d]} \max{\left(0,\tilde{\wb}_{t+2} [i]\right)}} > 0
\end{aligned} \end{equation*}

\end{proof}

\begin{lemma}\label{lem:n_k_1_maj}
Consider the case where $n = k = 1, \sigma_\epsilon = 0$, and denote the in-context example as $\left(\xb, \wb^\top \xb \right)$.There exists a $\zeta > 0$ such that for reasoning steps $T > \frac{2\ln{1/2}}{\ln{1 - \zeta}}$ and  sufficient large sampling number $N$, it holds that 
\begin{equation*} \begin{aligned}
\wb_{T, N}^{\mathtt{mv}} = \wb^*,
\end{aligned} \end{equation*}
with probability at least $1 - \frac{1}{2^{d-1}}$.
\end{lemma}

\begin{proof}
Referring to \autoref{lem:n_k_1_2p}, with probability at least $1 - \frac{1}{2^{d-1}}$, $\PP\left(\wb_{t+2} = \wb^* | \wb_{t} = \wb\right) > 0$ holds for all $\wb \in \cW$, define 

\begin{equation*} \begin{aligned}
\zeta &= \min_{\wb \in \cW} \PP\left(\wb_{t+2} = \wb^* | \wb_{t} = \wb\right).
\end{aligned} \end{equation*}
Assume $t = 2q + 1$ (if not, since $\PP\left(\wb_t = \wb^* | \wb_{0}\right) \geq \PP\left(\wb_{t-1} = \wb^* | \wb_{0}\right)$, we can set $t-1 = 2q + 1$) 
\begin{equation*} \begin{aligned}
&\PP\left(\wb_{2q + 1} = \wb^* | \wb_{0}\right) \\
=& \sum_{\wb \in \cW} \PP\left(\wb_{2q + 1} = \wb^* |\wb_{2q - 1} = \wb\right) \PP\left(\wb_{2q - 1} = \wb |\wb_{0}\right) \\
=& \sum_{\wb \in \cW_{/\wb^*}} \PP\left(\wb_{2q + 1} = \wb^* |\wb_{2q - 1} = \wb\right) \PP\left(\wb_{2q - 1} = \wb |\wb_{0}\right) + \PP\left(\wb_{2q + 1} = \wb^* |\wb_{2q -1} = \wb^*\right) \PP\left(\wb_{2q - 1} = \wb^* |\wb_{0}\right) \\
\geq& \zeta \left(1 - \PP\left(\wb_{2q - 1} = \wb^* |\wb_{0}\right)\right) + \PP\left(\wb_{2q - 1} = \wb^* |\wb_{0}\right)  \\
\geq& \left(1 - \zeta\right)^k \left(\PP\left(\wb_{1} = \wb^* |\wb_{0}\right) - 1\right) + 1\geq 1 - \left(1 - \zeta\right)^k
\end{aligned} \end{equation*}
If $k> \frac{\ln{1/2}}{\ln\left(1 - \zeta\right)}$, then $\PP\left(\wb_{2q + 1} = \wb^* | \wb_{0}\right) > 1/2$, and therefore: 
\begin{equation*} \begin{aligned}
\PP\left(\wb_{t} = \wb^* | \wb_{0}\right) > \frac12 >  1 - \PP\left(\wb_{t} = \wb^* | \wb_{0}\right) > \PP\left(\wb_{t} = \wb' | \wb_{0}\right) \, \forall \wb' \in \cW_{/\wb^*}
\end{aligned} \end{equation*}
In this case, by \autoref{th:sc_maj_vote}, with sufficient large sample number $N$, $\wb_{T, N}^{\mathtt{mv}} = \wb^*$.
\end{proof}

\begin{lemma}
Consider the case where $n = k = 1, \sigma_\epsilon = 0$, and denote the in-context example as $\left(\xb, \wb^\top \xb \right)$. Then 
\begin{equation*} \begin{aligned}
\wb_{t}^{\mathtt{greedy}} \neq \wb^* 
\end{aligned} \end{equation*}
holds with probability at least $1 - \frac{2}{d} - \frac{1}{2^{d-1}}$.
\end{lemma}

\begin{proof}
Here, we directly construct a case where, with a high probability, the greedy decoding will become stuck between two stages and fail to reach the state $\wb^*$.

Without loss of generality, we assume $x_1 > 0$, and we select $x_2$ and $x_3$ such that $x_2 = \max_{i > 1}x_i$ and $x_3 = \max_{i > 1} \left(-x_i\right)$. With a probability of $1 - \sum_{r = 1}^{d-1} \frac{1}{r+1} \frac{{d-1 \choose r}}{2^{d-1}} - \frac{1}{2^{d-1}} > 1 - \frac{2}{d} - \frac{1}{2^{d-1}}$, it holds that $x_2 > x_1 > 0$ and $x_3 < 0$.

In this case, 
\begin{equation*} \begin{aligned}
\tilde{\wb}_1[2] &= x_1 x_2 > x_1 x_j = \tilde{\wb}_1[j],
\end{aligned} \end{equation*}
holds for all $j \in [d], j\neq 2$. Then $\wb_{1}^{\mathtt{greedy}} = \wb' \neq \wb^*$ where $\wb' = \mathbbm{1}_2$. Similarly, 
\begin{equation*} \begin{aligned}
\begin{cases}
    \tilde{\wb}_{2} [i] = x_i \left(x_1 - x_2\right) & \text{if } i \neq 2 \\
    \tilde{\wb}_{2} [i] = 1 + x_i \left(x_1 - x_2\right) & \text{if } i = 2 \\
\end{cases}
\end{aligned} \end{equation*}

If $\argmax_{i \in [d]} \tilde{\wb}_{2} [i] = 2$, then $\wb_{2}^{\mathtt{greedy}} = \wb'$, thus for $\wb_{t}^{\mathtt{greedy}} = \wb' \neq \wb^*$ holds when $t \geq 1$.
.
If $\argmax_{i \in [d]} \tilde{\wb}_{2} [i] \neq 2$, as $x_1 - x_2 < 0$,
\begin{equation*} \begin{aligned}
\tilde{\wb}_2[3] = x_3 \left(x_1 - x_2\right) >  x_i \left(x_1 - x_2\right) = \tilde{\wb}_2[j],
\end{aligned} \end{equation*}
holds for all $j \in [d], j\neq 3$. In this case, we have $\wb_2 = \wb'' \neq \wb^*$ where $\wb'' = \mathbbm{1}_3$ and for $\tilde{\wb}_{3}$:
\begin{equation*} \begin{aligned}
\begin{cases}
    \tilde{\wb}_{3} [i] = x_i \left(x_1 - x_3\right) & \text{if } i \neq 3 \\
    \tilde{\wb}_{3} [i] = 1 + x_i \left(x_1 - x_3\right) & \text{if } i = 3 \\
\end{cases}
\end{aligned} \end{equation*}
Similarly, if $\argmax_{i \in [d]} \tilde{\wb}_{3} [i] = 3$, then $\wb_{3}^{\mathtt{greedy}} = \wb''$, thus for $\wb_{t}^{\mathtt{greedy}} = \wb'' \neq \wb^*$ holds when $t \geq 2$. 

If $\argmax_{i \in [d]} \tilde{\wb}_{2} [i] \neq 2$, as $\left(x_1 - x_3\right) > 0$, we know that $\wb_{3}^{\mathtt{greedy}} = \wb'$, then $\wb_{4}^{\mathtt{greedy}} = \wb''$, $\wb_{5}^{\mathtt{greedy}} = \wb'$...

In conclusion, $\wb^{\mathtt{greedy}}_t$ will be either $\wb'$ or $\wb''$ for $t > 0$, thus $\wb^{\mathtt{greedy}}_t \neq \wb^*$ for $t > 0$.
\end{proof}

\section{Prompt Examples}\label{appdix:prompt}

\begin{tcolorbox}[title={Prompt For GSM8K with Assigned Token Budget}]
You are a math problem solver. I will give you a problem from the Grade School Math 8K dataset (GSM8K). At the end, provide the final answer as a single integer.

Example: 
    Problem: There are 15 trees in the grove. Grove workers will plant trees in the grove today. After they are done, there will be 21 trees. How many trees did the grove workers plant today?
    Answer (You should choose different reasoning method based on different tokens limit): 
    
        Case 1 (low token budgets, for example 20): We have token limits 20.  The answer is \#\#6\#\#. [END]
        
        Case 2 (medium token budgets, for example 100): We have token limits 100. 21 - 15 = 6. The answer is \#\#6\#\#. [END]
        
        Case 3 (high token budgets, for example 200): We have token limits 200. There are 15 trees originally. Then there were 21 trees after some more were planted. So there must have been 21 - 15 = 6. The answer is \#\#6\#\#. [END]
        
        Case 4 (sufficient token budgets, for example 500): We have token limits 500. There are 15 trees originally. Then there were 21 trees after some more were planted. So there must have been 21 - 15 = 6. [...(more thoughts such as check answer to satisfy tokens limit)]  The answer is \#\#6\#\#. [END]
        
Important: You should try your best to use around \textcolor{red}{\{token\_limit\}} tokens in your reasoning steps.

If you feel like you are finished early, spend the extra tokens trying to double check your work until you are absolutely sure that you have the correct answer.

Here's the problem:

\textcolor{red}{\{problem\}}

Solve this problem, use around  \textcolor{red}{\{token\_limit\}} tokens in your reasoning, provide the final answer as a single integer, and put your final answer in this format: ``The answer is \#\#your answer\#\#.", and end this chat with `[END]'
\end{tcolorbox}

For the MATH dataset, we simply replaced the ``Grade School Math 8K dataset (GSM8K)" (first line in above prompt) with ``MATH."

\end{document}